\newtheorem{theorem}{Theorem}[section]
\newtheorem{corollary}{Corollary}[theorem]
\newtheorem{lemma}[theorem]{Lemma}
\newcommand{\thickhline}{%
    \noalign {\ifnum 0=`}\fi \hrule height 1pt
    \futurelet \reserved@a \@xhline
}
\newcolumntype{"}{@{\hskip\tabcolsep\vrule width 1pt\hskip\tabcolsep}}
\DeclareMathOperator{\EX}{\mathbb{E}}% expected value
\newcommand\norm[1]{\left\lVert#1\right\rVert}
\def\eqref#1{equation~\ref{#1}}
\def\1{\bm{1}}
\DeclareMathAlphabet{\mathsfit}{\encodingdefault}{\sfdefault}{m}{sl}
\SetMathAlphabet{\mathsfit}{bold}{\encodingdefault}{\sfdefault}{bx}{n}
\begin{document}

% If your paper is accepted and the title of your paper is very long,
% the style will print as headings an error message. Use the following
% command to supply a shorter title of your paper so that it can be
% used as headings.
%
%\runningtitle{I use this title instead because the last one was very long}

% If your paper is accepted and the number of authors is large, the
% style will print as headings an error message. Use the following
% command to supply a shorter version of the authors names so that
% they can be used as headings (for example, use only the surnames)
%
%\runningauthor{Surname 1, Surname 2, Surname 3, ...., Surname n}

\twocolumn[

\aistatstitle{Adaptive Geo-Topological Independence Criterion}

% The \author macro works with any number of authors. There are two commands
% used to separate the names and addresses of multiple authors: \And and \AND.
%
% Using \And between authors leaves it to LaTeX to determine where to break the
% lines. Using \AND forces a line break at that point. So, if LaTeX puts 3 of 4
% authors names on the first line, and the last on the second line, try using
% \AND instead of \And before the third author name.

% \author{%
%   David S.~Hippocampus\thanks{Use footnote for providing further information
%     about author (webpage, alternative address)---\emph{not} for acknowledging
%     funding agencies.} \\
%   Department of Computer Science\\
%   Cranberry-Lemon University\\
%   Pittsburgh, PA 15213 \\
%   \texttt{hippo@cs.cranberry-lemon.edu} \\
%   % examples of more authors
%   % \And
%   % Coauthor \\
%   % Affiliation \\
%   % Address \\
%   % \texttt{email} \\
%   % \AND
%   % Coauthor \\
%   % Affiliation \\
%   % Address \\
%   % \texttt{email} \\
%   % \And
%   % Coauthor \\
%   % Affiliation \\
%   % Address \\
%   % \texttt{email} \\
%   % \And
%   % Coauthor \\
%   % Affiliation \\
%   % Address \\
%   % \texttt{email} \\
% }

\aistatsauthor{ Baihan Lin \And Nikolaus Kriegeskorte }

\aistatsaddress{ Columbia University \And Columbia University  } 

]

\begin{abstract}

Testing two potentially multivariate variables for statistical dependence on the basis finite samples is a fundamental statistical challenge. Here we explore a family of tests that adapt to the complexity of the relationship between the variables, promising robust power across scenarios. Building on the distance correlation, we introduce a family of adaptive independence criteria based on nonlinear monotonic transformations of distances. We show that these criteria, like the distance correlation and RKHS-based criteria, provide dependence indicators. We propose a class of adaptive (multi-threshold) test statistics, which form the basis for permutation tests. These tests empirically outperform many established tests in average and worst-case statistical sensitivity across a range of univariate and multivariate relationships, offer useful insights to the data and may deserve further exploration.\footnote{Correspondence to {\tt\{bl2681, nk2765\}@columbia.edu}. Code at \href{https://github.com/doerlbh/AGTIC}{\underline{https://github.com/doerlbh/AGTIC}}.}

% Our approach is resilient to noise, robust across different samples sizes, easy to compute, theoretically supported, worst-case robust, augment data interpretably and applicable to real-world applications at non-stationary environments, including continuous neuroimaging, representational similarity analysis, recommendation systems, health monitoring and medical diagnosis, and others.

\end{abstract}
% \vspace{-1em}
\section{Introduction}
\label{sec:intro}
% \vspace{-0.5em}

Detecting statistical dependence between random variables is a fundamental problem of statistics. The simplest scenario is detecting linear or monotonic univariate relationships, where Pearson's r, Spearman's $\rho$, or Kendall's $\tau$ can serve as test statistics. Often researchers need to detect nonlinear relationships between multivariate variables. In recent years, many nonlinear statistical dependence indicators have been developed: distance-based methods such as distance or Brownian correlation (dCor) \cite{szekely2007measuring,szekely2009brownian}, mutual information (MI)-based methods with different estimators \cite{kraskov2004estimating, pal2010estimation,steuer2002mutual}, kernel-based methods such as the Hilbert-Schmidt Independence Criterion (HSIC) \cite{gretton2005measuring, gretton2008kernel} and Finite Set Independence Criterion (FSIC) \cite{jitkrittum2016adaptive}, and other dependence measures including Maximal Information Coefficient (MIC) \cite{reshef2011detecting,reshef2013equitability}, Multiscale Graph Correlation (MGC) \cite{vogelstein2019discovering} and HHG’s test (HHG) \cite{heller2013consistent}. 
% Among different measures, it was also recently shown that the distance correlation (dCor) can be cast in the framework of Reproducing Kernel Hilbert Space (RKHS)-based statistics \cite{sejdinovic2013equivalence}. 

There's no free lunch: any indicator will outperform any other indicator given data whose dependence structure it is better suited to detect. However, it is desirable to develop indicators that adapt to the grain of the dependency structure and to the amount of data available to maintain robust power across relationships found in real applications. Except for FSIC, the established methods are not adaptive. Some of them are sensitive to the setting of hyperparameters, or have low statistical power for detecting important nonlinear or high-dimensional relationships \cite{simon2014comment}. 

% [Here we explore a new family of dependenc
% Neuroscience research, for instance, generates high-dimensional functional magnetic resonance imaging (fMRI) with systematic noises from head movement, the heart beats, or breathing, in different textures across subjects. In representational similarity analysis (RSA), a key challenge is find dependence patterns within brain-activity measurement, behavioral measurement, and computational models \cite{kriegeskorte2008representational}, in order to compare the theoretical ``biologically-plausible'' models with the true mechanisms of biological brains revealed by neuroimaging. Other applications include stock market prediction, seasonal recommendation systems, social media analysis, and mobile health with complex, nonstationary psychology and environments. ]

Here we propose a family of adaptive distance-based independence criteria inspired by two ideas: (1) Representational geometries can be compared by correlating distance matrices \cite{kriegeskorte2013representational}. (2) We can relax the constraint of linear correlation of the distances by nonlinearly transforming distance matrices, such that they capture primarily neighbor relationships. Such a transformed (e.g. thresholded) distance matrix captures the topology, rather than the geometry. Detecting matching topologies between two spaces $\mathcal{X}$ and $\mathcal{Y}$ will indicate statistical dependency. As illustrated in Fig. \ref{fig:example}, given a specific multivariate associate pattern, the proposed Adaptive Geo-Topological Independence Criterion (AGTIC) transforms the pairwise distances that are too small or too big, only keeping a subset of the original distances as the matching topology. In the presented case, smaller distances are the most distinctive topological edges in the spiral pattern, which is not the case in the linear pattern.

We show analytically that a family of such geo-topological relatedness indicators are 0 (in the limit of infinite data) if and only if multivariate variables $X$ and $Y$ are statistically independent. The geo-topological indicators are based on the distance correlation, computed after a parameterized monotonic transformation of the distance matrices for spaces $\mathcal{X}$ and $\mathcal{Y}$. We use an adaptive search framework to automatically select the parameters of the monotonic transform so as to maximize the distance correlation. We show that monotonic nonlinear operators like the proposed geo-topological transformation belong to a separable space that can be understood as an RKHS-based kernel indicator of dependency. The adaptive threshold search renders the dependence test robustly powerful across a wide spectrum of scenarios and across different noise amplitudes and sample sizes, while guaranteeing (via permutation test) the specificity at a false positive rate of 5\%. 

% We begin our presentation in section \ref{sec:methods}, with a short overview of the distance correlation. We then introduce the geo-topological transform, the adaptive threshold-search, as well as the algorithms and theoretical background. In section \ref{sec:results}, we empirically demonstrate that our approach consistently outperforms competing dependence measures, including dCor and HSIC, across a range of simulated benchmark data sets (from detecting univariate correlation to high-dimensional nonlinear relationships). 

% \vspace{-1em}
\section{Geometric and Topological Insights}
% \vspace{-0.5em}

\begin{figure}[tb]
\centering
    %  \vspace{-1em}
\includegraphics[width=0.235\linewidth]{./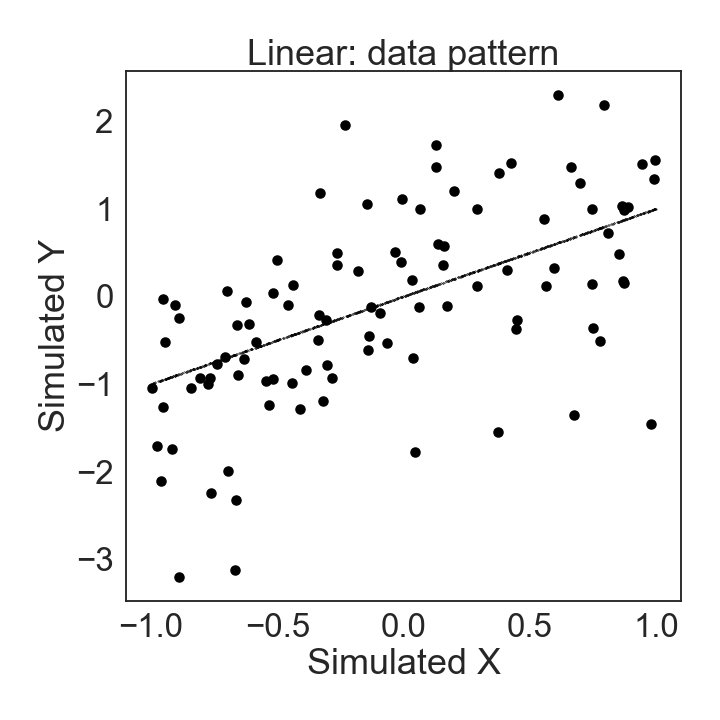}
\includegraphics[width=0.235\linewidth]{./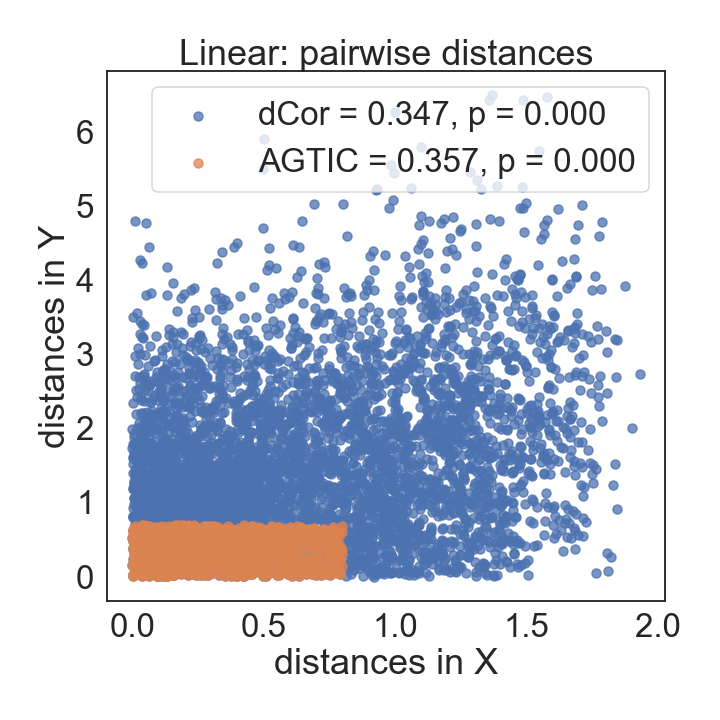}
\includegraphics[width=0.235\linewidth]{./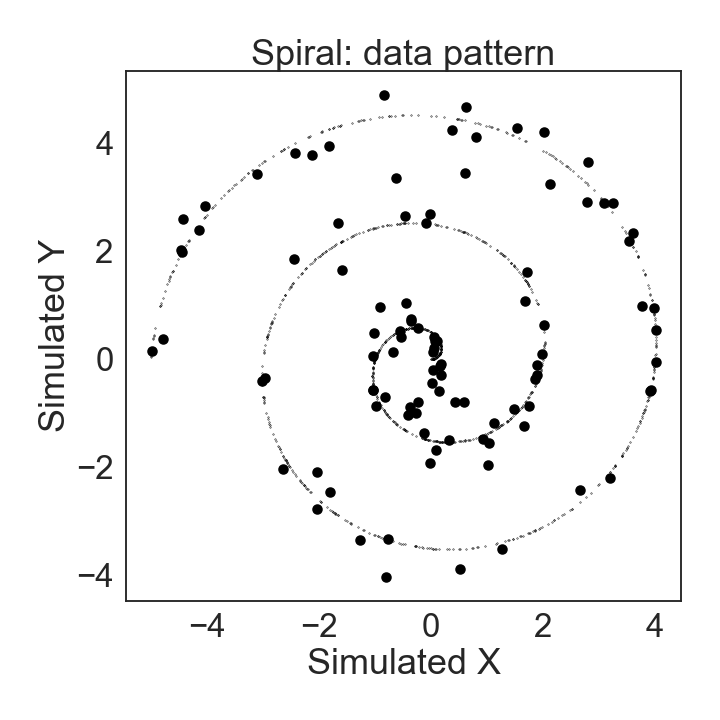}
\includegraphics[width=0.235\linewidth]{./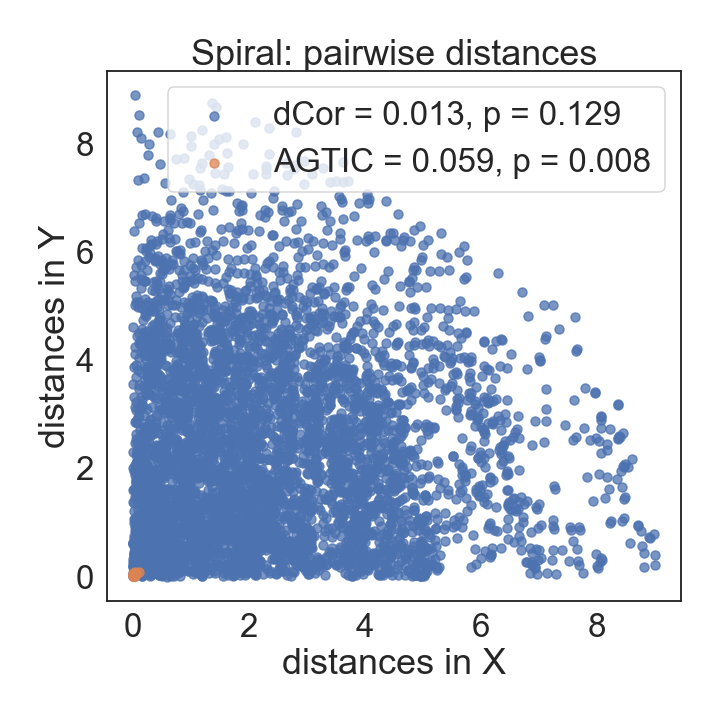}
     \vspace{-0.5em}
\caption{\textbf{Motivation:} dCor used all the original distances (blue dots) to compute its test statistic and p-value, whereas AGTIC transforms the smaller distances to zero, the larger distances to the maximum, and the rest (orange dots) to be distinctive from one another in the original scale. By reshaping the original distances and emphasizing certain distances, AGTIC discovered the dependency in the spiral pattern that dCor missed.}\label{fig:example}
     \vspace{-1em}
\end{figure}

As in the introduction, the multivariate relationship we wish to investigate here is high-dimensional, noisy, nonlinear, and most likely non-stationary. Neuroscience research, for instance, generates high-dimensional functional magnetic resonance imaging (fMRI) with systematic noises from head movement, the heart beats, or breathing, in different textures across subjects. The specialized application of independence testing in cognitive neuroscience is the Representational similarity analysis (RSA), which aims to find dependence patterns within brain-activity measurement, behavioral measurement, and computational models \cite{kriegeskorte2008representational}. Like traditional statistics such as dCor, RSA computed pairwise distances (or dissimilarities) between neural activities among different stimuli (e.g. images shown to a subject). These dissimilary matrices are usually considered as representational \textit{geometry}. 

\textit{Topology}, on the other hand, has different definitions in different contexts. In the context of computational topology, the analysis is often accomplished by topological data analysis (TDA), which is a successful method to discover patterns and meanings in the shape of the data \cite{epstein2011topological}. For example, the persistence homology diagrams can help reveal the most fundamental shape of the data; the Mapper algorithm is able to transform any data (such as point-cloud data) or function output (such as a similarity measure) into a graph which provides a compressed summary of the data distributions and association patterns \cite{singh2007topological}. In the context of our investigation, topology is defined as the consistent dissimilarities that carry multivariate dependence, an abstraction of the representational space independent of systematic noises from data collection procedures, while geometry is defined as the dissimilarity distances. We define the \textit{Geo-Topology}, as the transformed geometry ``denoised'' to capture only the dependence-relevant dissimilarity, to be conceptually considered a topology.  

There are two motivations for this in applications like neural data, one theoretical and one data-analytical. From a theoretical perspective, the computational function of a brain region might depend more on the local than on the global representational geometry, i.e. on differences among small representational distances rather than differences among large representational distances. The local geometry determines, which stimuli the representation renders indiscriminable, which it discriminates, but places together in a cluster, and which it places in different neighborhoods. The global geometry of the clusters (whether two stimuli are far or very far from each other in the representational space) may be less relevant to computation: In a high-dimensional space a set of randomly placed clusters will tend to afford linear separation of arbitrary dichotomies among clusters \cite{kushnir2018} independent of the exact global geometry. Like a storage room, a representational space may need to collocalize related things, while the global location of these categories may be arbitrary.

From a data-analytical perspective, conversely, small distances may be unreliable given the various noise sources that may affect the measurements. From both theoretical and data-analytical perspectives, it seems possible that focusing our sensitivity on a particular range of distances turns out to be advantageous because is reduces the influence of noise and/or arbitrary variability (e.g., of the global geometry) that does not reflect dependency function. In order to suppress these noise, we would like to find a lower distance threshold $l $ below which we consider certain data points as co-localized (i.e., the points have collapsed into the same node in the graph). Between the two thresholds we place a continuous linear transition to retain some geometrical sensitivity in the range where it matters, as in Fig. \ref{fig:RGTA}. This formulation encapsulates the special case of full geometry: one possibility is that the ideal setting is $l =0$, $u=max$, i.e., the original distances.
% matrix. 

The novel contribution is the idea to extract the geometry and topology from pairwise distances for independence testing. In the following sections, we introduce AGTIC, an attractive adaptive criterion, the theoretical result that the AGTIC is a proper independence criterion, and the empirical merit of its robust power to detect different kinds of statistical dependency.

\begin{figure}[tb]
\centering
    %  \vspace{-1em}
\includegraphics[width=1.0\linewidth]{./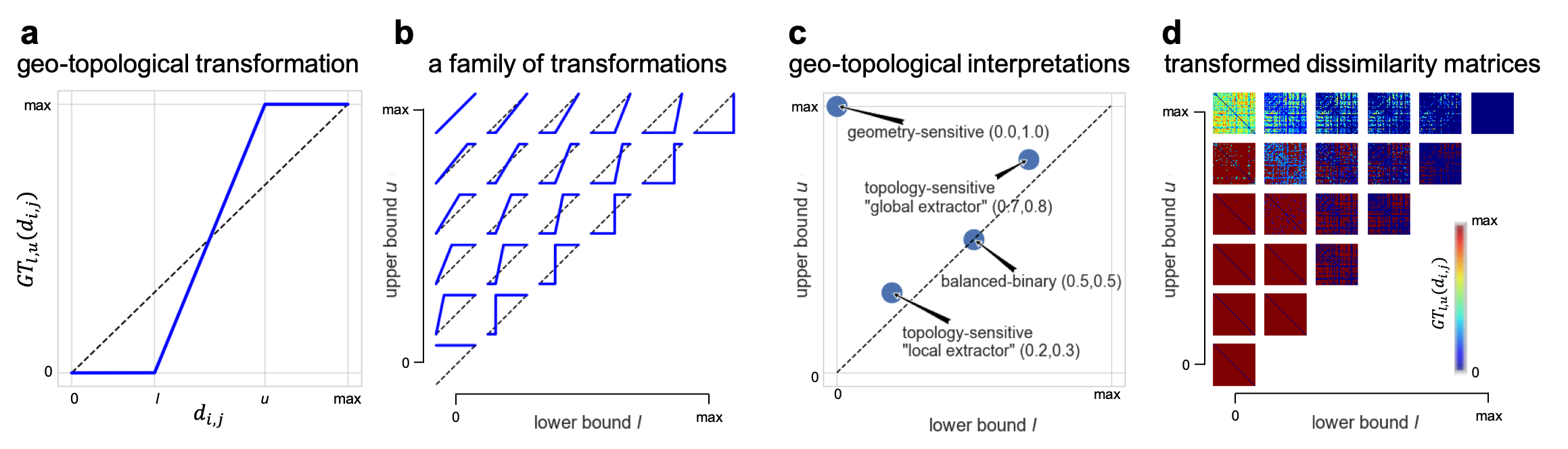}
     \vspace{-0.5em}
\par\caption{\textbf{Geo-Topological (GT) Transforms}. (\textbf{a}) To discard geometrical information that is either not meaningful or unreliable, we subject each distance $d_{i,j}$ for stimuli $i$ and $j$ to a \textit{monotonic transform} $GT_{l,u} (d_{i,j})$. (\textbf{b}) We refer to this family of transforms as geo-topological, because it combines aspects of geometry and topology. (\textbf{c}) Depending on the choice of a lower and upper bound, the transform can threshold ($l =u$) at an arbitrary level, adapting to the granularity of a data set. It can also preserve all ($l =0$, $u=\max$) or some ($l<u$) geometrical information. (\textbf{d}) monotonically transformed distance matrices under a set of threshold pairs.}\label{fig:RGTA}
     \vspace{-0.5em}
\end{figure}
% \vspace{-1em}
\section{Primers and Definitions of AGTIC}
% \vspace{-0.5em}

\textbf{Problem description.} Let $\mathbbm{P}_x$ and $\mathbbm{P}_y$ be the marginal distributions on space $\mathcal{X}$ and $\mathcal{Y}$ and $\mathbbm{P}_{xy}$ be a Borel probability measure defined on their domain $\mathcal{X} \times \mathcal{Y}$. Given the independent and identically distributed (i.i.d.) sample $Z:=(X, Y)=\{(x_1,y_1),\cdots,(x_m,y_m)\}$ of size $m$ drawn independently and identically distributed according to $\mathbbm{P}_{xy}$, with each row corresponding to an observation of both variables, the statistical test $\mathcal{T}(Z): (\mathcal{X} \times \mathcal{Y} \mapsto \{0,1\})$ is used to distinguish between the null hypothesis $\mathcal{H}_0: \mathbbm{P}_{xy}=\mathbbm{P}_x\mathbbm{P}_y$ and the alternative hypothesis $\mathcal{H}_1: \mathbbm{P}_{xy}\not=\mathbbm{P}_x\mathbbm{P}_y$. 

\textbf{Distance correlation (dCor).}
Distance covariance was introduced by \cite{szekely2007measuring} to test dependence between random variables $X$ and $Y$ with finite first moments  in space $\mathcal{X}$ and $\mathcal{Y}$, computed in terms of a weighted $L_2$ norm between the characteristic functions of the joint distribution of $X$ and $Y$ and the product of their marginals, computed in terms of certain expectations of pairwise Euclidean distances: 

\vspace{-1.5em}
% \begin{equation}
\begin{align}
\label{eq:dCov}
\begin{split}
\mathcal{V}^2(X,Y) =& \EX[\norm{X-X'}\norm{Y-Y'}] \\
&+ \EX[\norm{X-X'}]\EX[\norm{Y-Y'}] \\
&- 2\EX[\norm{X-X'}\norm{Y-Y''}]
\end{split}
\end{align}
% \end{equation}
\vspace{-0.5em}

where $\EX$ denotes expected values, $(X,Y)$ and $(X',Y')$ are drawn i.i.d from $\mathbbm{P}_{xy}$, primed random variables $(X',Y')$ and $(X'',Y'')$ are i.i.d. copies of the variables $X$ and $Y$. Distance correlation (dCor) is obtained by dividing $\mathcal{V}^2(X,Y)$ by the product of their distance standard deviations:

\vspace{-1.5em}
\begin{equation}
\label{eq:dCor}
% \begin{split}
dCor(X,Y) = \frac{\mathcal{V}^2(X,Y)}{\sqrt{\mathcal{V}^2(X,X)\mathcal{V}^2(Y,Y)}}
% \end{split}
\end{equation}
\vspace{-0.5em}

\cite{lyons2013distance} showed that if metrics $\rho \mathcal{X}$ and $\rho \mathcal{Y}$ satisfy strong negative type, the distance correlation in a metric space characterizes independence: $\mathcal{V}^2_{\rho \mathcal{X},\rho\mathcal{Y}}(X,Y)=0 \Leftrightarrow X$ and $Y$ are independent.

% $\mathcal{V}^2(X,Y)$ can also be expressed in terms of the classical Pearson's covariance, $cov$, as $\mathcal{V}^2(X,Y)$ 

\textbf{Geo-Topological transform (GT).} Suppose $d_{i,j}$ is the distance between two sample observation $(x_i,x_j) \overset{i.i.d}\sim X$. Let $GT(d_{i,j})$ be the general form of a nonlinear monotonic transformation parameterized by two positive real numbers $l$ and $u$ satisfying $l<u$. Let $d_{\max} = \max_{i,j\in (1,\cdots,m)} d_{i,j}$ be the largest pairwise distance in space $X$. Here we define $GT_{l,u}(d_{i,j})$ to be the simplified version of the general GT transform as a continuous nonlinear \textit{bounded} \textit{functional} $f(d;l,u,d_{\max})$ onto $L^2[0,1]$:

\vspace{-1.5em}
\begin{equation}
f(d;l,u) =
  \begin{cases}
0 & \text{if $0 \leq d < l$} \\
d_{\max}\cdot\frac{d-l}{u-l} & \text{if $l \leq d < u$} \\
d_{\max} & \text{if $u \leq d$}
  \end{cases}
\end{equation}
\vspace{-0.5em}

as the empirical choice for our test statistics. However, the following theoretical properties also apply to other type of monotonic transforms. Fig. \ref{fig:RGTA} offered an illustration of the effect of a set of parameter pairs $(l,u)$ on the $GT_{l,u}(d_{i,j})$ function, the distance matrices as well as data-driven interpretations of the lower bound $l$ and the upper bound $u$ for this stepwise function.

\textbf{Adaptive Geo-Topological Independence Criterion (AGTIC).} 
Before we define our statistics, we always assume the following regularity conditions: (1) $(X, Y)$ have finite second moments, (2) neither random variable is a constant and (3) $(X, Y)$ are continuous random variables, which are also required by dCor to establish convergence and consistency. Since we are using the population definition of the distance correlation, the nonconstant condition ensures a more stable behavior and avoids the trivial case. Given a Geo-Topological Transform $f(\cdot) := GT(\cdot;l,u)$, the population expression for the GT-transformed distance covariance can be defined as:

\vspace{-1.5em}
\begin{align}
\label{eq:dCovf}
\begin{split}
\mathcal{V}^{2*}(X,Y;f) = &\EX[f(\norm{X-X'})f(\norm{Y-Y'})] \\
&+\EX[f(\norm{X-X'})]\EX[f(\norm{Y-Y'})]  \\
&- 2\EX[f(\norm{X-X'})f(\norm{Y-Y''})]
\end{split}
\end{align}
\vspace{-0.5em}

where the same set of parameters $l$ and $u$ applies to the monotonic transforms on all the distances. The GT-transformed distance correlation is then:

\vspace{-1.5em}
\begin{equation}
\label{eq:dCovf}
% \begin{split}
dCor^*(X,Y;f) = \frac{\mathcal{V}^{2*}(X,Y;f)}{\sqrt{\mathcal{V}^{2*}(X,X;f)\mathcal{V}^{2*}(Y,Y;f)}}
% \end{split}
\end{equation}
\vspace{-0.5em}

We can naturally define AGTIC to be the maximum GT-transformed distance correlation within the parameter domain $\mathcal{S} := \{l\in[0,1), u\in(0,1], l<u\}$:

\vspace{-1.5em}
\begin{equation}
\label{eq:AGTIC}
% \begin{split}
AGTIC(X,Y) = \max_{(l,u)\in\mathcal{S}} dCor^*(X,Y;GT(\cdot;l,u))
% \end{split}
\end{equation}
\vspace{-1.5em}

\textbf{Test description.} 
The statistical test of independence can be performed by locating the test statistic in its distribution under $\mathcal{H}_0$ using a permutation procedure \cite{gretton2008kernel}. As the preliminary, given the i.i.d. sample $\mathcal{Z}=(X,Y)=\{(x_1,y_1),\cdots,(x_m,y_m)\}$ defined earlier, the statistical test $\mathcal{T}(Z):=(\mathcal{X}\times\mathcal{Y})^m\rightarrow \{0,1\}$ distinguishes between the null hypothesis $\mathcal{H}_0: \mathbbm{P}_{xy}=\mathbbm{P}_x\mathbbm{P}_y$ and the alternative hypothesis $\mathcal{H}_1: \mathbbm{P}_{xy}\not=\mathbbm{P}_x\mathbbm{P}_y$. This is achieved by comparing the test statistic, in our case $AGTIC(Z)$, with a particular threshold: if the threshold is exceeded, then the test rejects the null hypothesis. The permutation test involves the following steps. Based on a finite sample, incorrect answers can yield two kinds of errors: the Type I error is the probability of rejecting $\mathcal{H}_0$ when $x$ and $y$ are in fact independent, and the Type II error is the probability of accepting $\mathbbm{P}_{xy}\not=\mathbbm{P}_x\mathbbm{P}_y$ when in fact the underlying variables are dependent. To obtain an estimate of the Type I and Type II error, we need to create a null distribution of $Z$, for instance, by shuffling the labels of $X$ or $Y$ such that their one-to-one correspondence are now disconnected, and thus, independent. By computing the statistics multiple times on the null distribution the original dataset, we obtain the null distribution for our test statistics $AGTIC(\text{null}(Z))$, where we can align our test statistics computed for the real data $AGTIC(Z)$ to get Type I error and Type II error. For instance, if our computed $AGTIC(Z)$ is larger than 97\% of the $AGTIC(\text{null}(Z))$, then the Type I error. In practice, we specify a cutoff $\alpha$ for the false positive rates to be the upper bound on the Type I error. We can further define the empirical statistical power as the fraction of true datasets yielding a statistic value greater than 95\% of the values yielded by the corresponding null datasets, with a theoretical guarantee that the false positive rate is below $\alpha =$ 5\%.

\begin{figure}[b]
\vspace{-2em}
\centering
    \includegraphics[width=\linewidth]{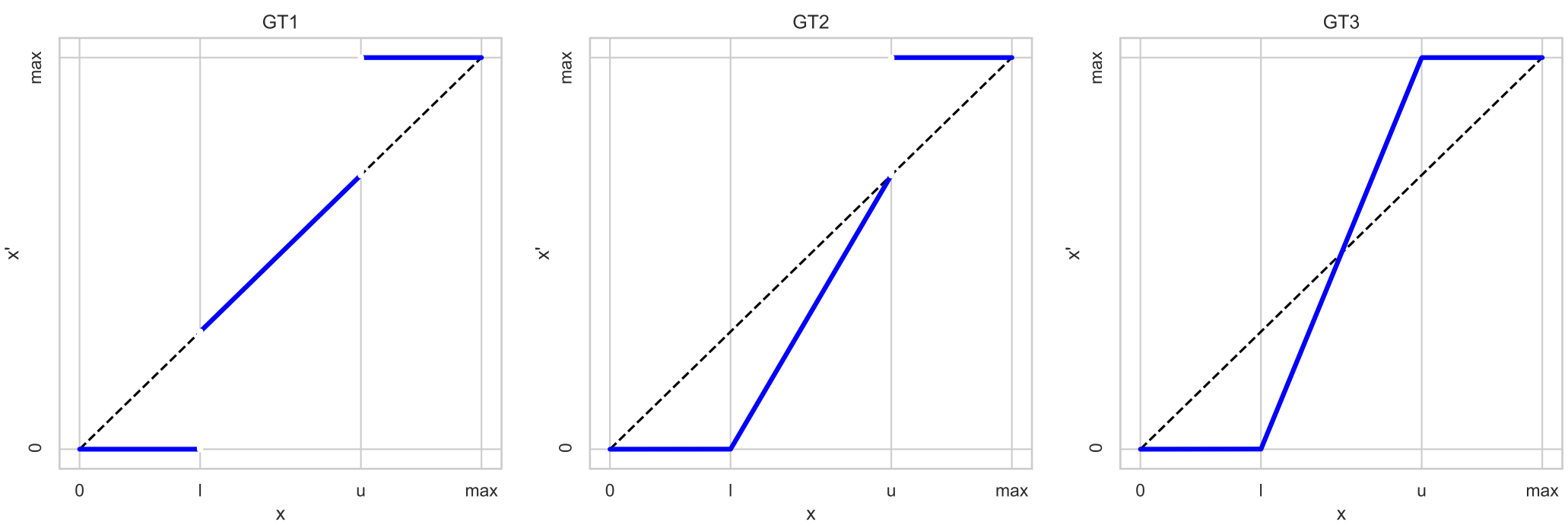}
    \vspace{-1.5em}
\par\caption{\textbf{Example GT transforms: } t1, t2, t3.}\label{fig:GT}
% \vspace{-1em}
\end{figure}

\textbf{Algorithmic variants.}
There are several algorithmic decisions one can make to compute AGTIC. Here we briefly describe the four types of geo-topological transforms and three types of subroutines used in the empirical evaluation. Instead of manually setting the parameters for the monotonic transformation, we adaptively selects threshold parameters, using the maximum of the statistics over the parameter domain to compute the test statistic. As primed earlier, the nonlinear monotonic transforms can be specified in different formats. One working hypothesis is that only local distances contribute to the mutual information, such that larger distances aren't as relevant to the global topology. In this setting, only small distances are counted as neighbors to form an edge within the topological graph. \textit{AGTIC - t0} sets the lower bound $l$ to zero, and only search for the optimal upper bound $u$ (here the ``optimal'' means yielding the maximum statistics). Fig. \ref{fig:GT} illustrated three other types of monotonic transform that we considered, denoted \textit{AGTIC - t1}, \textit{AGTIC - t2}, \textit{AGTIC - t3}. In this setting (also the default one), only pairs with intermediate distance are counted as neighbors. The logic behind this setup is that the dependence between very proximal data points can be more likely attribute to noise, therefore by ``discrediting'' these edges, we have a more stable topology. Other than the transforms, we also compared three variants of our test statistics, denoted \textit{AGTIC - s1}, \textit{AGTIC - s2}, \textit{AGTIC - s3} (``s'' stands for subroutine):  

\vspace{-1.5em}
\begin{equation}
\label{eq:AGTICv}
\begin{split}
\mathcal{V}^{2*}_{s1} (X,Y) &= \max_{(l,u)\in\mathcal{S}} dCor^*(X,Y;GT(\cdot;l,u)) \\
\mathcal{V}^{2*}_{s2} (X,Y) &= \max_{(l,u)\in\mathcal{S}} \frac{dCor^*(X,Y;GT(\cdot;l,u))}{dCor^*(\text{null}(X),\text{null}(Y);GT(\cdot;l,u))} \\
\mathcal{V}^{2*}_{s3} (X,Y) &= \frac{\max_{(l,u)\in\mathcal{S}} dCor^*(X,Y;GT(\cdot;l,u))}{\sqrt{\text{Var}(\forall_{(l,u)\in\mathcal{S}} dCor^*(X,Y;GT(\cdot;l,u)))}}\\
\end{split}
\end{equation}
\vspace{-0.5em}

where \textit{AGTIC - s2} selects the maximum of the ratio of the transformed dCor for the dataset over the transformed dCor for the null dataset, and \textit{AGTIC - s3} includes an additional noise normalization procedure (dividing the maxmimum test statistics by the standard deviation of all the test statistics in the parameter domain). Last but not least, we also considered the case where the upper and lower thresholds were not cut-offs of a specific distance value, but the cut-off of a specific fraction of the ranked data. In another word, we set the $l$ and $u$ as percentiles instead of scales. We denote this variant \textit{pAGTIC} (where ``p'' stands for ``percentile'').

\textbf{Other weighting schemes.} 
It is worth exploring whether replacing the piece-wise linear transform with an alternative (e.g. differentiable) monotonic transform brings further improvements. However, we might expect such functions to behave similarly and prefer the simplicity of piece-wise linear functions and the fact that they make subsetting distances straightforward. 

\textbf{Hypothesis. }
We hypothesize that the monotonic transforms of distance matrices associated with the two variables offers an additional attention mechanism to the traditional distance correlation, such that data points with small distances are treated as identical (one collapsed node within a topological graph) and data points with large distances are considered disconnected (no matter how distant they are from each other). Rather than simply thresholding the distance matrix, which replaces a geometrical summary with a topological summary, we explore transforms that can suppress variations among small distances (which tend to be dominated by noise) and among large distances (which may not reflect mutual information between the two variables), while preserving geometrical information (which may boost sensitivity to simple relationships). We refer to these transforms as geo-topological transforms, because they combine aspects of geometry and topology. Depending on the choice of the lower and upper bounds, these transforms can threshold (lower bound $l$ = upper bound $u$) at arbitrary levels, adapting to the granularity of the dependency present in a data set. They can also, optionally, preserve geometrical information (lower bound $l$ < upper bound $u$).

% \vspace{-1em}
\section{Properties of AGTIC}
\label{sec:properties}
% \vspace{-0.5em}

\textbf{Independence indicator.} dCor is an independence indicator such that: (1) $\mathcal{R}(X,Y)$ defined for $X, Y$ in arbitrary dimensions and (2) $\mathcal{R}(X,Y)=0 \Leftrightarrow X$ and $Y$ are independent \cite{szekely2007measuring,szekely2009brownian}. As in Appendix \ref{sec:indeptheory}, we have:

\begin{theorem}
$\mathcal{V}^{2*}(X,Y;GT(\cdot;l,u)=0$ when $X$ and $Y$ are independent, given any $(l,u) \in \mathcal{S}$. Moreover, at $(l,u)=(0,1)$, $\mathcal{V}^{2*}(X,Y;GT(\cdot;l,u))=\mathcal{V}^{2}(X,Y)$. They also hold for the distance correlation by replacing the distance covariance $\mathcal{V}^{2*}(X,Y;GT(\cdot;l,u))$ with $dCor^*(X,Y;GT(\cdot;l,u))$.
\end{theorem}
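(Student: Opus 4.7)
The plan is to prove the two claims separately, leaning on the fact that $GT_{l,u}$ is a fixed, bounded Borel map applied pointwise inside the three expectations that constitute $\mathcal{V}^{2*}$. Boundedness of $f$ together with the finite-second-moment hypothesis guarantees every expectation exists and that Fubini's theorem is applicable without caveat, so all manipulations below are legitimate.

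First I would handle the independence direction. Assume $X\perp Y$ so that $\mathbb{P}_{xy}=\mathbb{P}_x\mathbb{P}_y$. Because each i.i.d.\ copy $(X^{(i)},Y^{(i)})$ is drawn from this product law, the vector $(X,X',X'')$ is independent of $(Y,Y',Y'')$; pushing this through the Borel map $(u,v)\mapsto f(\lVert u-v\rVert)$ shows that $f(\lVert X-X'\rVert)$ is independent of both $f(\lVert Y-Y'\rVert)$ and $f(\lVert Y-Y''\rVert)$. Write $a:=\EX[f(\lVert X-X'\rVert)]$ and $b:=\EX[f(\lVert Y-Y'\rVert)]=\EX[f(\lVert Y-Y''\rVert)]$, the last equality holding because $Y''$ is an i.i.d.\ copy of $Y'$. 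Each of the three terms in the definition of $\mathcal{V}^{2*}$ then collapses to $ab$, and the signed sum is $ab+ab-2ab=0$.

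Next I would verify the identity-recovery statement. At the boundary choice where the lower bound is $0$ and the upper bound equals the maximum pairwise distance --- the setting the paper labels $(l,u)=(0,1)$ under its normalized convention --- the middle branch of $GT_{l,u}$ specializes to $f(d)=d_{\max}\cdot(d/d_{\max})=d$ for every $d\in[0,d_{\max}]$, while the third branch is attained only at $d=d_{\max}$ where it still coincides with the identity. Thus $f$ acts as the identity on the support of each pairwise norm, and the three-term expression for $\mathcal{V}^{2*}$ reduces term by term to equation~(\ref{eq:dCov}). Both conclusions lift from $\mathcal{V}^{2*}$ to $dCor^*$ by dividing by $\sqrt{\mathcal{V}^{2*}(X,X;f)\,\mathcal{V}^{2*}(Y,Y;f)}$; under the stated regularity conditions ($X,Y$ continuous, nonconstant, with finite second moments) the two self-terms are strictly positive, so vanishing of the numerator or its coincidence with $\mathcal{V}^2$ carries directly over to the ratio.

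The one place where I would slow down --- and the closest thing to an obstacle --- is the cross term involving $Y''$. One must explicitly confirm that independence of $X$ and $Y$ implies that $(X,X',Y'')$ has the product law $\mathbb{P}_x\otimes\mathbb{P}_x\otimes\mathbb{P}_y$, so that the factorization $\EX[f(\lVert X-X'\rVert)f(\lVert Y-Y''\rVert)]=a\cdot b$ is justified; this is where the roles of the primed and doubly-primed copies have to be tracked carefully. Everything else is mechanical, and the proof never invokes the specific piecewise shape of $GT_{l,u}$ --- only its boundedness and measurability --- so it transparently extends to other monotonic transforms as the paper claims.
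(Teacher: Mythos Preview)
Your proposal is correct and its core argument --- factorizing each expectation under the product law and observing the cancellation $ab+ab-2ab=0$ --- is exactly what the paper's supplementary proof does in equation~(\ref{eq:dCorGT}). Your version is more streamlined: you explicitly handle the identity-recovery at $(l,u)=(0,1)$ and the passage to $dCor^*$, whereas the paper surrounds the same cancellation with tangential material on Cauchy sequences and $L^2$-completeness that is not needed for the theorem as stated.
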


\textbf{Computational complexity.}
Here, we consider the most computationally demanding 
of the family, \textit{AGTIC - s3}, which consist of a combinatorial threshold space and a noise normalization. In the typical setup (very large sample size $m$ and small number of thresholds $k$), the computational complexity is dominated by the threshold searching with two variables. Hence, we achieve a cost in terms of the sample size of $\mathcal{O}(m^2(k(k-1)/2)^2) \approx \mathcal{O}(m^2k^4)$. In the special case of the distance covariance with univariate real-valued variables, \cite{huo2016fast} achieve an $\mathcal{O}(m\log m)$ cost for dCor computation, thus potentially reducing complexity for AGTIC to $\mathcal{O}(m\log m(k(k-1)/2)^2) \approx \mathcal{O}(mk^4\log m)$.

\textbf{Cheaper options via sampling with convergence.}
Despite the fact that the number of thresholds $k$ is generally small (around 5 to 10), $\mathcal{O}(k^4)$ is still considerably larger than the vanilla dCor. \cite{szekely2007measuring,szekely2014partial} showed that sample dCor can be easily computed to converge to the population dCor via properly centering the Euclidean matrices. Similarly as \cite{szekely2014partial,shen2019distance}, sample AGTIC can also be computed via Euclidean distance matrices after the monotonic transform and the sample version converges to the respective AGTIC up to a difference of $\mathcal{O}(\frac{1}{n})$ where n is the number of sample of threshold sets.

\textbf{Relationship to RKHS-based statistics.}
Here we state that monotonically transformed distance metrics (such as ${\mathcal{V}}^{2*}(X,Y)$) can be defined as a \textit{distance-induced kernel} \cite{sejdinovic2013equivalence}, a special case in RKHS-based independence statistics. Proof follows: For fixed $n \geq 4$, distance correlation is defined in a Hilbert Space generated by Euclidean distance matrices of arbitrary sets (samples) of $n$ points in a Euclidean space $\mathbb{R}^p$, $p\geq 1$ \cite{szekely2014partial}, such that for each pair of elements $C = (C_{i,j}),D = (D_{i,j})$ in the linear span of $\mathcal{H}_n=\{\tilde{A}:A\in\mathcal{S}_n\}$ where $\mathcal{S}_n$ is the linear span of all $n\times n$ distance matrices of samples $\{x_1,\cdots,x_n\}$, empirical inner product is defined as:

\vspace{-1.5em}
\begin{equation}
\label{eq:innerprod}
\begin{split}
\langle C, D\rangle & = \frac{1}{n(n-3)}\sum\limits_{i\not=j}C_{ij}D_{ij} \\
\end{split}
\end{equation}
\vspace{-1em}

In our case, ${\mathcal{V}^*}^2(X,Y)$ is not necessarily still defined in a Hilbert Space (as ${\mathcal{V}}^2(X,Y)$ in \cite{szekely2007measuring}), because $f_n(x)$ is a monotone nonlinear operator (as defined by \cite{minty1962monotone}):

\vspace{-2em}
\begin{equation}
\label{eq:monotone}
\begin{split}
\langle y-x, f(y)-f(x)\rangle \geq 0, \text{\space} & \forall x,y \in \mathcal{H}
\end{split}
\end{equation}
\vspace{-1.5em}

\begin{theorem}
Give $f_n(\mathcal{X})$ is a monotone nonlinear operator on a Hilbert Space $\mathcal{X}$, then the kernel of $f(\mathcal{H})$ is still continuously defined to be valid within Hilbert Space $\mathcal{X}$.
\end{theorem}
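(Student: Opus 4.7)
The plan is to show that even though $f_n$ is only a nonlinear (not linear) operator, its monotonicity supplies enough structure that the bilinear form $k_f(x,y):=\langle f_n(x), f_n(y)\rangle_{\mathcal{X}}$ (equivalently, the $f$-transformed distance-induced kernel built as in Sejdinovic et al.) is continuous, symmetric, and reducible to a valid RKHS kernel on the space $\mathcal{H}_n$ of centered distance matrices introduced in equation (8). I will leverage the equivalence between distance-based and RKHS-based independence statistics established by \cite{sejdinovic2013equivalence}, together with classical results on monotone operators on Hilbert spaces.

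\textbf{Step 1 (continuity).} First, I invoke the Minty–Kato theory of monotone operators: any monotone operator satisfying the inequality in (9) is locally bounded on the interior of its domain, and if maximal it is hemicontinuous. For the concrete GT transform, continuity is immediate since $f(\cdot;l,u)$ is piecewise linear and bounded; its composition with the Euclidean distance therefore gives entries of the transformed matrix $C_f$ that depend continuously on the sample points. This already guarantees that the map $(x,y)\mapsto k_f(x,y)$ is jointly continuous, so the kernel is at least \emph{well-defined} on $\mathcal{X}\times\mathcal{X}$.

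\textbf{Step 2 (empirical inner product is valid).} Next, I show that the $f$-transformed centered matrix $\tilde{C}_f$ still lies in the linear span $\mathcal{H}_n$, and that the inner product (8) applied to pairs $(\tilde{C}_f, \tilde{D}_f)$ remains symmetric and bilinear. Boundedness and monotonicity of $f$ ensure $\tilde{C}_f$ has finite Frobenius norm, so $\langle \tilde{C}_f,\tilde{D}_f\rangle$ is finite. This identifies ${\mathcal{V}}^{2*}(X,Y;f)$ with an empirical HSIC-type statistic in $\mathcal{H}_n$, matching the distance-induced kernel construction of \cite{sejdinovic2013equivalence}.

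\textbf{Step 3 (valid kernel via negative-type preservation).} Finally, I argue that the transformed pseudometric $\rho_f(x,y):=f(\|x-y\|)$ generates a positive semi-definite kernel on samples. Following Sejdinovic et al., it suffices to verify negative type of $\rho_f$ on finite samples. I would decompose the piecewise-linear GT map as a nonnegative combination of clipped-linear components $d\mapsto \min(\max(d-a,0),b-a)\cdot d_{\max}/(u-l)$ plus additive constants; each component is a composition of (i) the negative-type Euclidean distance with (ii) a bounded concave increasing function, which by a Schoenberg-type argument preserves negative type on finite samples. Taking nonnegative combinations (and noting that adding constants only shifts the doubly-centered matrix by zero) keeps the empirical doubly-centered kernel matrix positive semi-definite, certifying $k_f$ as a valid kernel.

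\textbf{Main obstacle.} The hard part is Step 3: arbitrary monotone transforms of a negative-type semimetric need not preserve negative type (only Bernstein functions do in general, and the GT transform is not one because of its flat upper plateau). My planned workaround — decomposing GT into clipped-linear, concave-increasing summands and invoking Schoenberg-type preservation results \emph{on finite samples} — is the delicate step; if it fails at the population level, one can still fall back on the empirical-sample interpretation, in which positive semi-definiteness of the doubly-centered $f$-transformed distance matrix can be checked directly and passes to the limit by the continuity established in Step 1.
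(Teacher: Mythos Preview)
Your route differs substantially from the paper's. The paper does not attempt to verify that the transformed semimetric is of negative type or that the induced kernel is positive semi-definite. Instead, it invokes Minty's resolvent result: for a continuous (maximal) monotone operator $f$ on $\mathcal{X}$, the map $(I+f)^{-1}$ exists, is continuous, monotone, and everywhere defined. The paper then argues that this resolvent furnishes a mapping $\phi$ back to the original Hilbert space $\mathcal{X}$, so that the ``kernel operations'' remain well defined there; it closes by citing the $L^2[a,b]$-completeness established elsewhere in the paper. In short, the paper's proof is a two-line appeal to the Minty resolvent plus completeness, not a Schoenberg/negative-type argument.

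Your Step~1 overlaps with the paper (both touch Minty--Kato), but your Steps~2--3 go in a different and more ambitious direction: you try to certify $k_f$ as an honest RKHS kernel by showing $\rho_f=f(\|\cdot-\cdot\|)$ is of negative type. That is the more principled reading of ``valid kernel,'' but your Step~3 contains a genuine gap. The clipped-linear building block $d\mapsto \min(\max(d-a,0),\,b-a)$ is \emph{not} concave: at $d=a$ the one-sided slopes jump from $0$ to $1$, a convex kink, so Schoenberg-type preservation results for concave (or Bernstein) transforms do not apply to these pieces. Consequently the decomposition you propose does not yield negative type, and the population-level kernel need not be positive semi-definite via this route. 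Your fallback---checking PSD of the empirically doubly-centered matrix and passing to the limit---does not rescue the argument either, because a monotone but non-Bernstein transform of a Euclidean metric can already fail conditional negative definiteness on finite samples. If you want to stay close to the paper, replace Step~3 entirely by the Minty resolvent argument: continuity and everywhere-definedness of $(I+f)^{-1}$ is what the paper actually uses to declare the kernel ``continuously defined to be valid within $\mathcal{X}$.''
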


\begin{proof}
\cite{minty1962monotone} further defined the (not necessarily linear) \textit{monotone operator} $f$ as \textit{maximal} if it cannot be extended to a properly larger domain while preserving its monotoneity, which in our case, is the maximum value cap in the geo-topological transformation. In \cite{minty1962monotone}, Theorem 4 Corollary states: If $f: \mathcal{D}\to \mathcal{X}$ is a continuous monotone operator, then $(I+f)^{-1}$ exists, is continuous on its domain, and is monotone; if in addition, $f$ is continuous and maximal, and has open domain  $\mathcal{D}$ (in particular, if $f$ is continuous and \textit{everywhere-defined}), then $(I+f)^{-1}$ is \textit{everywhere-defined}. This shows that despite the fact that the distance correlation after our proposed family of geo-topological transformation is no longer an inner product space, it is sufficient to show that a mapping $\phi$ exists to transform back to the original Hilbert Space such that the kernel operations are everywhere defined valid. As an extension, this theorem applies to other possible monotone operations such as generalized logistic function and sigmoid functions. As we showed in Equation \ref{eq:supConvergence2} that this transformation is \textit{complete} in the space $L^2[a,b]$, ${\mathcal{V}^*}^2_{max}(X,Y)$ can still maintain the kernel properties for an inner product space which is \textit{complete} (as a metric space), and in another word, a \textit{Hilbert Space}. 
\end{proof}

It was suggested that distance-based and RKHS-based statistics are fundamentally equivalent for testing dependence \cite{sejdinovic2013equivalence}. Here we follow their logic to explore any relationship between RKHS and AGTIC. According to \cite{berlinet2011reproducing}, for every symmetric positive definite function (i.e. \textit{kernel}) $k:\mathcal{Z}\times \mathcal{Z}\to \mathbb{R}$, exists an associated RKHS $\mathcal{H}_k$ of real-valued functions on $\mathcal{Z}$ with reproducing kernel $k$. Given $v\in \mathcal{M}(\mathcal{Z})$, the \textit{kernel embedding} of $v$ into the RKHS $\mathcal{H}_k$ is defined as $\mu_k(v) \in \mathcal{H}_k$ such that $\int f(z)dv(z)= \langle f, \mu_k(v)\rangle_{\mathcal{H}_k} $ held for all $f \in \mathcal{H}_k$ \cite{sejdinovic2013equivalence}. 

\begin{lemma}
In order to define a \textit{distance-induced kernel} $k(z,z')=\frac{1}{2}[\rho(z,z_0)+\rho(z',z_0)+\rho(z,z')]$ for $z_0 \in \mathcal{Z}$, $\rho$ should be a semi-metric of negative type \cite{sejdinovic2013equivalence}. 
\end{lemma}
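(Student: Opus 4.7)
The plan is to prove this by directly relating positive definiteness of $k$ to the negative-type condition on $\rho$ via an algebraic manipulation that introduces the base point $z_0$ as an auxiliary coefficient. (Note that for $k$ to be a valid kernel, the formula should read $k(z,z') = \frac{1}{2}[\rho(z,z_0) + \rho(z',z_0) - \rho(z,z')]$; I take this to be the intended expression.) Recall that $\rho$ is a semi-metric of negative type if for every $n$, every $z_1,\dots,z_n \in \mathcal{Z}$, and every $\alpha_1,\dots,\alpha_n \in \mathbb{R}$ with $\sum_i \alpha_i = 0$, one has $\sum_{i,j}\alpha_i\alpha_j \rho(z_i,z_j) \leq 0$. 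The kernel $k$ is positive definite if for every $n$, points $z_1,\dots,z_n$, and coefficients $\beta_1,\dots,\beta_n$, $\sum_{i,j}\beta_i\beta_j k(z_i,z_j) \geq 0$.

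First I would expand the quadratic form for $k$. Setting $S = \sum_i \beta_i$, the three terms of $k$ give
\begin{equation}
\sum_{i,j}\beta_i\beta_j k(z_i,z_j) = \tfrac{1}{2}\bigl[2S\sum_i \beta_i \rho(z_i,z_0) - \sum_{i,j}\beta_i\beta_j \rho(z_i,z_j)\bigr],
\end{equation}
using that $\rho$ is symmetric and that $\sum_{i,j}\beta_i\beta_j\rho(z_i,z_0) = S\sum_i\beta_i\rho(z_i,z_0)$.

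Next I would introduce $z_0$ as an extra point with coefficient $\alpha_0 := -S$ and set $\alpha_i := \beta_i$ for $i \geq 1$, so that $\sum_{i=0}^n \alpha_i = 0$. Using $\rho(z_0,z_0)=0$, a direct expansion yields
\begin{equation}
\sum_{i,j=0}^n \alpha_i\alpha_j \rho(z_i,z_j) = \sum_{i,j=1}^n \beta_i\beta_j \rho(z_i,z_j) - 2S\sum_{i=1}^n \beta_i \rho(z_i,z_0),
\end{equation}
which is exactly $-2\sum_{i,j}\beta_i\beta_j k(z_i,z_j)$. Hence $k$ is positive definite if and only if the right-hand side is $\leq 0$ for every such configuration, i.e.\ if and only if $\rho$ is of negative type.

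Finally I would note that since $k(z,z)=\rho(z,z_0)$ must be well-defined and symmetry of $k$ forces symmetry of $\rho$, $\rho$ must at least be a semi-metric; combined with the equivalence above, this shows the semi-metric negative-type assumption is both necessary and sufficient for $k$ to be a valid (symmetric positive definite) kernel. The main obstacle is mostly bookkeeping: one has to be careful to allow arbitrary $n$ and arbitrary coefficients (including the case $S \neq 0$, which is handled precisely by adjoining $z_0$ with coefficient $-S$ to force the zero-sum condition of negative type). No deeper analytic input is required beyond the polarization-style identity above.
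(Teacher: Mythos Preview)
Your argument is correct and is essentially the classical Schoenberg/Berg--Christensen--Ressel computation linking conditionally negative definite functions to positive definite kernels via a base point. You also rightly flag the sign typo: the standard distance-induced kernel has $-\rho(z,z')$, not $+\rho(z,z')$, in the third slot.

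By contrast, the paper does not supply a proof of this lemma at all: it is simply quoted, with attribution to Sejdinovic et al.\ (2013), as a background fact feeding into the subsequent corollary. So your direct verification is strictly more than what the paper itself provides here. One small refinement: your ``if and only if'' is asserted right after the construction that adjoins $z_0$ with weight $-S$, which cleanly handles the direction (negative type $\Rightarrow$ $k$ positive definite). For the converse, the tidiest route is to note that when the coefficients already satisfy $\sum_i\beta_i=0$, the cross term $2S\sum_i\beta_i\rho(z_i,z_0)$ vanishes and one gets $\sum_{i,j}\beta_i\beta_j k(z_i,z_j)=-\tfrac{1}{2}\sum_{i,j}\beta_i\beta_j\rho(z_i,z_j)$ directly, without needing to adjoin $z_0$. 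Making that half-line explicit would remove any ambiguity in the biconditional.
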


\cite{lyons2013distance} showed that for distance-based independence testing, it is necessary and sufficient that the metric space be of \textit{strong negative type}, which only holds for \textit{separable} Hilbert Spaces. Here we investigate the separability:

\begin{theorem}
If the geo-topological transform $f_n(\mathcal{X})$ is a continuous monotone operator on a \textit{separable} Hilbert Space $\mathcal{X}$ (distance metric), then it defines a \textit{separable} space.
\end{theorem}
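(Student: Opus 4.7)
The plan is to reduce the statement to the well-known topological fact that the continuous image of a separable space is again separable, and then to observe that the Minty-type monotonicity assumption already hands us the continuity we need. Concretely, since $\mathcal{X}$ is a separable Hilbert space, I would first fix a countable dense subset $D = \{x_1, x_2, \ldots\} \subset \mathcal{X}$ and set $D^* := f_n(D) \subset f_n(\mathcal{X})$. The set $D^*$ is countable by construction.

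Next I would show $D^*$ is dense in $f_n(\mathcal{X})$ under the metric inherited from the transformed distances (equivalently, the pullback of $\|\cdot\|_{\mathcal{X}}$ along $(I+f_n)^{-1}$, which the preceding theorem guarantees is everywhere defined and continuous by Minty's Theorem 4 Corollary). Take any $y \in f_n(\mathcal{X})$, write $y = f_n(x)$ for some $x \in \mathcal{X}$, and choose $x_k \in D$ with $x_k \to x$. Continuity of $f_n$ then yields $f_n(x_k) \to f_n(x) = y$, so $D^*$ is dense in $f_n(\mathcal{X})$. Thus $f_n(\mathcal{X})$ admits a countable dense subset and is separable.

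To connect this to the kernel construction invoked in the surrounding lemma, I would additionally pass to the closed linear span: the $\mathbb{Q}$-rational linear combinations of elements of $D^*$ form a countable dense subset of $\overline{\operatorname{span}}(f_n(\mathcal{X}))$ with respect to the inner product given in equation~\eqref{eq:innerprod}, so the ambient space in which the distance-induced kernel lives is also separable. Combined with the lemma (which requires a semi-metric of negative type) and the fact that separable Hilbert spaces are of strong negative type as cited from \cite{lyons2013distance}, this is exactly the ingredient needed for the kernel characterization of independence to go through after the GT transform.

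The main obstacle will be bookkeeping rather than a genuine mathematical difficulty: because $f_n$ is nonlinear, $f_n(\mathcal{X})$ is not automatically a Hilbert space, so one must be careful about which metric separability is asserted in. I would resolve this by consistently using the metric $d^*(y_1, y_2) := \|(I+f_n)^{-1}(y_1) - (I+f_n)^{-1}(y_2)\|_{\mathcal{X}}$, whose continuity and everywhere-definedness were secured in Theorem~4.2's proof via Minty's corollary; under $d^*$, $f_n$ is a homeomorphism onto its image and separability transfers immediately.
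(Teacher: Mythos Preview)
Your proposal is correct and follows essentially the same route as the paper: both arguments reduce to the classical fact that the continuous image of a separable space is separable, by pushing a countable dense subset of $\mathcal{X}$ through $f_n$ and invoking continuity to get density of the image. The paper's version is terser and does not address the metric-on-the-image issue you flag (it simply writes $f(\mathcal{X})\subset f(\bar{\mathcal{S}})\subset \overline{f(\mathcal{S})}$), whereas your explicit pullback metric $d^*$ via $(I+f_n)^{-1}$ and your passage to the rational span are additional bookkeeping not present in the paper; these extras are harmless and arguably clarify what ``separable space'' means after a nonlinear map, but they go beyond what the paper's own proof supplies.
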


\begin{proof}
A topological space is called \textit{separable} if it contains a \textit{countable}, \textit{dense} subset. In our case, given the countable set $\mathcal{X}$ (original distance) and a function $f:\mathcal{X} \to \mathcal{X}'$ which is \textit{surjective} on $\mathcal{X}'$ (the Hilbert Space we just defined), then $\mathcal{X}'$ is \textit{finite} or \textit{countable}. Then we need to prove that the dense subset $\mathcal{S}$, (in our case, $GT(·;l,u)$), projected from the original distance metric $\mathcal{X}$ (which is a dense subset and \textit{Hausdorff Space}) through the geo-topological transformation is still a \textit{dense} subset of the topological space: since $\mathcal{S} \subset \mathcal{X} \Rightarrow f(\mathcal{S}) \subset f(\mathcal{X})$, and from the dense property we have $f(\mathcal{X}) \subset f(\bar{\mathcal{S}})$, and since $f$ is continuous, $f(\mathcal{X})\subset f(\bar{\mathcal{S}})\subset \bar{f(\mathcal{S})}$, then we proved that $f(\mathcal{S})$ is \textit{dense} in $f(\mathcal{X})$. 
\end{proof}

\begin{corollary}
Since ${\mathcal{V}}^{2*}(X,Y)$ is defined within a \textit{separable} Hilbert Space, it is a \textit{semi-metric of negative type}, and can therefore define a \textit{distance-induced kernel}.
\end{corollary}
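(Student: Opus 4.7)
The plan is to string together the three preceding results: Theorem 4.3 (the GT-transformed distance lives in a separable space), the Lyons (2013) fact that separable Hilbert spaces are of strong negative type, and Lemma 4.2 (a semi-metric of negative type induces a kernel via the basepoint construction). The corollary is then essentially a one-line composition, provided one checks that the GT-transformed object $\rho^{*} := f \circ \rho$ with $f = GT_{l,u}$ still qualifies as a semi-metric of negative type rather than just as an element of a separable space.

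First I would verify the semi-metric axioms for $\rho^{*}(x,x') = f(\|x-x'\|)$: symmetry is inherited from the Euclidean norm, non-negativity follows from the non-negativity of the piecewise definition of $f$, and $\rho^{*}(x,x) = f(0) = 0$ since $0 \leq l$. Second, I would apply Theorem 4.3 to place $\rho^{*}$ inside a separable Hilbert space, and then invoke the Lyons result (cited in the excerpt) that separable Hilbert spaces are of \emph{strong} negative type, which trivially implies (ordinary) negative type. With the three ingredients (separability, semi-metric, negative type) in hand, Lemma 4.2 immediately yields the distance-induced kernel $k(z,z') = \tfrac{1}{2}\bigl[\rho^{*}(z,z_0) + \rho^{*}(z',z_0) - \rho^{*}(z,z')\bigr]$ for any fixed basepoint $z_0$, and $\mathcal{V}^{2*}(X,Y)$ can then be identified with the RKHS-based statistic built from $k$, completing the argument.

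The main obstacle I anticipate is not separability (which Theorem 4.3 already hands us) but rather confirming that negative type genuinely survives the geo-topological transformation. In general a monotonic transform of a negative-type metric need not remain of negative type --- for example, $\rho^{2}$ fails, while $\sqrt{\rho}$ succeeds --- so one cannot appeal to monotonicity alone. I would address this via a Schoenberg-type route: express the piecewise-linear $f$ as a non-negative superposition of maps known to preserve conditional negative definiteness (the identity on $[l,u]$ plus clamping to the constants $0$ and $d_{\max}$, which correspond to quotienting by an equivalence relation and bounding, both of which embed isometrically into a Hilbert space). Equivalently, one exhibits an explicit Hilbert embedding of $(\mathcal{X}, \rho^{*})$ by concatenating the Hilbert embedding of the underlying Euclidean metric restricted to the linear middle regime with delta-type coordinates marking the ``collapsed'' and ``saturated'' pairs. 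Once this embedding is in place, strong negative type transfers from the ambient Hilbert space to $\rho^{*}$, and the corollary follows.
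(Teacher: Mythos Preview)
Your chaining of Theorem~4.3, the Lyons separability fact, and Lemma~4.2 is exactly the (implicit) argument the paper intends --- the corollary is stated without proof as an immediate consequence of those three ingredients. You go further than the paper by flagging the real issue: separability of the image space does not by itself guarantee that $\rho^{*} = GT_{l,u}\circ\|\cdot\|$ is of negative type, and you are right that monotonicity alone is insufficient.

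However, your proposed repair cannot succeed, because the claim you are trying to rescue is false for generic $(l,u)$. Take three points $0,1,2$ on the real line and any GT transform with lower threshold $l\in(1,2)$. Then $\rho^{*}(0,1)=\rho^{*}(1,2)=0$ while $\rho^{*}(0,2)>0$. With coefficients $c=(1,-2,1)$ (which sum to zero) one obtains
\[
\sum_{i,j} c_i c_j\,\rho^{*}(x_i,x_j)=2\,c_0 c_2\,\rho^{*}(0,2)=2\rho^{*}(0,2)>0,
\]
so $\rho^{*}$ is \emph{not} conditionally negative definite. Consequently no isometric Hilbert embedding of $(\mathcal{X},\rho^{*})$ exists, and neither the ``non-negative superposition'' decomposition nor the ``concatenated embedding with delta-type coordinates'' can be carried out: collapsing nearby pairs to distance zero while leaving a farther pair at positive distance is precisely the operation that destroys negative type (and, incidentally, the triangle inequality). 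The corollary as literally stated therefore does not hold for arbitrary $(l,u)\in\mathcal{S}$; at best it holds for the degenerate choice $l=0$, $u=d_{\max}$ where $GT$ reduces to the identity. Your instinct that something needs checking is correct; the resolution is not a sharper embedding argument but either a restriction on the class of transforms or a different route to the kernel connection altogether.
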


We point out that this property suggests that the proposed family of AGTIC can be extended to generalized kernel-based measures, accounting for distances between embeddings of distributions such as maximum mean discrepancies (MMD). 
% Ongoing explorations include comparing these methods with existing kernel methods such as \cite{jitkrittum2016adaptive,liu2018kernel}. 
\textit{Theoretical contributions:} 
In summary, we show that like dCor, AGTIC is a proper independence criterion that returns zero when the two input random variables are independent (Theorem 4.1), evaluates its computational complexity, show that it corresponds to an inner-product on some RKHS (Theorem 4.2) that is separable (Theorem 4.4) and therefore is a distance-induced kernel (Corollary 4.4.1). 
% we showed that the AGTIC is a proper independence criterion, like dCor. 
Unfortunately, we do not presently have theoretical results suggesting a general superiority to dCor. We discuss that our method provides no free lunch. Our empirical study, however, shows that the AGTIC, thanks to its adaptivity, performs robustly across different types of statistical dependency.

% \vspace{-1em}
\section{Empirical Evaluation}
\label{sec:results}
% \vspace{-0.5em}

\begin{table*}[tb]
	   %  \vspace{-2em}
	\centering
	\caption{\textbf{Summary} of power of different statistical tests (rows) for detecting relationships (columns)}
	\resizebox{1\linewidth}{!}{
		\begin{tabular}{ l | l | l | l | l | l | l | l }
EXP & TEST	&  linear & parabolic & sinusoidal & circular & checkerboard & average \\ 
			 \thickhline
\multirow{8}{*}{noise levels} &	AGTIC	&	0.674	$\pm$	0.351	&	\textbf{0.602}	$\pm$	\textbf{0.373}	&	\textbf{0.712}	$\pm$	\textbf{0.384}	&	0.688	$\pm$	0.414	&	\textbf{0.618}	$\pm$	\textbf{0.410}	&	\textbf{0.628}	$\pm$	\textbf{0.396}	\\
& MI	&	0.502	$\pm$	0.404	&	0.534	$\pm$	0.354	&	0.586	$\pm$	0.381	&	0.702	$\pm$	0.391	&	0.602	$\pm$	0.423	&	0.565	$\pm$	0.386	\\
& dCor	&	0.676	$\pm$	0.353	&	0.550	$\pm$	0.425	&	0.452	$\pm$	0.424	&	0.446	$\pm$	0.448	&	0.210	$\pm$	0.140	&	0.467	$\pm$	0.392	\\
& Hoeffding's D	&	0.650	$\pm$	0.356	&	0.460	$\pm$	0.414	&	0.460	$\pm$	0.431	&	0.498	$\pm$	0.475	&	0.200	$\pm$	0.112	&	0.454	$\pm$	0.393	\\
& HSIC	&	0.504	$\pm$	0.416	&	0.556	$\pm$	0.363	&	0.324	$\pm$	0.369	&	0.670	$\pm$	0.439	&	0.196	$\pm$	0.082	&	0.450	$\pm$	0.383	\\
& MIC	&	0.344	$\pm$	0.335	&	0.378	$\pm$	0.299	&	0.586	$\pm$	0.288	&	0.438	$\pm$	0.366	&	0.310	$\pm$	0.193	&	0.411	$\pm$	0.305	\\
& rdmCor	&	0.426	$\pm$	0.406	&	0.534	$\pm$	0.420	&	0.028	$\pm$	0.023	&	\textbf{0.728}	$\pm$	\textbf{0.408}	&	0.036	$\pm$	0.042	&	0.350	$\pm$	0.415	\\
& R$^2$	&	\textbf{0.710}	$\pm$	\textbf{0.325}	&	0.136	$\pm$	0.104	&	0.054	$\pm$	0.053	&	0.028	$\pm$	0.036	&	0.072	$\pm$	0.043	&	0.200	$\pm$	0.300	\\ 
            \hline
        \multirow{8}{*}{sample sizes} &	    AGTIC & \textbf{1.000} $\pm$ \textbf{0.000} & \textbf{1.000} $\pm$ \textbf{0.000} & \textbf{1.000} $\pm$ \textbf{0.000} & \textbf{1.000} $\pm$ \textbf{0.000} & 0.950 $\pm$ 0.218 & \textbf{0.990} $\pm$ \textbf{0.022} \\
& MI & 0.995 $\pm$ 0.011 & 0.985 $\pm$ 0.019 & 0.991 $\pm$ 0.015 & 0.995 $\pm$ 0.010 & \textbf{0.983} $\pm$ \textbf{0.018} & 0.989 $\pm$ 0.005 \\
& Hoeffding's D & \textbf{1.000} $\pm$ \textbf{0.000} & \textbf{1.000} $\pm$ \textbf{0.000} & \textbf{1.000} $\pm$ \textbf{0.000} & \textbf{1.000} $\pm$ \textbf{0.000} & 0.550 $\pm$ 0.497 & 0.910 $\pm$ 0.201 \\
& MIC & 0.984 $\pm$ 0.015 & 0.977 $\pm$ 0.022 & 0.956 $\pm$ 0.160 & 0.891 $\pm$ 0.292 & 0.733 $\pm$ 0.422 & 0.908 $\pm$ 0.105 \\
& dCor & \textbf{1.000} $\pm$ \textbf{0.000} & \textbf{1.000} $\pm$ \textbf{0.000} & 0.900 $\pm$ 0.300 & 0.800 $\pm$ 0.400 & 0.600 $\pm$ 0.490 & 0.860 $\pm$ 0.167 \\
& HSIC & \textbf{1.000} $\pm$ \textbf{0.000} & \textbf{1.000} $\pm$ \textbf{0.000} & 0.850 $\pm$ 0.357 & 0.950 $\pm$ 0.218 & 0.500 $\pm$ 0.500 & 0.860 $\pm$ 0.210 \\
& rdmCor & \textbf{1.000} $\pm$ \textbf{0.000} & \textbf{1.000} $\pm$ \textbf{0.000} & 0.300 $\pm$ 0.458 & 0.000 $\pm$ 0.000 & 0.200 $\pm$ 0.400 & 0.500 $\pm$ 0.469 \\
& R$^2$ & \textbf{1.000} $\pm$ \textbf{0.000} & 0.350 $\pm$ 0.477 & 0.000 $\pm$ 0.000 & 0.000 $\pm$ 0.000 & 0.150 $\pm$ 0.357 & 0.300 $\pm$ 0.417 \\                 
            % \hline
		\end{tabular}
	}  
\label{tab:summary}
\end{table*}

\begin{figure}[tb]
% \vspace{-2em}
\centering
    \includegraphics[width=1\linewidth]{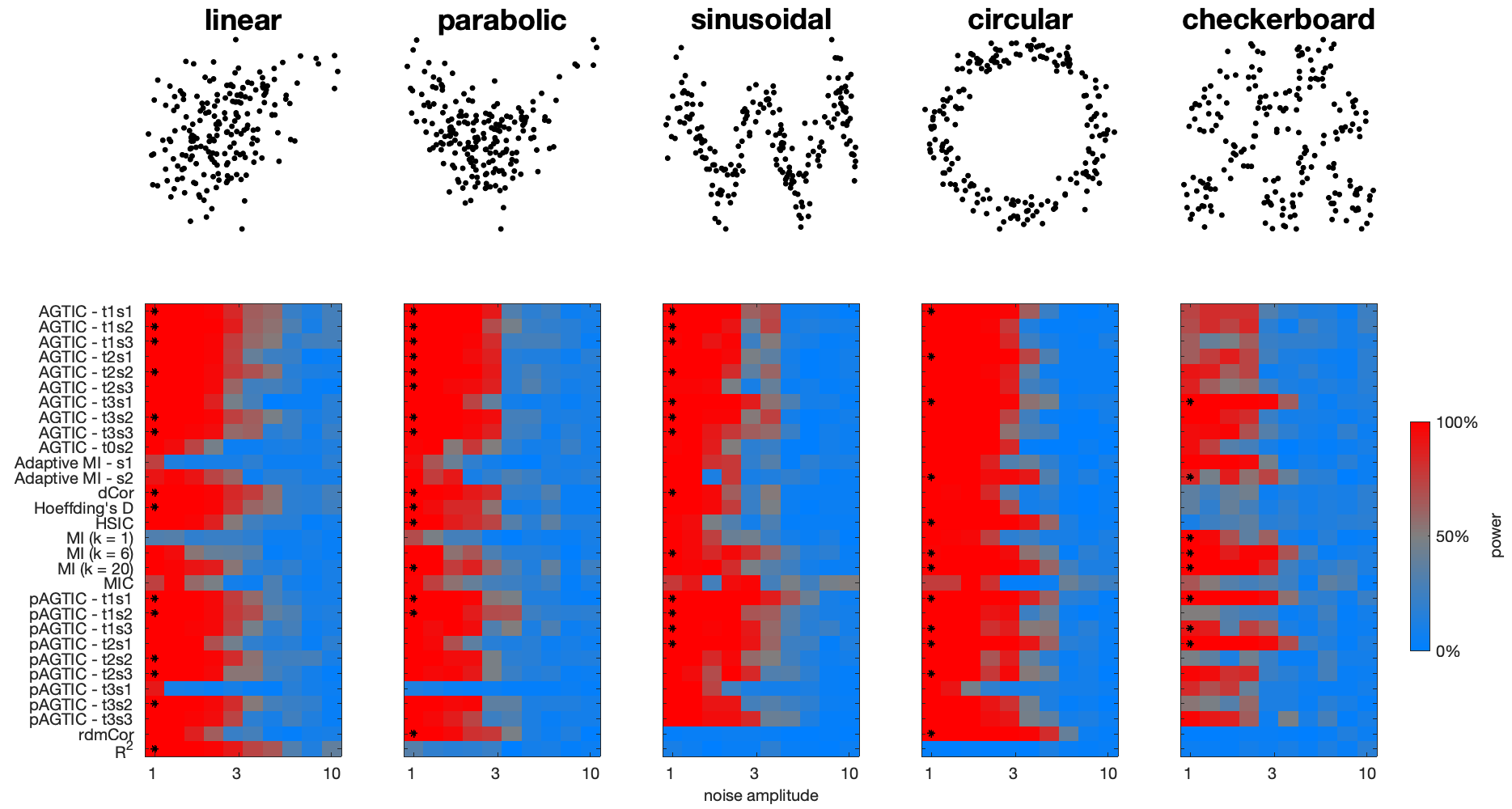}
    \vspace{-1.5em}
\par\caption{\textbf{Statistical power in five association patterns.} Power (color-coded) of different tests (rows) for detecting different forms of dependency (panels) over different noise levels  (horizontal axes). For each pattern, an asterisk indicates that the test retains 50\%-power at a noise level within 25\% of the most sensitive test.}\label{fig:power_noise}
\vspace{-1em}
\end{figure}

\begin{figure*}[tb]
% \vspace{-1em}
\centering
    \includegraphics[width=0.115\linewidth]{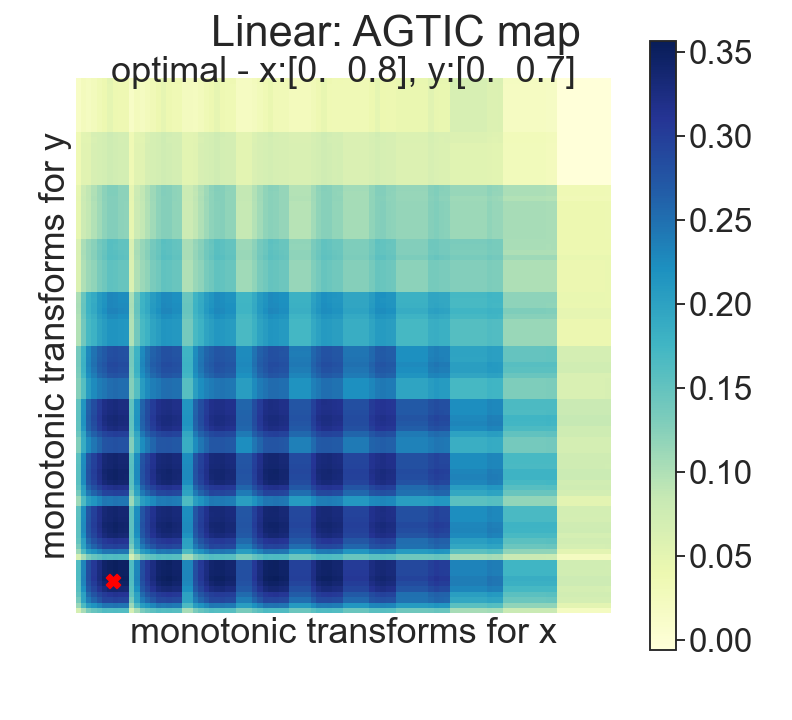}
    \includegraphics[width=0.115\linewidth]{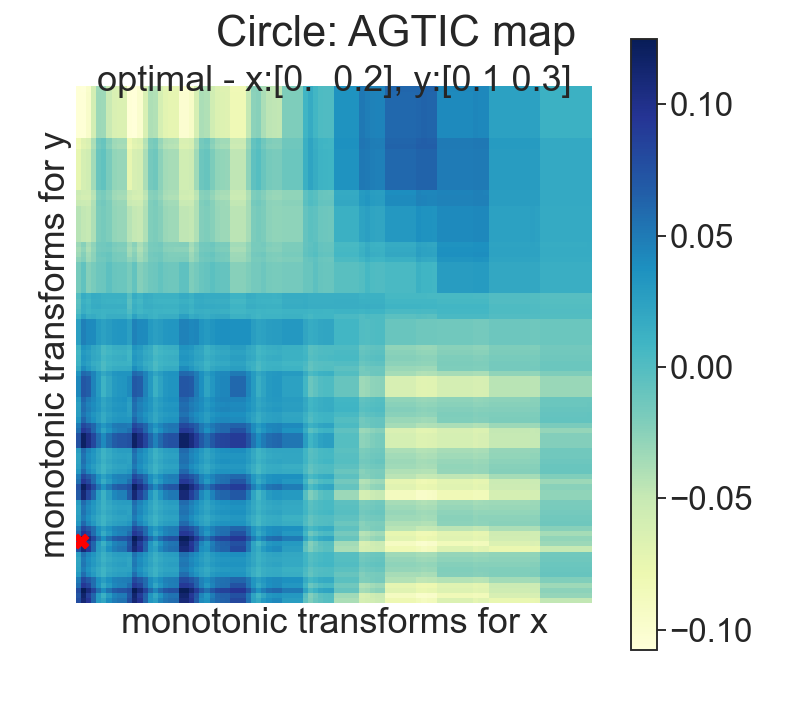}
    \includegraphics[width=0.115\linewidth]{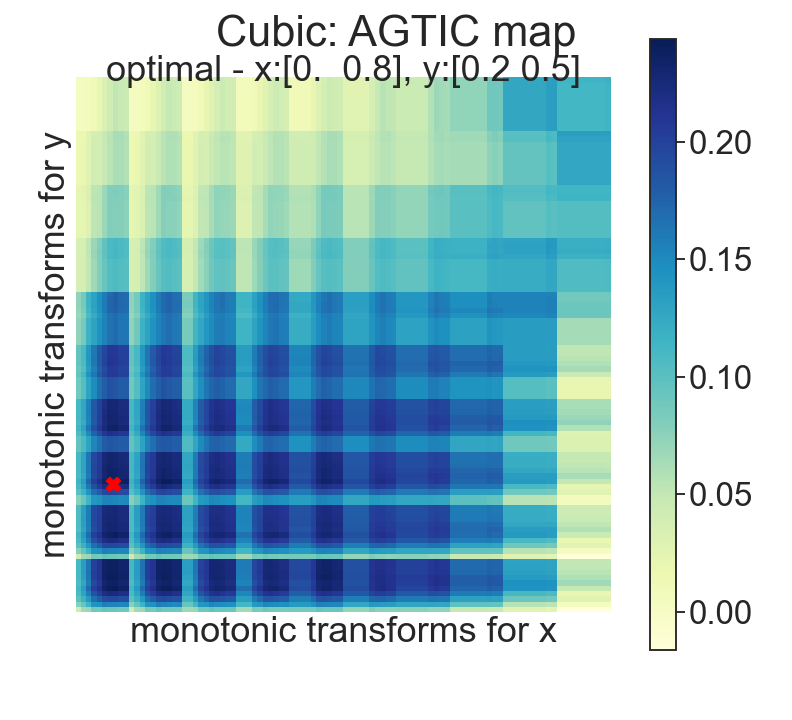}
    \includegraphics[width=0.115\linewidth]{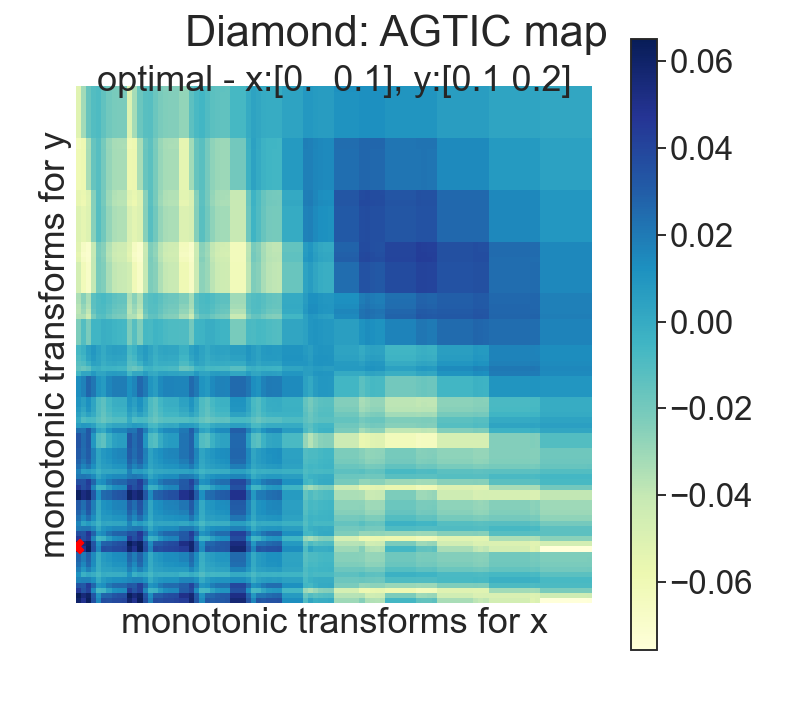}
    \includegraphics[width=0.115\linewidth]{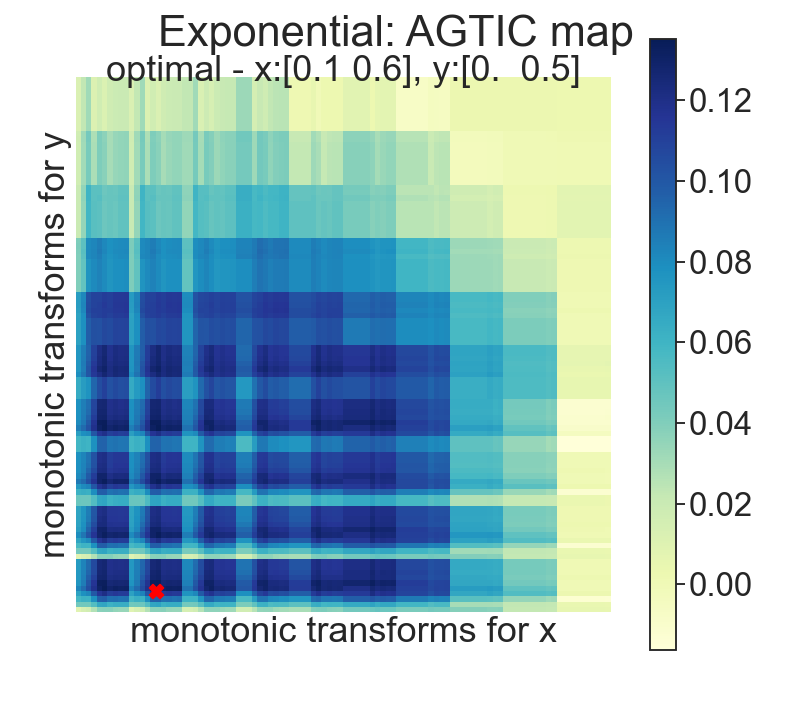}
    \includegraphics[width=0.115\linewidth]{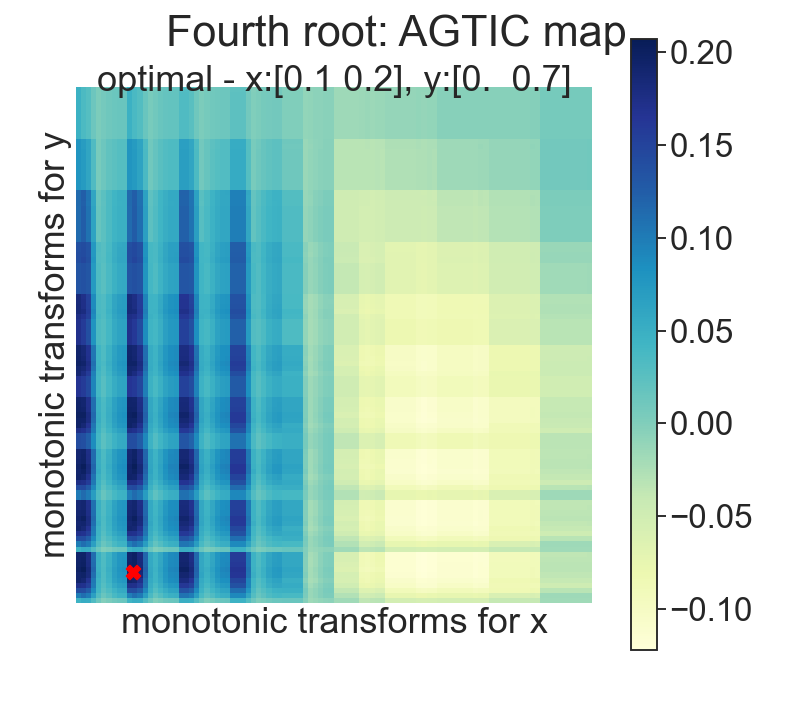}
    \includegraphics[width=0.115\linewidth]{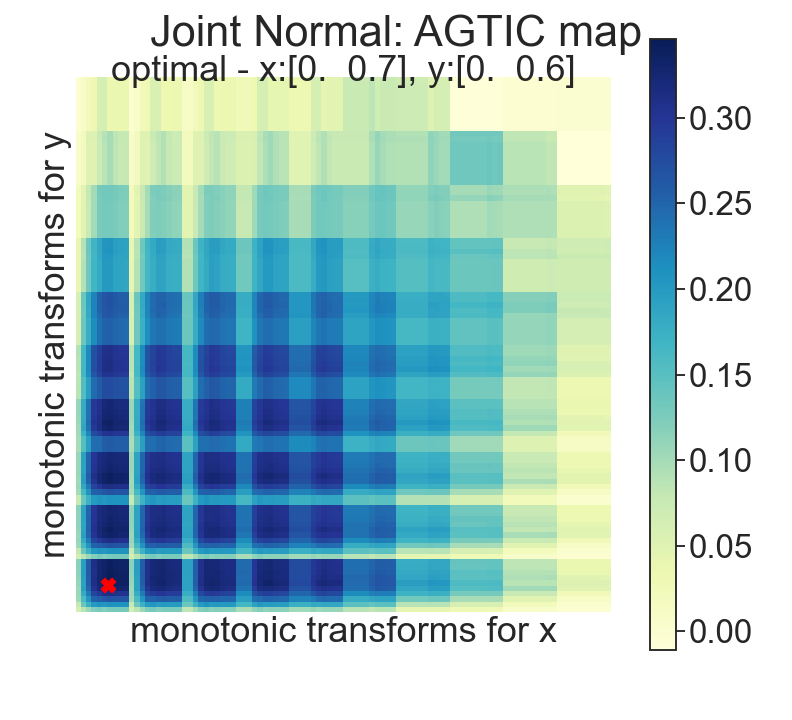}
    \includegraphics[width=0.115\linewidth]{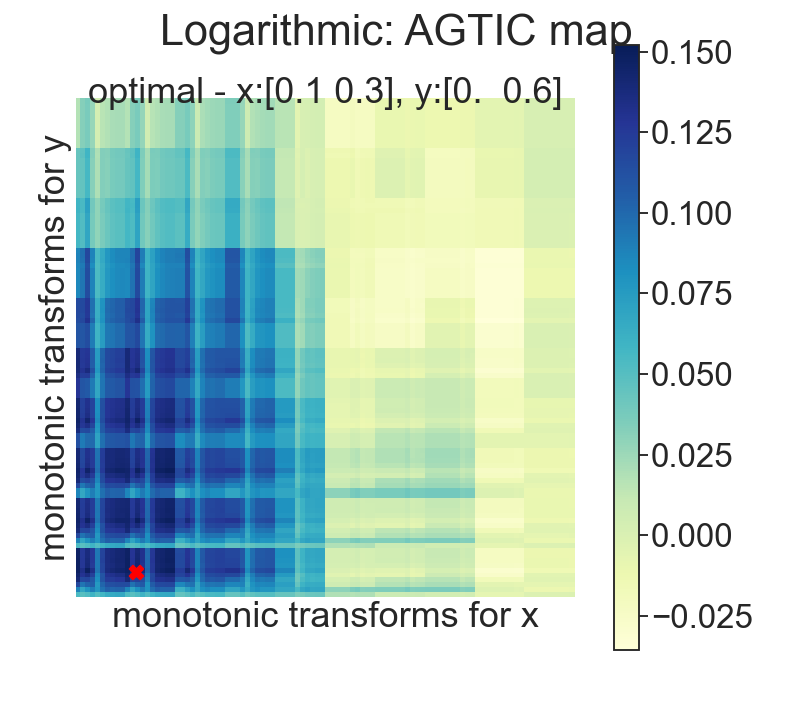}
    \includegraphics[width=0.115\linewidth]{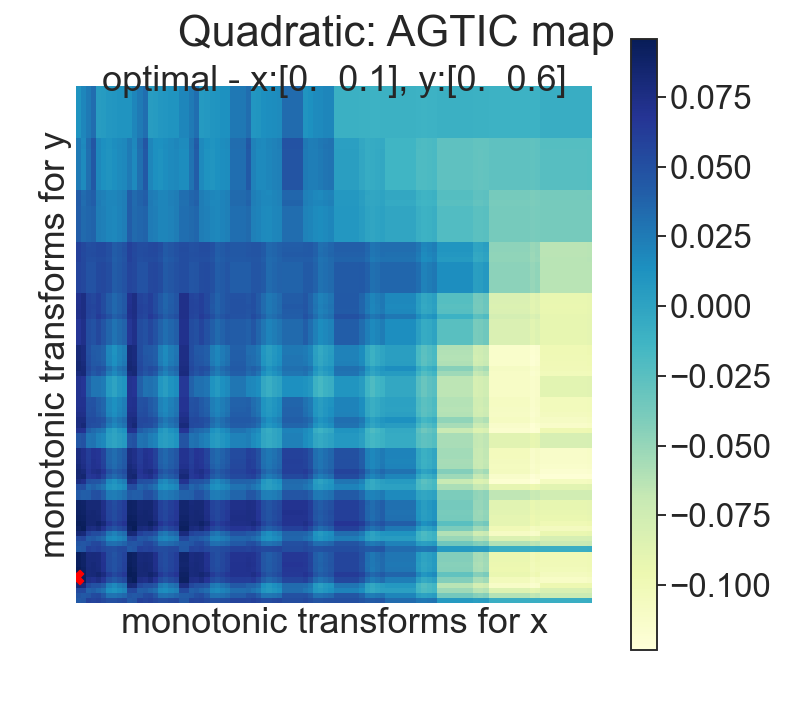}
    \includegraphics[width=0.115\linewidth]{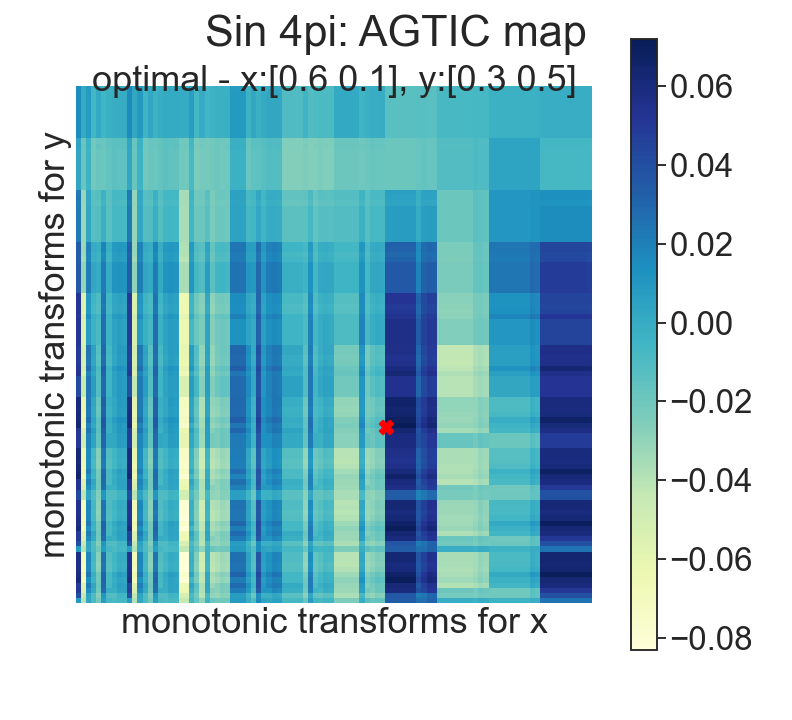}
    \includegraphics[width=0.115\linewidth]{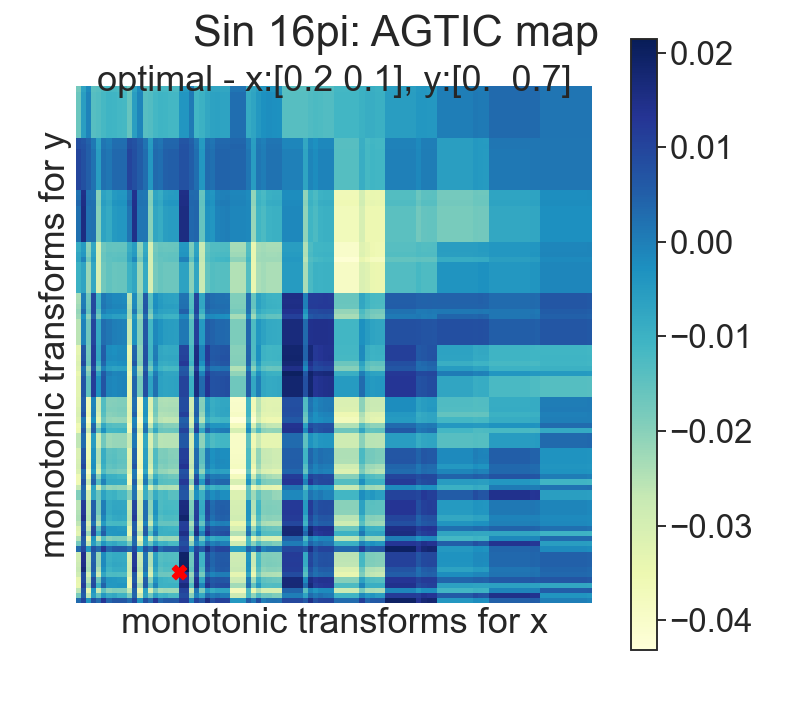}
    \includegraphics[width=0.115\linewidth]{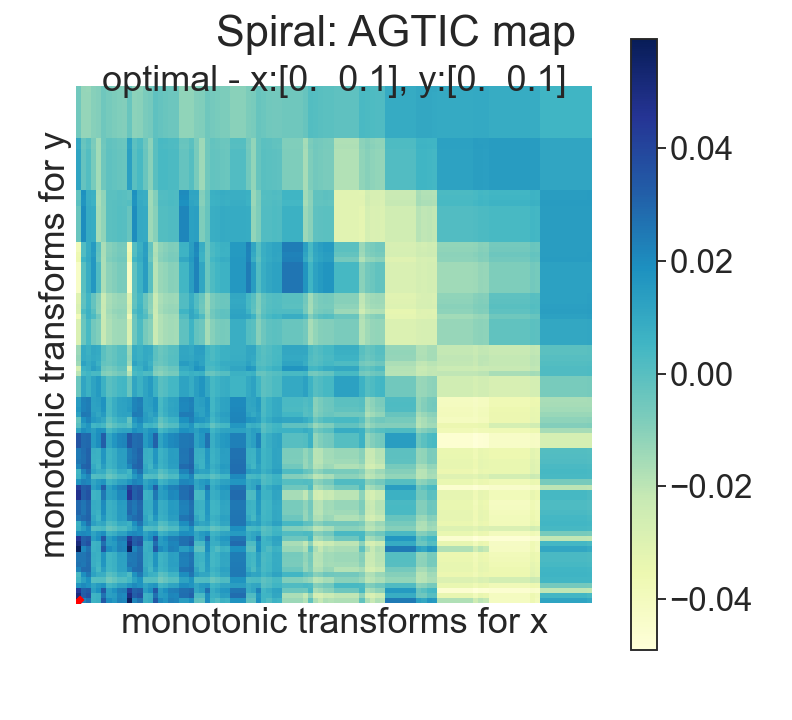}
    \includegraphics[width=0.115\linewidth]{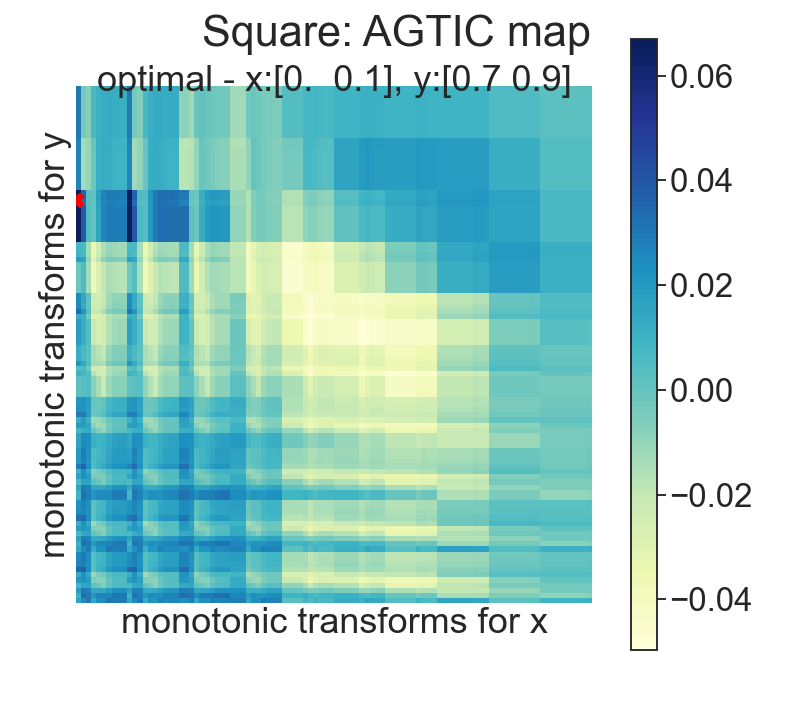}
    \includegraphics[width=0.115\linewidth]{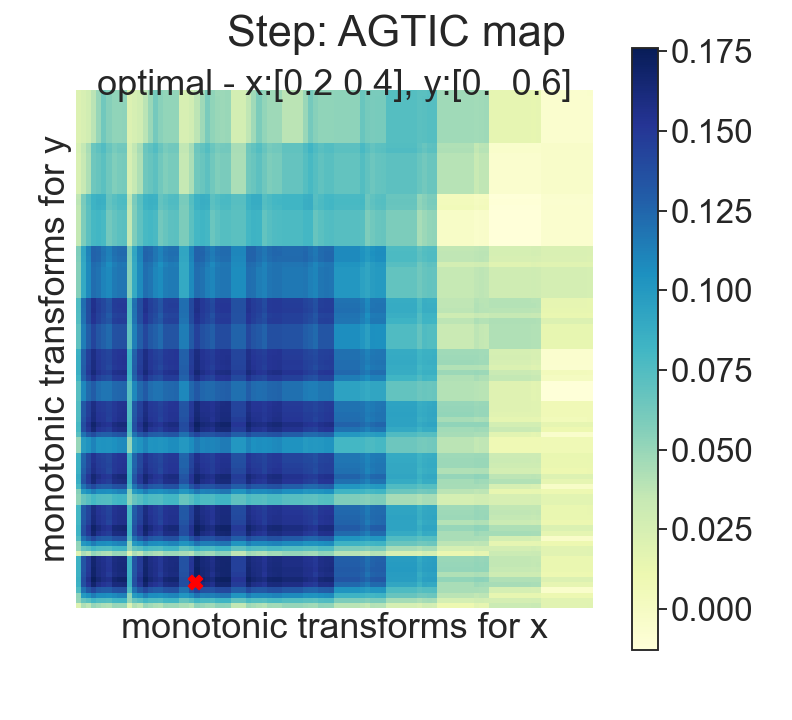}
    \includegraphics[width=0.115\linewidth]{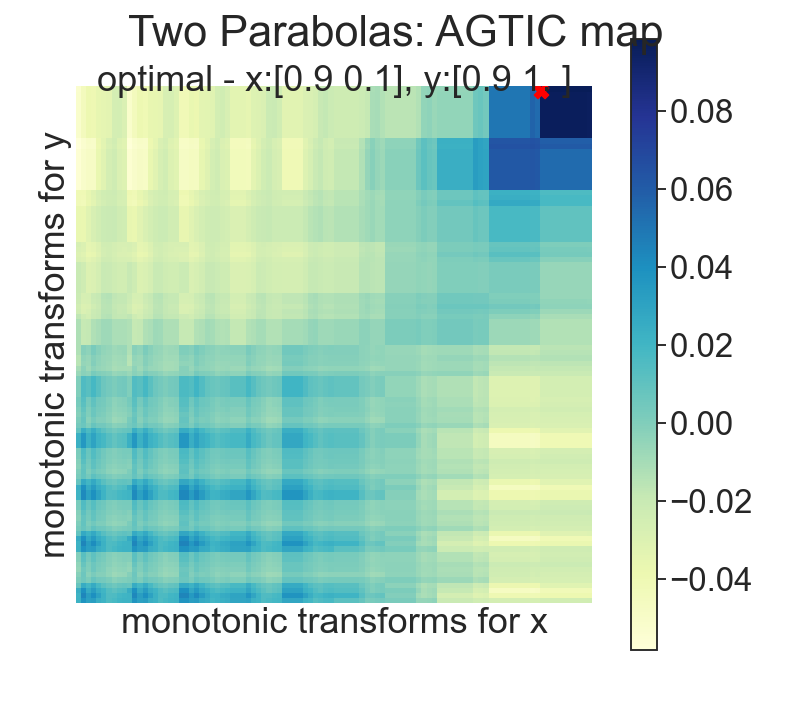}
    \includegraphics[width=0.115\linewidth]{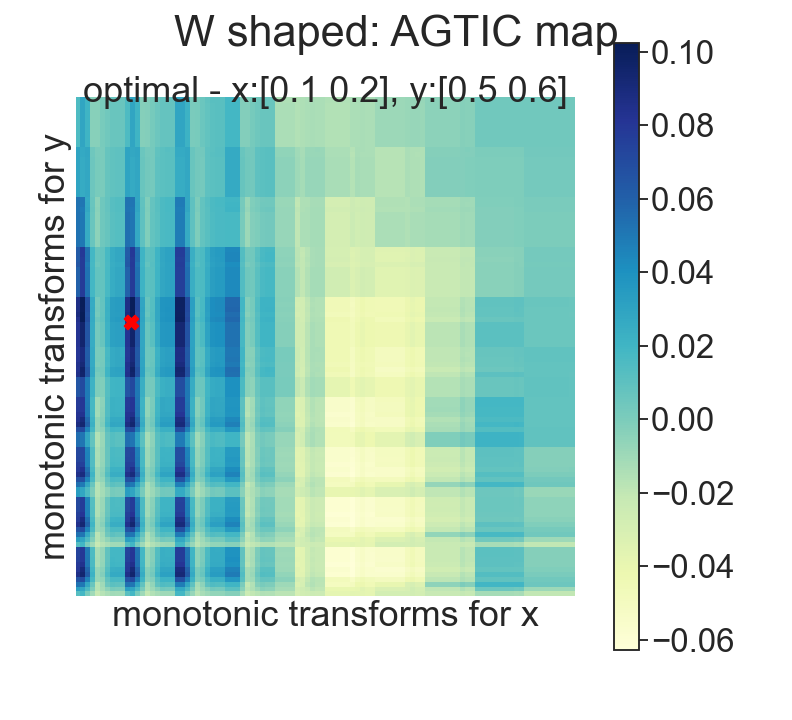}
    \vspace{-1em}
\par\caption{\textbf{The AGTIC maps provide insights to the distinctive geo-topology of each dependency. } The x and y axes of each heatmap denote the sets of threshold pairs $(l,u)$, the color denotes the magnitude of $\mathcal{V}^{2*}(X,Y;GT(\cdot;l,u))$ and the red cross denote the maximum point (i.e. the optimal threshold pairs).}\label{fig:map}
% \vspace{-2em}
\end{figure*}

\begin{figure}[tbh]
         \vspace{-1.5em}
\centering
    \includegraphics[width=0.75\linewidth]{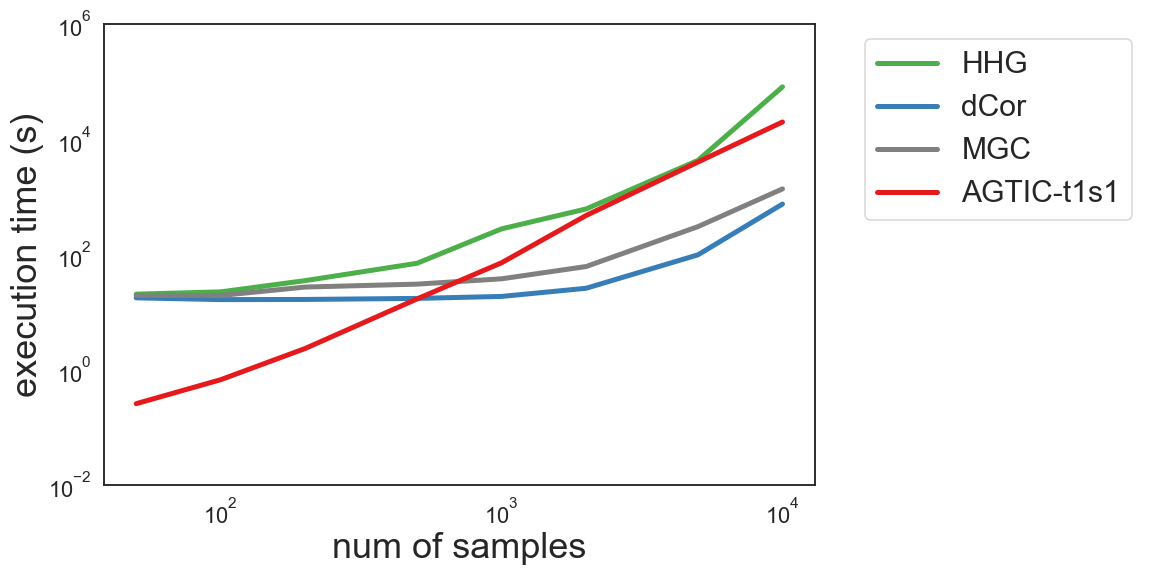}
         \vspace{-0.5em}
\par\caption{\textbf{Run time:} \textbf{(e)} execution wall time of AGTIC, dCor [1,2], MGC [11] and HHG [12] over different numbers sample sizes of 1d linear dependency (with AGTIC in its most time consuming variant).}\label{fig:time} 
         \vspace{-1em}
\end{figure}

\begin{figure*}[tbh]
        %  \vspace{-2em}
\centering
    \includegraphics[width=0.17\linewidth]{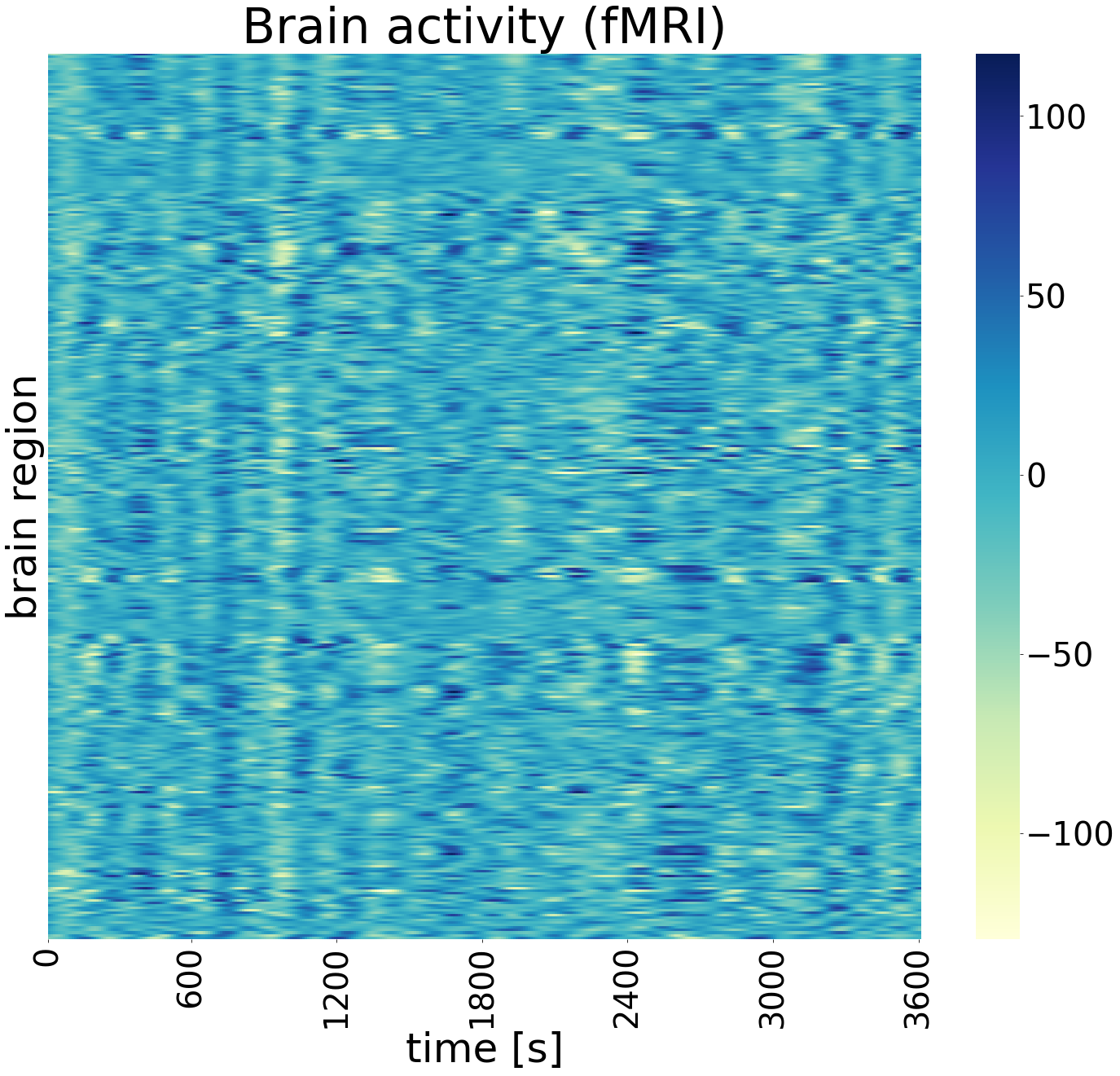}\hfill
    \includegraphics[width=0.19\linewidth]{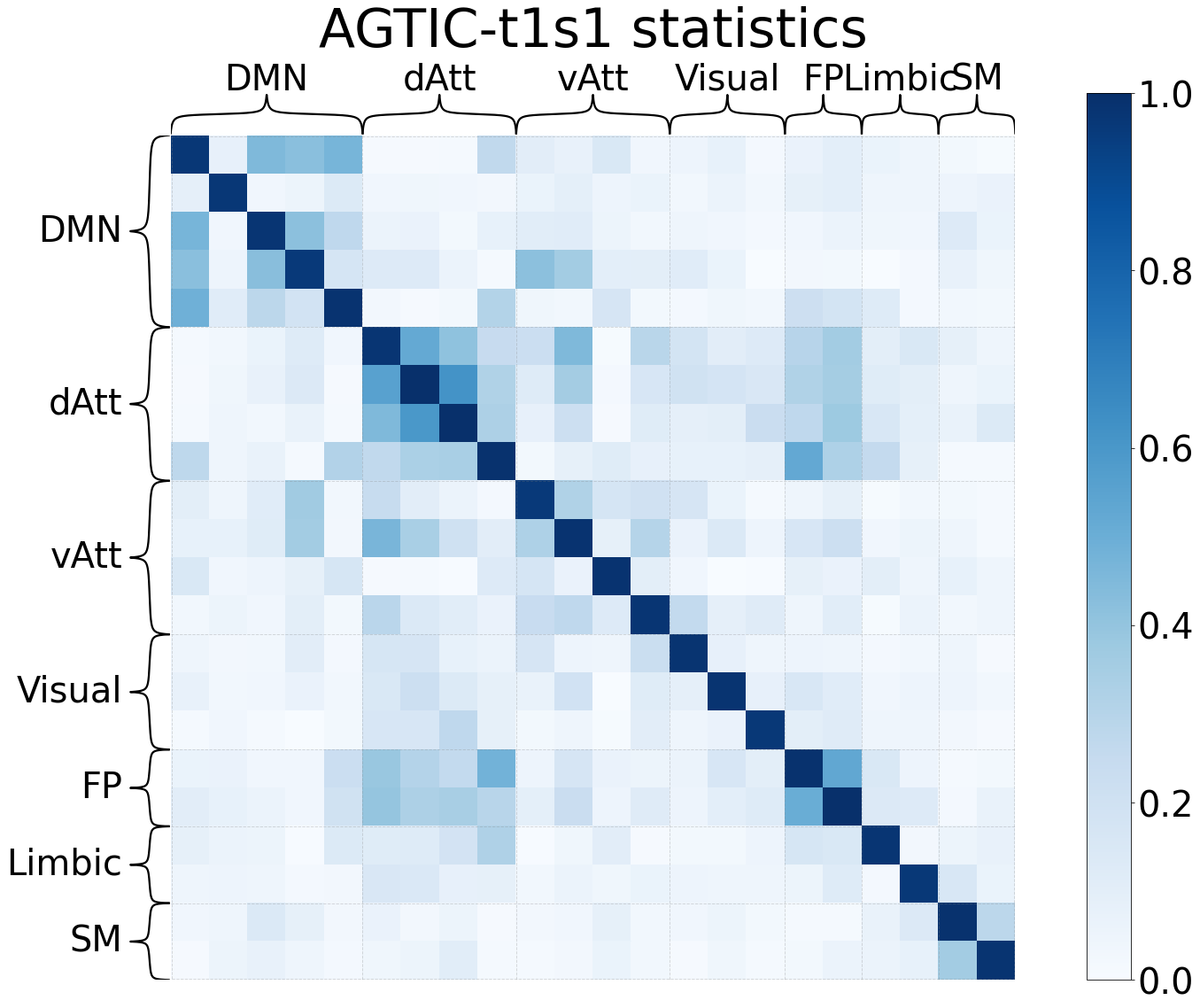}\hfill
    \includegraphics[width=0.19\linewidth]{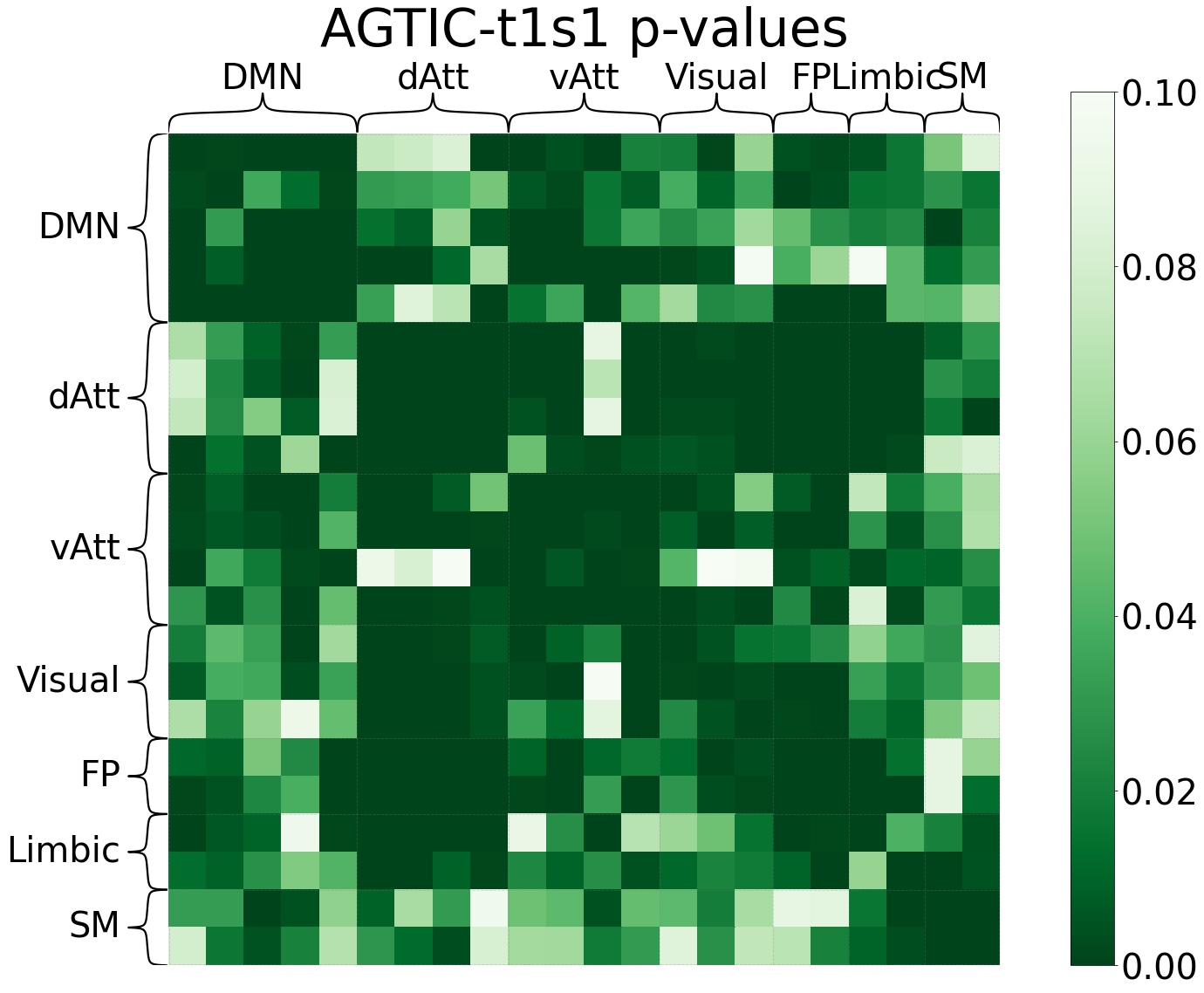}\hfill
    \includegraphics[width=0.19\linewidth]{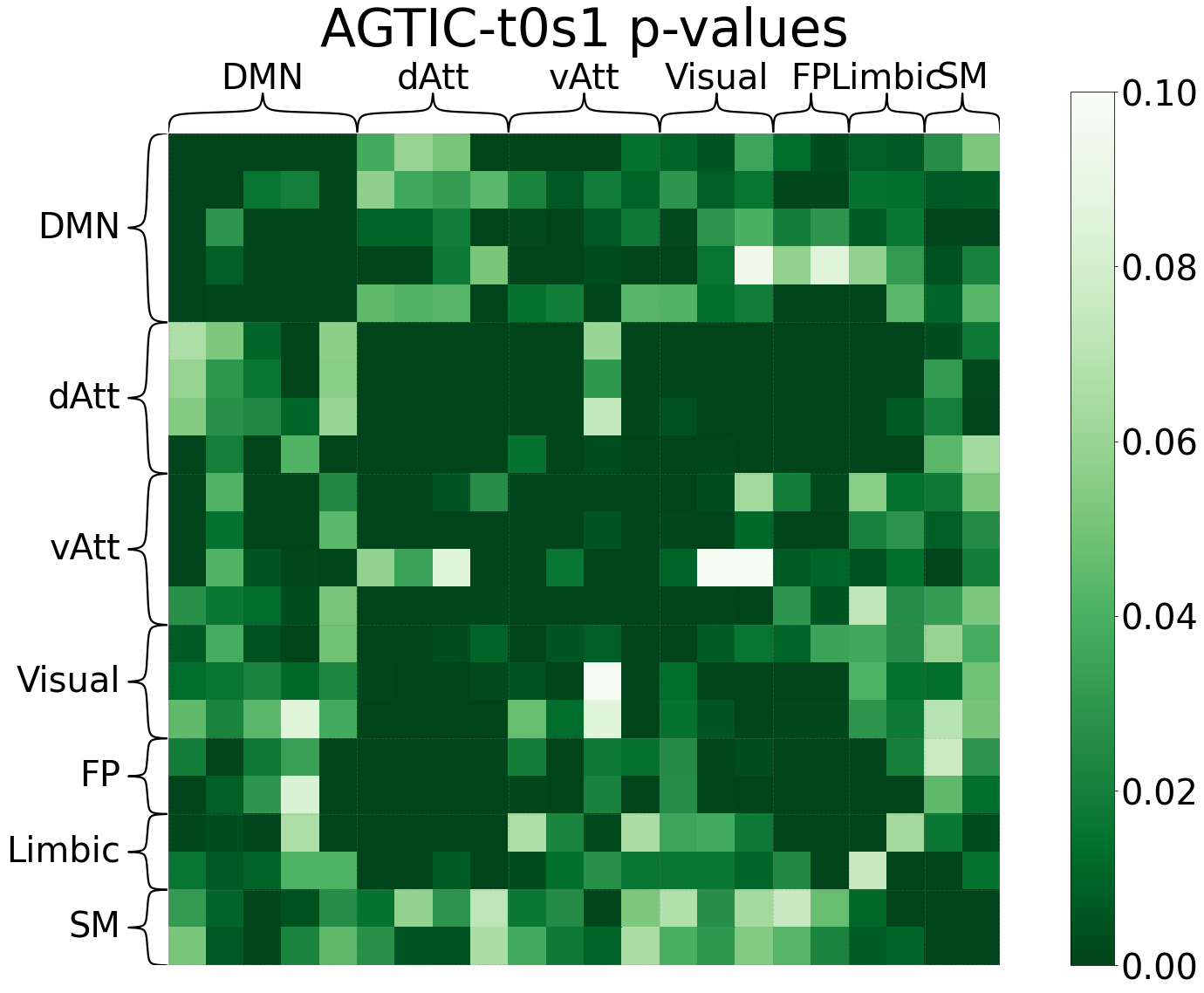}\hfill
    \includegraphics[width=0.19\linewidth]{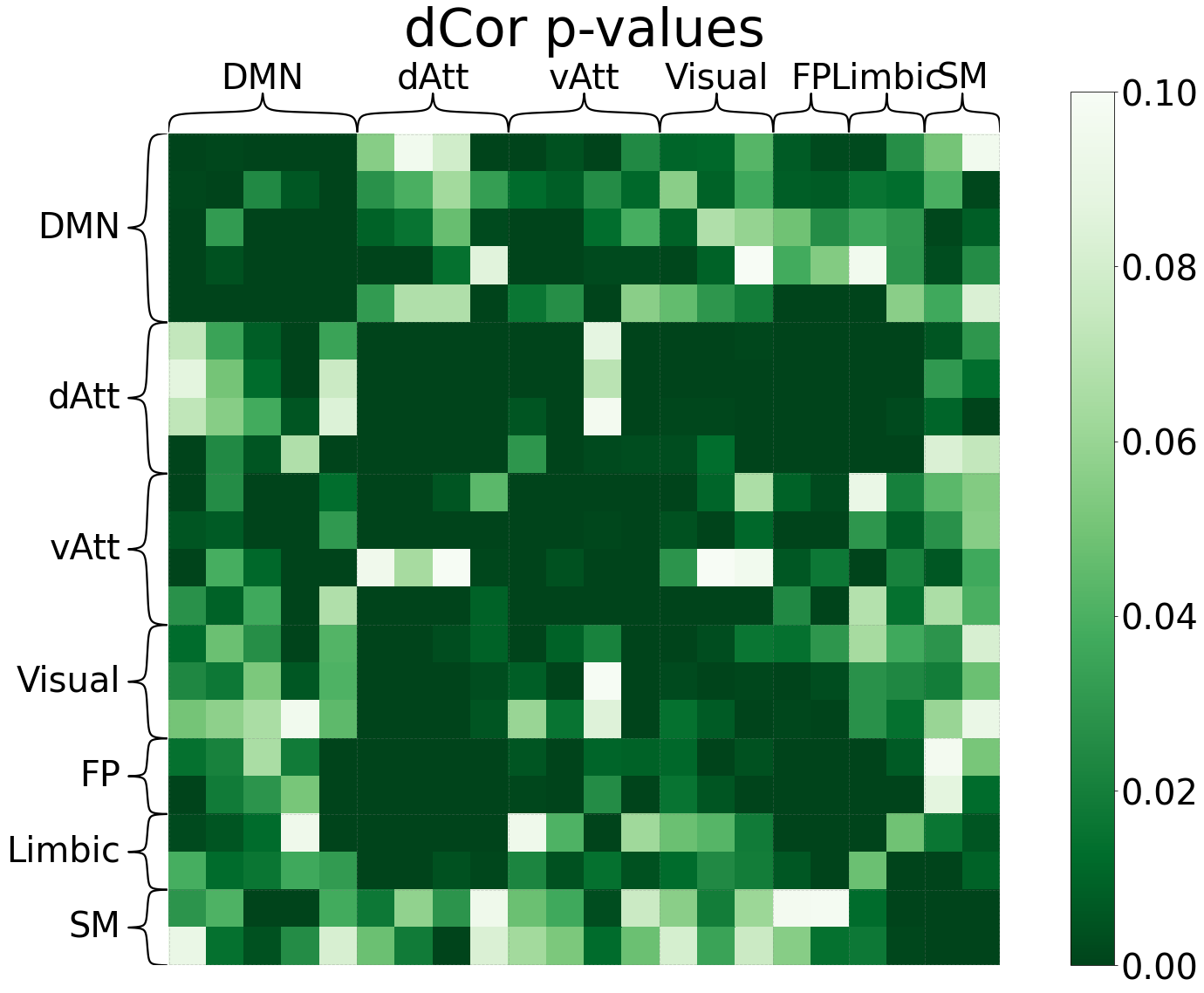}\hfill
         \vspace{-0.5em}
\par\caption{\textbf{Neural data:} \textbf{(b)} HCP data \textbf{(b)} AGTIC among brain regions; \textbf{(c,d)} AGTIC p-values; \textbf{(e)} dCor p-values.}\label{fig:real} 
         \vspace{-1em}
\end{figure*}

\textbf{Evaluation metric and benchmarks.}
In spirit of \textit{no free lunch} in Statistics, \cite{simon2014comment} stressed the importance of statistical power to evaluate the capacity to detect bivariate association. In our context, the \textit{statistical power} of a dependence measure is the fraction of data sets generated from a dependent joint distribution that yield a significant result (with the false-positives rate controlled at 5\%). \cite{simon2014comment} and \cite{kinney2014equitability} compared several independence measures and showed that dCor \cite{szekely2007measuring, szekely2009brownian} and KNN mutual information estimates (MI) \cite{kraskov2004estimating} have substantially more power than MIC \cite{reshef2011detecting,reshef2013equitability}, but adaptive approaches like ADIC were neither proposed nor tested. To understand the behavior of these adaptive dependence measures, we investigated whether their statistical power can compete with dCor, MIC, MI and representational dissimilarity matrix correlation (rdmCor) \cite{kriegeskorte2008representational}. As a fair comparison, other than the nonparametric or rank-based methods, here we adopted our maximum selection subroutines to one of the popular parametric dependence measure, K-nearest neighbour mutual information estimator, such that the hyperparameter $k$ is adaptively defined, denoted \textit{Adaptive MI - s1} and \textit{Adaptive MI - s2} (more details on parameter settings in Appendix \ref{sec:paras}).

\textbf{Multivariate association patterns. }
We considered sixteen common multivariate patterns (Fig. S\ref{fig:data}) and evaluate the statistical power of the statistics with different additive noises, sample sizes, dimensions and combinatorial dependencies. Here we report five distinct patterns: linear, parabolic, sinusoidal, circular and checkerboard, and describe the \textit{free lunch}: which is best, where, and how.

% We performed experiments on synthetic data to validate the empirical performance of AGTIC versus the independence criteria listed in section \ref{sec:methods}. To simulate the most common nonlinear relationships, we draw samples from linear, parabolic, sinusoidal, circular, and checkerboard dependencies. In each experiment, the statistical power was defined and computed as the fraction of true datasets yielding a statistic value greater than 95\% of the values yielded by the corresponding null datasets, with a theoretical guarantee that the false positive rates is below 5\%. We now describe some details of the experiments (parameter selection details in Appendix \ref{sec:paras}).

% \begin{table}[tbh]
% 	\centering
% 	\caption{Synthetic data with noise $\epsilon$ for nonlinear relationships}
% 	\resizebox{\columnwidth}{!}{
% 		\begin{tabular}{ l | c | c }
% 		  relationships & $x$ & $y$ \\ \hline
% 			linear (l) & $x$ & $2\times x + \epsilon$ \\
% 			parabolic (p) & $x$ & $2\times x^2 +  \epsilon$ \\
% 			sinusoidal (s) & $x$ & $2\times \sin(x) + \epsilon$ \\
% 			circular (c) & $10\times \cos(\theta) +  \epsilon$ & $10\times \sin(\theta) + \epsilon$ \\ 
% 			checkerboard (k) & $10 \times x_{steps} + \epsilon$  & $10 \times y_{steps} +  \epsilon$ \\ 
%             \hline
%             random (r) & $\epsilon$ & $\epsilon$ \\
%             \hline
%         \end{tabular}
% 	}
% 	\label{table:relationships}
% \end{table}

\textbf{Resistance to additive noise.} Fig. \ref{fig:power_noise} shows the assessment of statistical power for the competing nonlinear dependence measures as the variance of a series of zero-mean Gaussian noise amplitude which increases logarithmically over a 10-fold range. The heat maps show power values computed for each statistics.
% AGTIC, pAGTIC, R$^2$, dCor \cite{szekely2007measuring}, Hoeffding’s D \cite{hoeffding1948non}, rdmCor \cite{kriegeskorte2008representational}, KNN estimates of mutual information with $k = 1$, $6$, or $20$ \cite{kraskov2004estimating}, MIC \cite{reshef2011detecting}, HSIC \cite{gretton2008kernel} and the adaptive version of KNN mutual information estimates. 
For each pattern, the asterisks indicate that the statistic that have a noise-at-50\%-power that lies within 25\% of this maximum. Among all competing measures, our proposed AGTIC family ranked best in 4 out of 5 relationships (except linear) and best by average (Table \ref{tab:summary}). 
As expected, R$^2$ was observed to have optimal power on the linear relationship, but it is worth noting that all the AGTIC or pAGTIC algorithms adapt to the linear pattern by choosing the most informative threshold pairs to reach a near optimal performance, while R$^2$ shows negligible power on the other relationships which are mirror symmetric as expected. rdmCor as the correlation coefficient on the pairwise distances of the data, shows optimal power in the circular relationship, but poor performance in all others. The behaviors of dCor and Hoeffding's D are very similar across all relationships, and maintained substantial statistical power on all but the checkerboard relationships. On all but the sinusoidal relationship, MIC with $B = N^{0.6}$ as suggested by \cite{reshef2011detecting} was observed to have relatively low statistical power, consistent with the findings of \cite{simon2014comment} and \cite{kinney2014equitability}. The overall performance of the KNN mutual information estimator using k = 1, 6, and 20 differ from case to case: larger k's performed better in complicated relationships like checkerboard  and circular pattern, but they performed poorly comparing the adaptive approaches in linear and parabolic relationships - the two relationships are more representative of many real-world datasets than other relationships. Comparing to our adaptive selection of parameters, the KNN mutual information estimator also has the important parametric disadvantage to demand the user to specify k without any mathematical guidelines, while there is no guarantee larger k's increases the statistical power (as in sinusoidal case). As shown here with three arbitrarily set k's, they can significantly affect the power of one’s mutual information estimates, supporting the discovery of \cite{kinney2014equitability}. The adaptive MI performed slightly better than arbitrarily defined k but the overall performance is not optimal. (full results in Table S\ref{table:1DpowerNoise}).

% Full power curves are shown in Fig. \ref{fig:power_nl}-\ref{fig:power_nk}. 

% Table \ref{table:1DpowerNoise} shows the average statistical power across different noise amplitudes for different dependence measures in the five relationships. Our proposed family of adaptive independence tests ranked the best in parabolic, sinusoidal and checkerboard relationships, and ranked among the top 5 in all five association patterns. 

\textbf{Robust in different sample sizes. } 
100 repetitions of observations with sample size over a 20-fold range from 20 to 400 were generated, in which the input sample was uniformed distributed on the unit interval. 
% Next, we regenerated the input sample randomly in order to generate i.i.d. versions as the null distribution with equal marginals.
Table \ref{tab:summary} shows the average statistical power across different sample sizes for different dependence measures in the five relationships. Among all the competing measures, the proposed family of adaptive independence tests demonstrated good robustness in non-functional association patterns (ranked top 1 in all but checkerboard, and top 5 in all relationships). Comparing the three subroutines, \textit{AGTIC - s1} appears more robust than the other two. The three geo-topological transforms each have their advantages for different relationship types (full results in Table S\ref{table:1DpowerObs}).  

\textbf{Adaptive to combinatorial dependence. } 
50 repetitions of $50 \times 2$ samples were generated, such that each of the two dimensions follows either one of the 5 association patterns (linear, parabolic, sinusoidal, circular, or checkerboard) or random relationship (r), to form a combinatorial two-dimensional dependence. Table S\ref{table:2Dpower} shows the statistical power across 20 combinatorial dependence for different statistics. Among all, our methods are top 1 for all but sinusoidal-random (s-r) and checkerboard-random (k-r) relationships, and ranked among top 5 in all relationships. As expected, the statistical power in the pairs of single patterns (l-l, p-p, s-s, c-c, k-k) are higher than the pairs with different patterns, implying certain dependence interference.
% Like the sample size experiment, dCorMax is shown to be more robust than other two AGTIC statistics. 

% \subsubsection{Applicable for arbitrary dimensions}

% In this toy example for multi-dimensional association, 10 repetitions of $50$ samples were generated such that $X$ has a dimension of 2, and $Y$ has a dimension of 5. We generated 5 clusters of multi-Gaussian distributions with a constant noise level and varying variances and subpopulations (sample size in each clusters). Among the previously compared methods, only dCor, AGTIC, ADsnpIC, ADdsnpIC, AKIC, HSIC, CD3, rdmCor, and pAGTIC can be applied to arbitrary dimensions. Other than pAGTIC3 - dCorMax (0.4), they all have fairly good power (1.0). In future work, we can extend it to any arbitrary dimensions, for example, $X$ as fMRI recordings in each voxels, $Y$ as behavioural information of different categories.  

\textbf{Insightful on granularity of dependence.} 
Optimal thresholds are recorded during the bivariate association experiments with increasing noise amplitude (see Table S\ref{table:optThresh} and Fig. S\ref{fig:opts} for the optimal thresholds). Fig. \ref{fig:map} reported the AGTIC maps for 16 different 1D relationships where the optimal threshold is marked the red cross. For linear or patch-like patterns, the optimal thresholds usually involved a large distance range ($u-l$ is large), while in skeleton-like patterns (e.g. spiral, circle), this range tends to be a small value, emphasizing finer structures of the data. From the grid-like structures in the maps, we can even decipher the frequency in the sin 4$\pi$ and 16$\pi$. We also noticed that similar dependencies yield similar maps. For instance, the ``Step'' and ``Exponential'' are geometrically similar despite analytically distinct. Thereby, AGTIC can help us understand the relationship in data.

\textbf{Run time analysis.} Fig. \ref{fig:time} records running time for different methods. AGTIC (in its most expensive form) has complexity $O(k^4)$, higher than dCor, but empirically we observed a lower than baseline run time when N is small (partially due to our parallel implementation). When N is large ($>10^4$), AGTIC can be expensive, for which we provided an alternative cheaper solution which replaces exhaustive searching of $k^4$ threshold sets with iterative sampling (section \ref{sec:properties}).

\textbf{Proof of concept in a real-world example.} We now apply AGTIC to the analysis of connectivity in the human brain. We analyzed functional MRI brain-activity data from a human subject (ID: 100307) of the Human Connectome Project (HCP). The cortex was parcellated into 180 regions per hemisphere using HCP multi-modal parcellation atlas (\cite{glasser2016multi} and Fig. \ref{fig:real}a), among which 22 parcels were selected as regions-of-interest (ROIs) for connectivity analysis. We computed the AGTIC for each pair of ROIs and report the statistics and their corresponding p-values (Fig. \ref{fig:real}b,c). We see blockwise mutual information among sets of brain regions. The p-value matrix of AGTIC illustrates the improved power compared to dCor in this practical application (Fig. \ref{fig:real}c,d,e). The power advantage is more formally empirically demonstrated in synthetic data using simulations, where the ground-truth is known.

\section{Conclusion}

Distance matrices capture the representational geometry and can be subjected to monotonic nonlinear transforms to capture the representational topology at different granularities. We introduced a novel family of independence tests that adapt the parameters of these geo-topological transforms so as to maximize sensitivity of the distance covariance to statistical dependency between two multivariate variables. The proposed test statistics are theoretically sound and perform well empirically, providing robust sensitivity across a wide range of univariate and multivariate relationships and across different noise levels and amounts of data. 
% The variety of performance levels of all these fundamentally different methods across different association patterns also emphasizes the {no-free-lunch} theorem. Each method performs well in some scenarios but not others. The adaptive geo-topological distance-covariance approach to detecting dependence deserves further theoretical and empirical attention in future studies. 
The present results suggest that it might be useful for a wide range of practical applications such as analyzing biological and societal data where we can (1) detect whether there is any dependency in the data and (2) understand the relationships in structured data.

\clearpage
\bibliography{main}
\bibliographystyle{unsrt}

\clearpage
\newpage

% \appendixwithtoc
\onecolumn

\section{Multivariate Association Patterns}

\begin{figure}[h!]
% \vspace{-1em}
\centering
    \includegraphics[width=0.235\linewidth]{Figures/Linear_data}
    \includegraphics[width=0.235\linewidth]{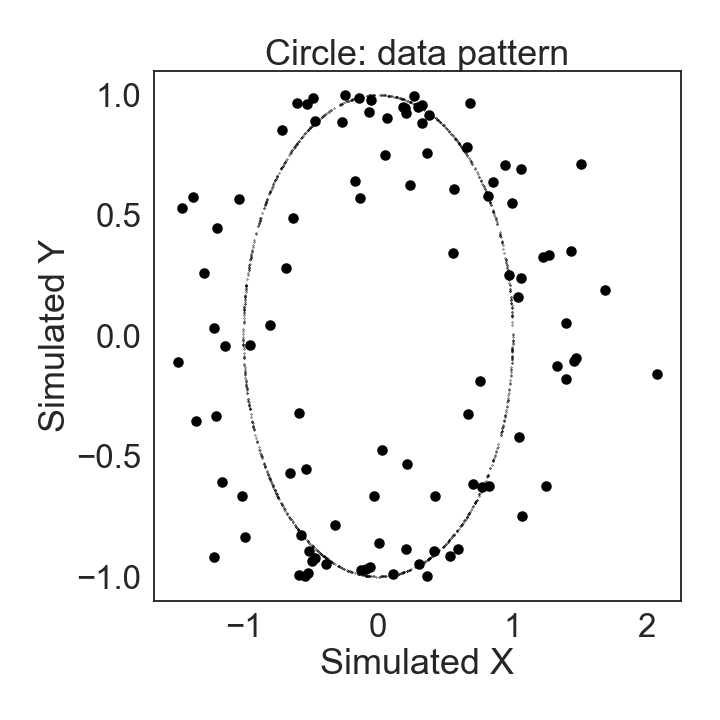}
    \includegraphics[width=0.235\linewidth]{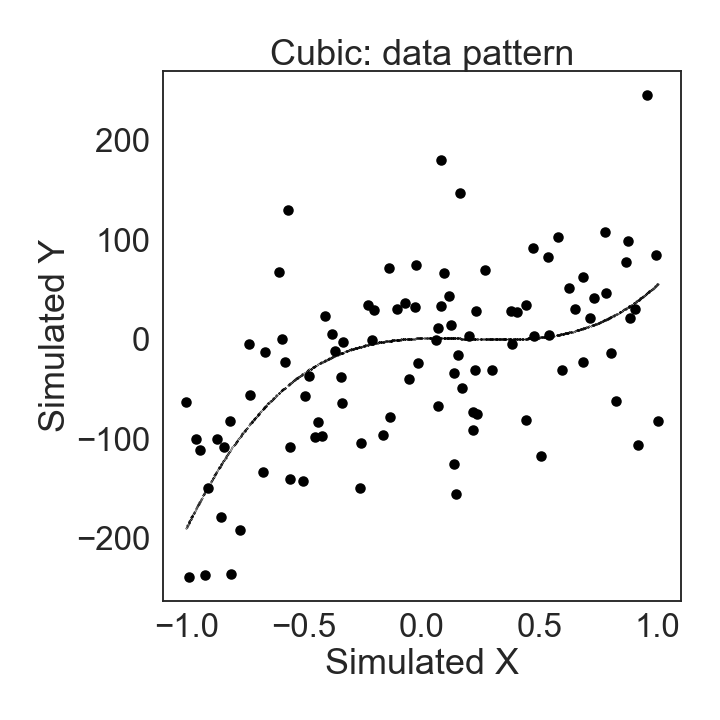}
    \includegraphics[width=0.235\linewidth]{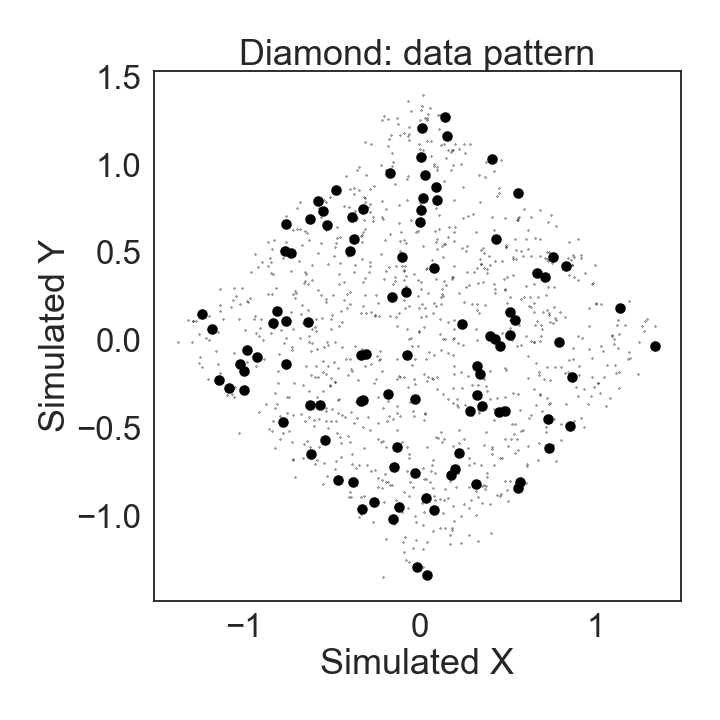}
    \includegraphics[width=0.235\linewidth]{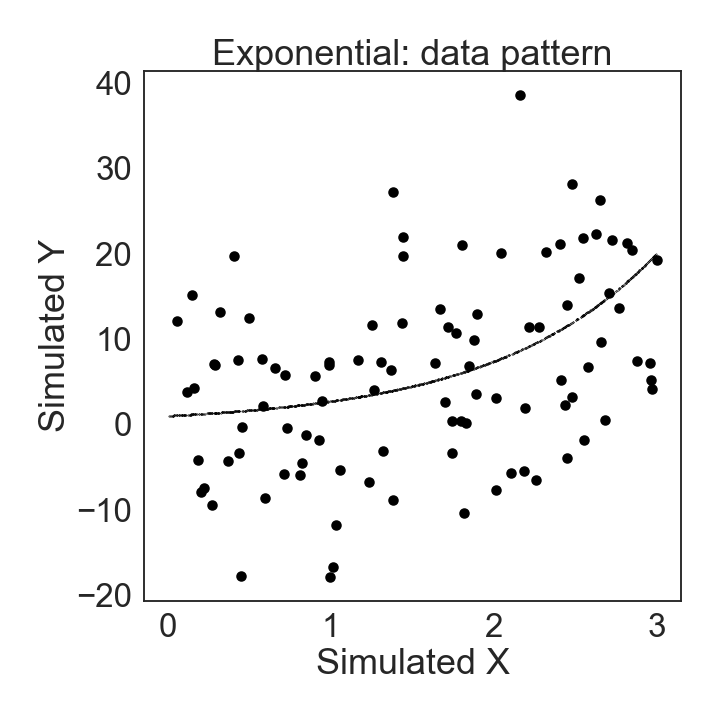}
    \includegraphics[width=0.235\linewidth]{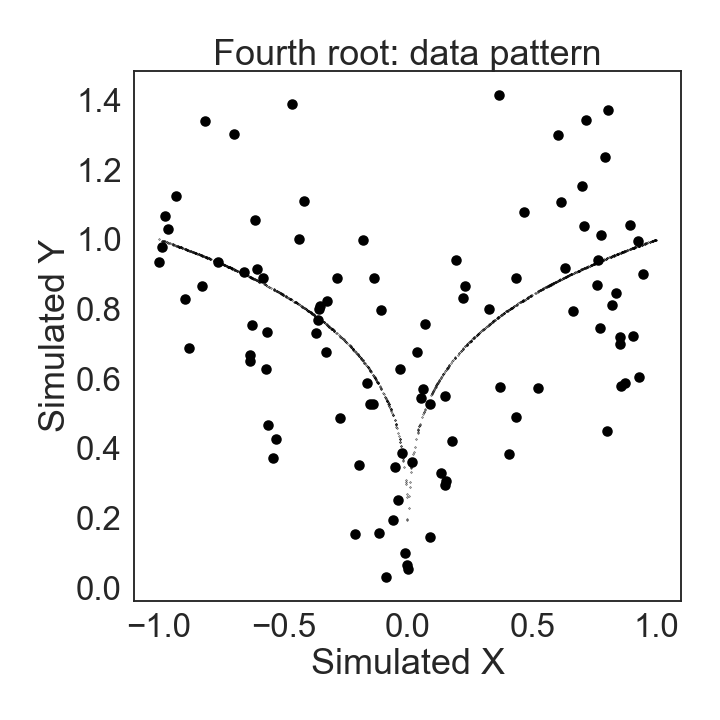}
    \includegraphics[width=0.235\linewidth]{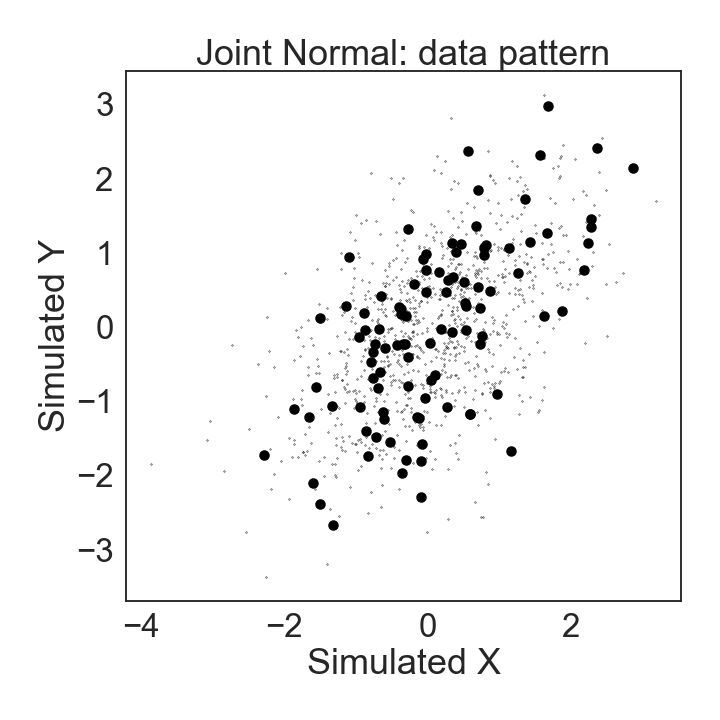}
    \includegraphics[width=0.235\linewidth]{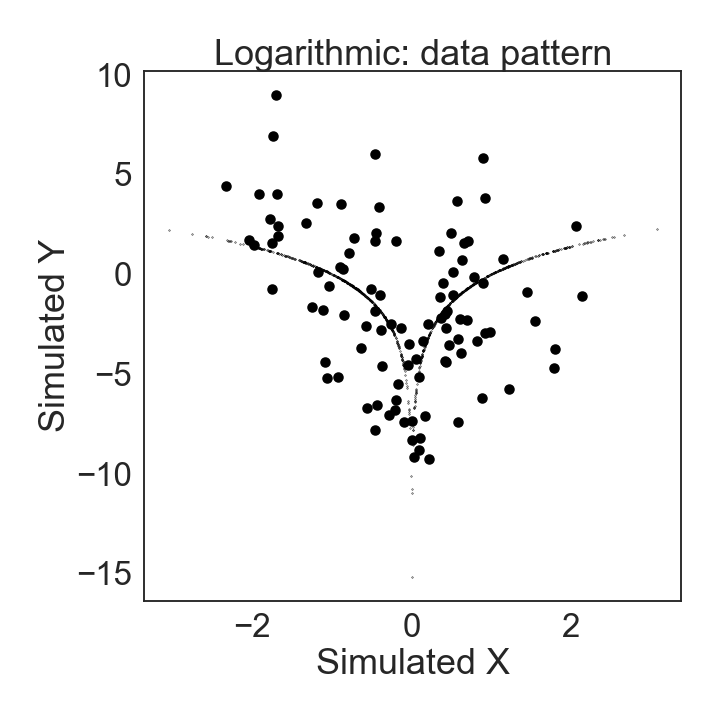}
    \includegraphics[width=0.235\linewidth]{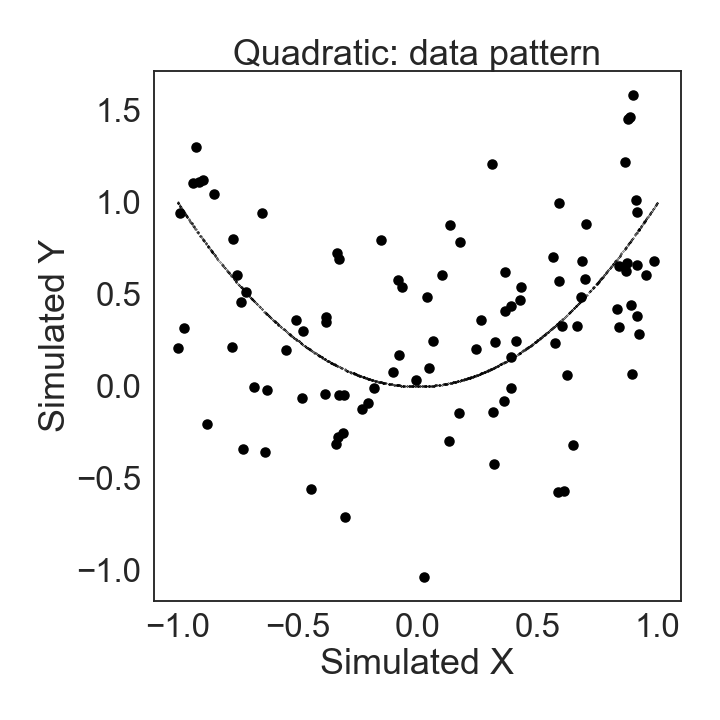}
    \includegraphics[width=0.235\linewidth]{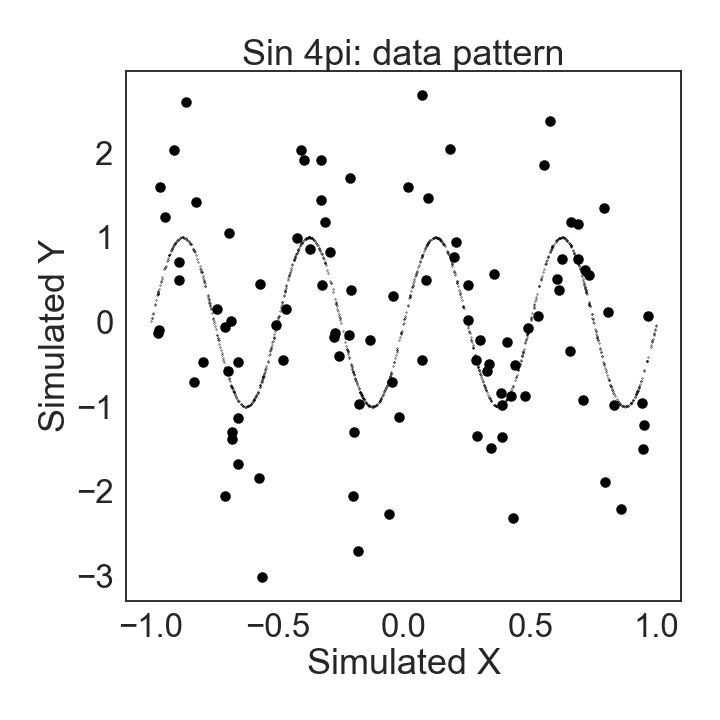}
    \includegraphics[width=0.235\linewidth]{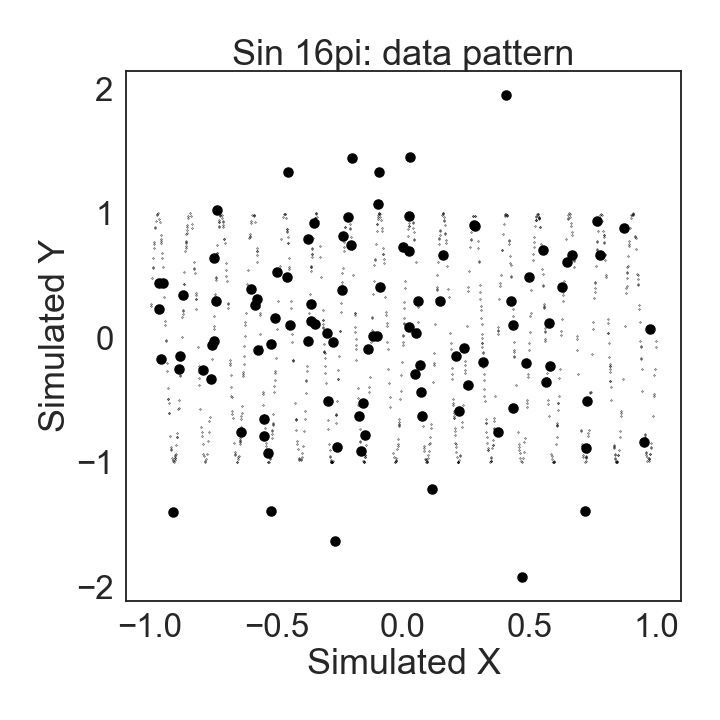}
    \includegraphics[width=0.235\linewidth]{Figures/Spiral_data}
    \includegraphics[width=0.235\linewidth]{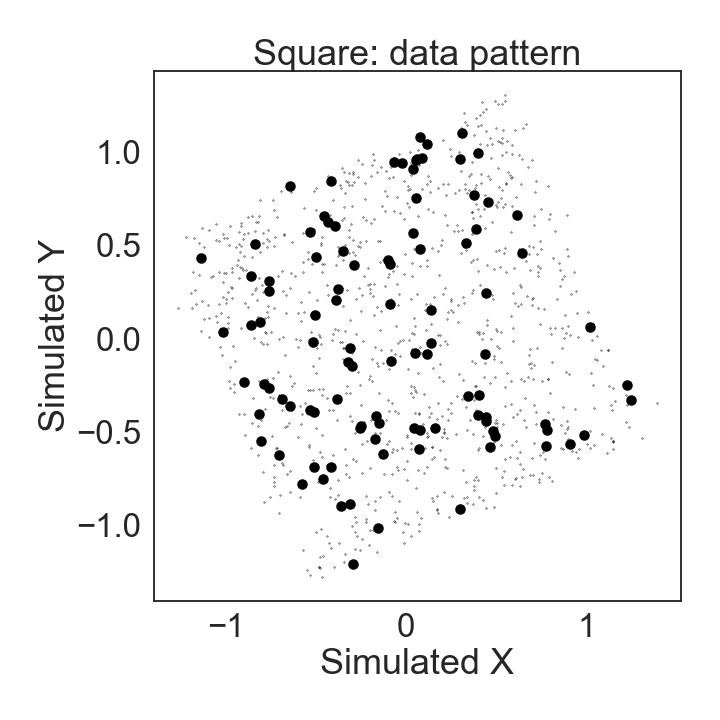}
    \includegraphics[width=0.235\linewidth]{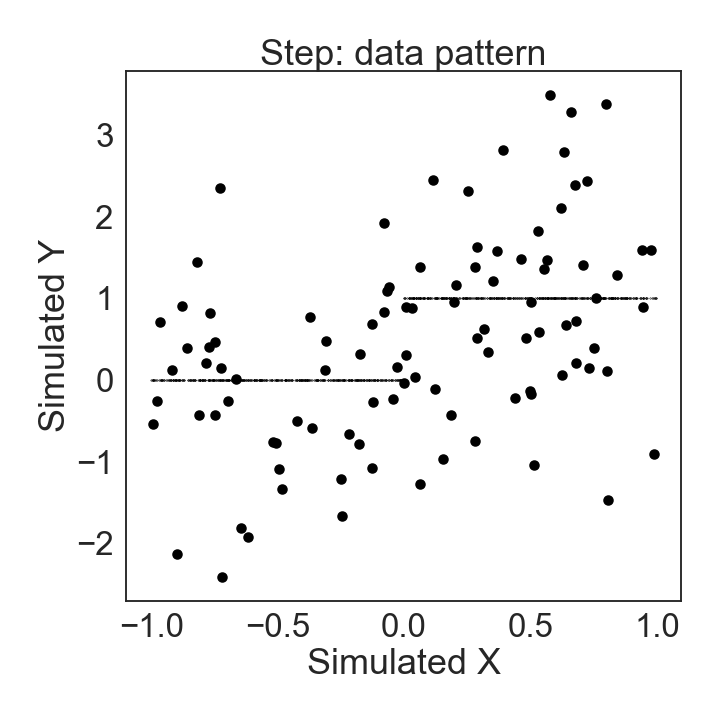}
    \includegraphics[width=0.235\linewidth]{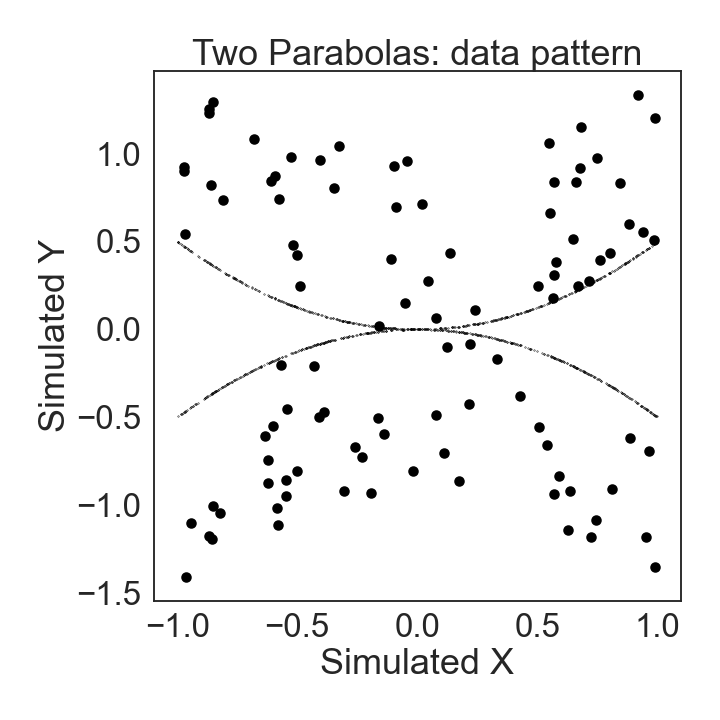}
    \includegraphics[width=0.235\linewidth]{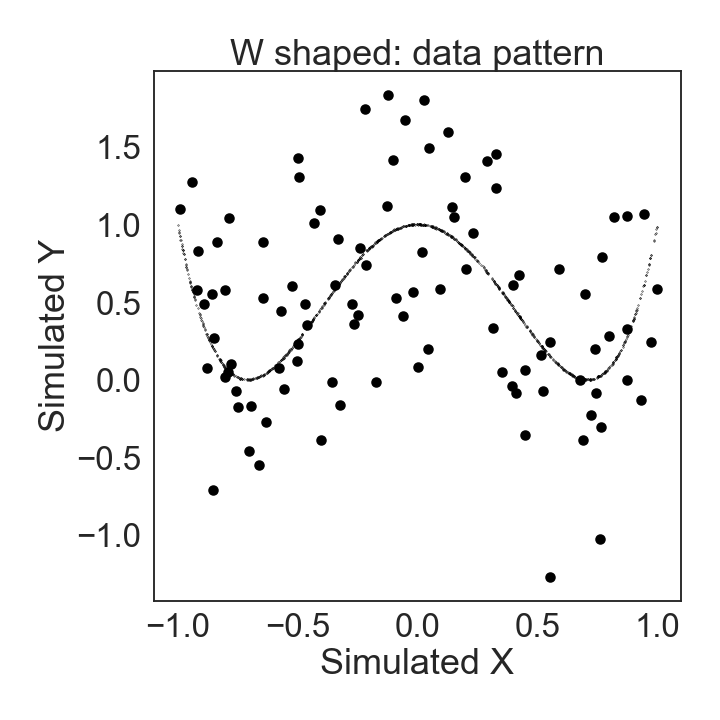}
    % \vspace{-2em}
\par\caption{\textbf{Multivariate association patterns} considered in the evaluation.}\label{fig:data}
% \vspace{-1em}
\end{figure}

\section{Supplementary Evaluation Details}
\label{sec:paras}

\textbf{Parameter selection.}
For the family of AGTIC, the numbers of possible thresholds for the lower and upper bounds in the geo-topological transforms are set to be 5. The combinatorial search space for the boundary pairs are nchoosek$(5,2) = 10$, since the threshold search itself has a complexity of $O((k(k-1)/2)^2) \approx O(k^4)$. For HSIC, it applies a bootstrap approximation to the test threshold with kernel sizes set to the median distances for $X$ and $Y$ \cite{gretton2008kernel}. For MIC, the user-specified value $B$ was set to be $N^{0.6}$ as advocated by \cite{reshef2011detecting}. For mutual information estimator, three different k ($k=1,6,20$) were used as in \cite{kraskov2004estimating}. 

\textbf{Experimental setting.} 
In the bivariate association experiments, 50 repetitions of 200 samples were generated, in which the input sample was uniformed distributed on the unit interval. Next, we regenerated the input sample randomly in order to generate i.i.d. versions as the null distribution with equal marginals.

\section{Supplementary Figures}

\begin{figure}[!h]
\centering
    \includegraphics[width=0.42\linewidth]{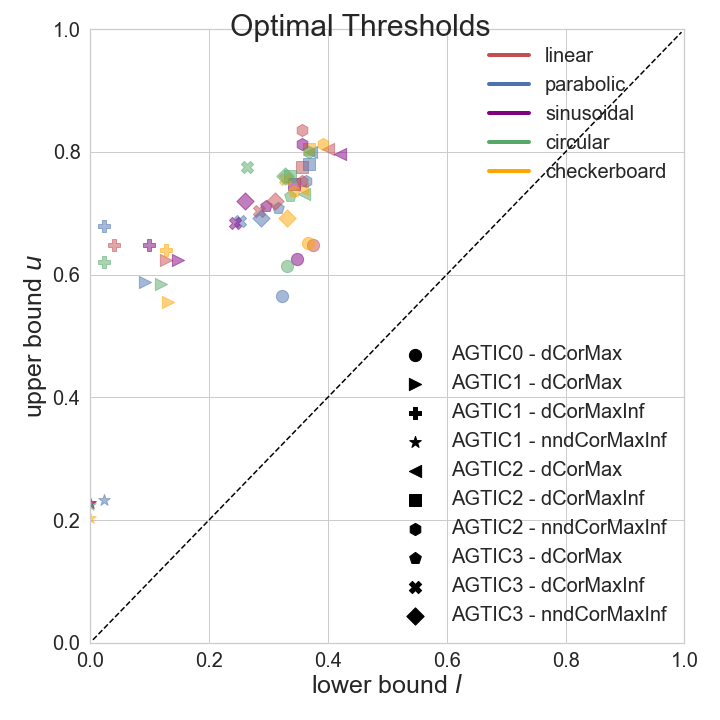}
        \includegraphics[width=0.42\linewidth]{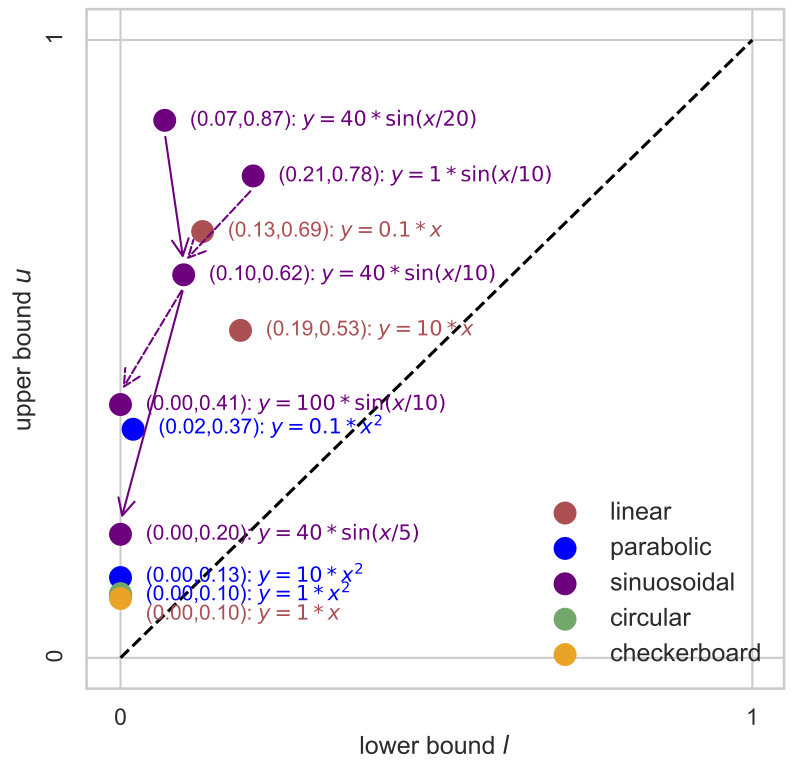}
        % \par\caption{\textbf{Optimal thresholds for different formulations}.}
\par\caption{\textbf{Optimal thresholds for different relationships.} Adaptively chosen lower (horizontal axis) and upper (vertical axis) bounds of the geo-topological transform for tests (shapes) and statistical patterns (colors).}\label{fig:opts}
        %  \vspace{-0.2in}
\end{figure}

\begin{figure}[!h]
\centering
    \includegraphics[width=0.85\linewidth]{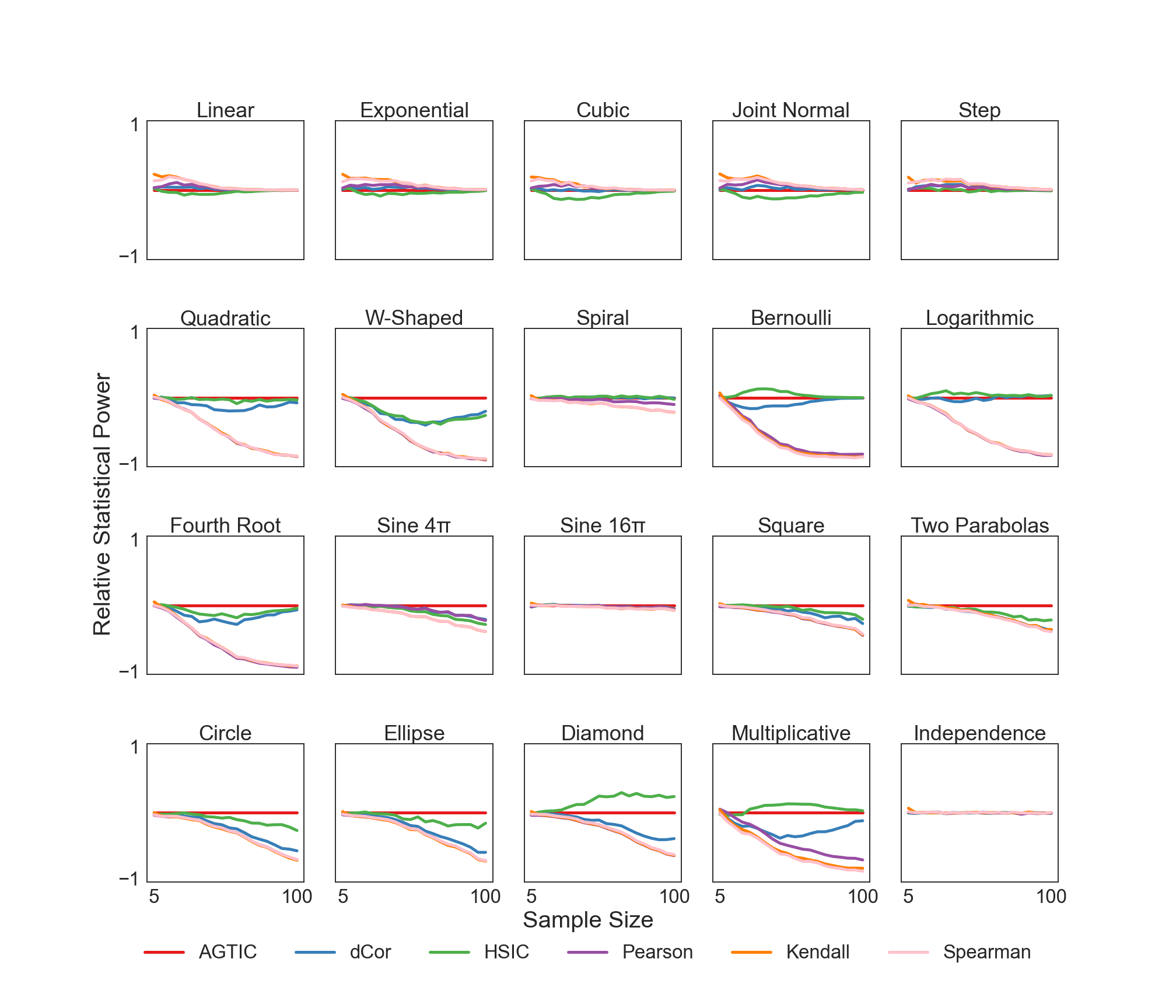}
\par\caption{\textbf{Relative statistical power over sample size} in 20 multivariate relationships.}\label{fig:smp}
         \vspace{-0.2in}
\end{figure}

\begin{figure}[!h]
\centering
    \includegraphics[width=0.8\linewidth]{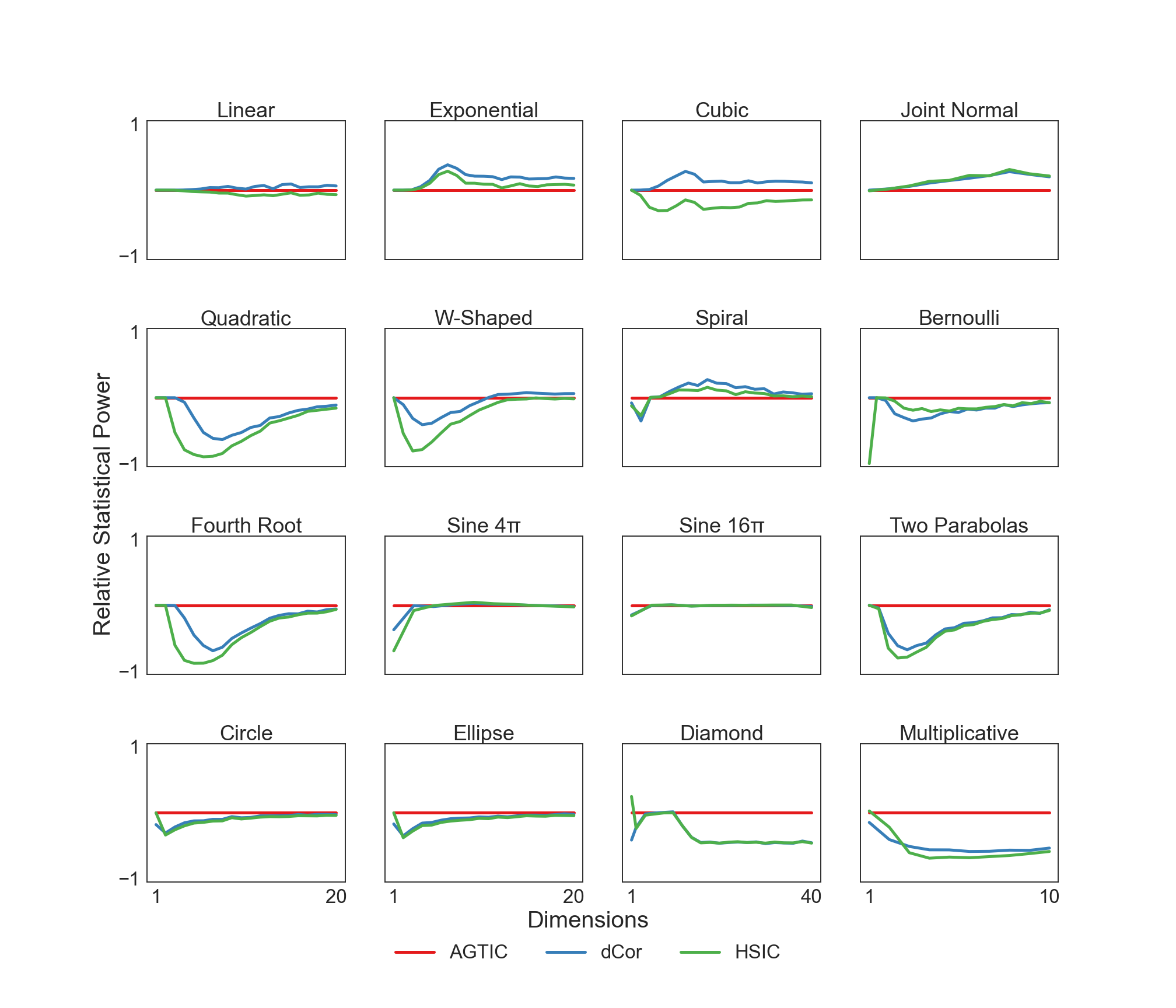}
\par\caption{\textbf{Relative statistical power over dimension size} in 16 multivariate relationships.}\label{fig:dim}
         \vspace{-0.2in}
\end{figure}

\begin{figure}[!h]
\centering
    \includegraphics[width=0.5\linewidth]{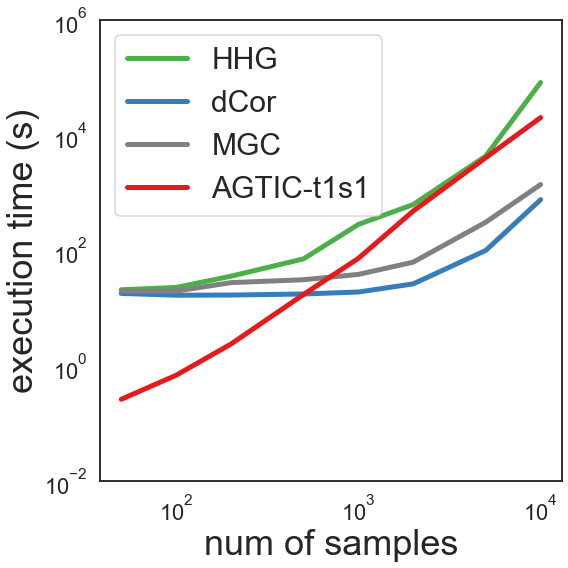}
\par\caption{\textbf{Execution wall time} of different methods (AGTIC in its most time consuming variant).}\label{fig:time}
        %  \vspace{-0.2in}
\end{figure}

\clearpage
\section{Supplementary Tables}
\label{sec:addtable}

\begin{table*}[tbh]
	\centering
	\caption{Power of different tests (rows) for detecting relationships (columns) between two univariate variables (averaged over different noise amplitudes, rows ranked by average power)}
	\resizebox{1\linewidth}{!}{
		\begin{tabular}{ l | l | l | l | l | l | l }
			 &  linear & parabolic & sinusoidal & circular & checkerboard & average \\ 
			 \thickhline
pAGTIC - t1s1	&	0.594	$\pm$	0.400	&	0.534	$\pm$	0.431	&	\textbf{0.712}	$\pm$	\textbf{0.384}	&	0.680	$\pm$	0.414	&	\textbf{0.618}	$\pm$	\textbf{0.410}	&	\textbf{0.628}	$\pm$	\textbf{0.396}	\\
MI (k=20)	&	0.502	$\pm$	0.404	&	0.534	$\pm$	0.354	&	0.504	$\pm$	0.409	&	0.702	$\pm$	0.391	&	0.582	$\pm$	0.413	&	0.565	$\pm$	0.386	\\
AGTIC - t1s1	&	0.668	$\pm$	0.349	&	0.580	$\pm$	0.428	&	0.552	$\pm$	0.452	&	0.590	$\pm$	0.464	&	0.420	$\pm$	0.324	&	0.562	$\pm$	0.399	\\
pAGTIC - t2s1	&	0.484	$\pm$	0.399	&	0.366	$\pm$	0.374	&	0.616	$\pm$	0.415	&	0.688	$\pm$	0.414	&	0.616	$\pm$	0.435	&	0.554	$\pm$	0.408	\\
AGTIC - t3s1	&	0.484	$\pm$	0.437	&	0.448	$\pm$	0.433	&	0.594	$\pm$	0.430	&	0.624	$\pm$	0.423	&	0.606	$\pm$	0.441	&	0.551	$\pm$	0.421	\\
pAGTIC - t1s3	&	0.578	$\pm$	0.399	&	0.528	$\pm$	0.401	&	0.624	$\pm$	0.394	&	0.604	$\pm$	0.389	&	0.396	$\pm$	0.358	&	0.546	$\pm$	0.381	\\
AGTIC - t1s2	&	0.674	$\pm$	0.351	&	0.582	$\pm$	0.399	&	0.546	$\pm$	0.422	&	0.502	$\pm$	0.480	&	0.408	$\pm$	0.321	&	0.542	$\pm$	0.392	\\
AGTIC - t3s2	&	0.632	$\pm$	0.371	&	0.580	$\pm$	0.431	&	0.542	$\pm$	0.449	&	0.512	$\pm$	0.468	&	0.432	$\pm$	0.430	&	0.540	$\pm$	0.419	\\
AGTIC - t0s2	&	0.374	$\pm$	0.350	&	0.546	$\pm$	0.393	&	0.578	$\pm$	0.423	&	0.656	$\pm$	0.433	&	0.532	$\pm$	0.440	&	0.537	$\pm$	0.403	\\
AGTIC - t2s2	&	0.636	$\pm$	0.382	&	0.578	$\pm$	0.427	&	0.500	$\pm$	0.432	&	0.540	$\pm$	0.466	&	0.380	$\pm$	0.364	&	0.527	$\pm$	0.408	\\
pAGTIC - t1s2	&	0.636	$\pm$	0.371	&	\textbf{0.602}	$\pm$	\textbf{0.373}	&	0.600	$\pm$	0.396	&	0.560	$\pm$	0.449	&	0.230	$\pm$	0.177	&	0.526	$\pm$	0.381	\\
AGTIC - t3s3	&	0.578	$\pm$	0.387	&	0.560	$\pm$	0.432	&	0.526	$\pm$	0.402	&	0.514	$\pm$	0.451	&	0.448	$\pm$	0.398	&	0.525	$\pm$	0.400	\\
pAGTIC - t2s3	&	0.562	$\pm$	0.404	&	0.494	$\pm$	0.419	&	0.538	$\pm$	0.400	&	0.582	$\pm$	0.422	&	0.440	$\pm$	0.397	&	0.523	$\pm$	0.395	\\
AGTIC - t2s1	&	0.566	$\pm$	0.418	&	0.546	$\pm$	0.439	&	0.500	$\pm$	0.447	&	0.614	$\pm$	0.453	&	0.376	$\pm$	0.333	&	0.520	$\pm$	0.411	\\
MI (k=6)	&	0.380	$\pm$	0.360	&	0.406	$\pm$	0.375	&	0.586	$\pm$	0.381	&	0.620	$\pm$	0.413	&	0.602	$\pm$	0.423	&	0.519	$\pm$	0.389	\\
AGTIC - t1s3	&	0.656	$\pm$	0.337	&	0.582	$\pm$	0.408	&	0.494	$\pm$	0.433	&	0.536	$\pm$	0.461	&	0.318	$\pm$	0.207	&	0.517	$\pm$	0.382	\\
pAGTIC - t3s3	&	0.536	$\pm$	0.424	&	0.442	$\pm$	0.400	&	0.570	$\pm$	0.418	&	0.594	$\pm$	0.417	&	0.442	$\pm$	0.380	&	0.517	$\pm$	0.396	\\
pAGTIC - t2s2	&	0.624	$\pm$	0.375	&	0.504	$\pm$	0.416	&	0.542	$\pm$	0.391	&	0.530	$\pm$	0.424	&	0.214	$\pm$	0.168	&	0.483	$\pm$	0.379	\\
pAGTIC - t3s2	&	0.604	$\pm$	0.392	&	0.514	$\pm$	0.416	&	0.498	$\pm$	0.409	&	0.536	$\pm$	0.450	&	0.250	$\pm$	0.215	&	0.480	$\pm$	0.389	\\
AGTIC - t2s3	&	0.536	$\pm$	0.421	&	0.564	$\pm$	0.415	&	0.410	$\pm$	0.418	&	0.540	$\pm$	0.463	&	0.340	$\pm$	0.278	&	0.478	$\pm$	0.397	\\
dCor	&	0.676	$\pm$	0.353	&	0.550	$\pm$	0.425	&	0.452	$\pm$	0.424	&	0.446	$\pm$	0.448	&	0.210	$\pm$	0.140	&	0.467	$\pm$	0.392	\\
Adaptive MI - s2	&	0.468	$\pm$	0.396	&	0.386	$\pm$	0.355	&	0.354	$\pm$	0.403	&	0.634	$\pm$	0.437	&	0.428	$\pm$	0.324	&	0.454	$\pm$	0.382	\\
Hoeffding's D	&	0.650	$\pm$	0.356	&	0.460	$\pm$	0.414	&	0.460	$\pm$	0.431	&	0.498	$\pm$	0.475	&	0.200	$\pm$	0.112	&	0.454	$\pm$	0.393	\\
HSIC	&	0.504	$\pm$	0.416	&	0.556	$\pm$	0.363	&	0.324	$\pm$	0.369	&	0.670	$\pm$	0.439	&	0.196	$\pm$	0.082	&	0.450	$\pm$	0.383	\\
% AKIC	&	0.186	$\pm$	0.181	&	0.270	$\pm$	0.242	&	0.502	$\pm$	0.431	&	0.576	$\pm$	0.435	&	0.538	$\pm$	0.453	&	0.414	$\pm$	0.385	\\
MIC	&	0.344	$\pm$	0.335	&	0.378	$\pm$	0.299	&	0.586	$\pm$	0.288	&	0.438	$\pm$	0.366	&	0.310	$\pm$	0.193	&	0.411	$\pm$	0.305	\\
AGTIC - t0s1	&	0.368	$\pm$	0.383	&	0.446	$\pm$	0.367	&	0.392	$\pm$	0.414	&	0.492	$\pm$	0.504	&	0.264	$\pm$	0.326	&	0.392	$\pm$	0.394	\\
% CD3	&	0.400	$\pm$	0.394	&	0.536	$\pm$	0.400	&	0.338	$\pm$	0.438	&	0.462	$\pm$	0.500	&	0.144	$\pm$	0.182	&	0.376	$\pm$	0.404	\\
Adaptive MI - s1	&	0.170	$\pm$	0.206	&	0.262	$\pm$	0.314	&	0.454	$\pm$	0.394	&	0.494	$\pm$	0.424	&	0.482	$\pm$	0.449	&	0.372	$\pm$	0.377	\\
rdmCor	&	0.426	$\pm$	0.406	&	0.534	$\pm$	0.420	&	0.028	$\pm$	0.023	&	\textbf{0.728}	$\pm$	\textbf{0.408}	&	0.036	$\pm$	0.042	&	0.350	$\pm$	0.415	\\
MI (k=1)	&	0.174	$\pm$	0.107	&	0.208	$\pm$	0.228	&	0.402	$\pm$	0.379	&	0.448	$\pm$	0.415	&	0.396	$\pm$	0.390	&	0.326	$\pm$	0.332	\\
pAGTIC - t3s1	&	0.160	$\pm$	0.263	&	0.054	$\pm$	0.040	&	0.344	$\pm$	0.401	&	0.310	$\pm$	0.363	&	0.316	$\pm$	0.330	&	0.237	$\pm$	0.315	\\
R$^2$	&	\textbf{0.710}	$\pm$	\textbf{0.325}	&	0.136	$\pm$	0.104	&	0.054	$\pm$	0.053	&	0.028	$\pm$	0.036	&	0.072	$\pm$	0.043	&	0.200	$\pm$	0.300	\\  
            \hline
		\end{tabular}
	}  
\label{table:1DpowerNoise}
	     \vspace{-0.05in}
\end{table*}

\begin{table}[tbhp]
	\centering
	\caption{80\% power noise level for 1-d data (ranked by avg. power)}
	\resizebox{0.8\linewidth}{!}{
		\begin{tabular}{ l | c | c | c | c | c  }
			 &  linear & parabolic & sinusoidal & circular & checkerboard  \\ \thickhline
pAGTIC - t1s1	&	2.657	&	2.411	&	\textbf{3.975}	&	3.856	&	3.081	\\
MI (k=20)	&	2.302	&	2.289	&	2.191	&	3.987	&	2.839	\\
AGTIC - t1s1	&	2.945	&	2.922	&	2.485	&	3.212	&	1.000	\\
pAGTIC - t2s1	&	1.972	&	1.507	&	3.087	&	4.013	&	\textbf{3.282}	\\
AGTIC - t3s1	&	2.249	&	2.015	&	2.985	&	3.130	&	3.212	\\
pAGTIC - t1s3	&	2.611	&	2.296	&	3.036	&	2.154	&	1.347	\\
AGTIC - t1s2	&	3.032	&	2.573	&	2.305	&	2.720	&	1.000	\\
AGTIC - t3s2	&	2.626	&	\textbf{2.955}	&	2.400	&	2.603	&	2.175	\\
AGTIC - t0s2	&	1.435	&	2.154	&	2.783	&	3.594	&	2.594	\\
AGTIC - t2s2	&	2.713	&	2.895	&	2.182	&	2.907	&	1.417	\\
pAGTIC - t1s2	&	2.713	&	2.434	&	2.524	&	2.945	&	1.000	\\
AGTIC - t3s3	&	2.399	&	2.864	&	2.073	&	2.440	&	1.960	\\
pAGTIC - t2s3	&	2.394	&	2.304	&	2.845	&	2.110	&	2.195	\\
AGTIC - t2s1	&	2.626	&	2.852	&	2.364	&	3.362	&	1.000	\\
MI (k=6)	&	1.423	&	1.463	&	2.856	&	3.081	&	3.239	\\
AGTIC - t1s3	&	2.668	&	2.812	&	2.307	&	2.837	&	1.000	\\
pAGTIC - t3s3	&	2.554	&	1.911	&	2.860	&	2.994	&	1.801	\\
pAGTIC - t2s2	&	2.845	&	2.346	&	2.419	&	1.954	&	1.000	\\
pAGTIC - t3s2	&	2.837	&	2.337	&	2.272	&	2.783	&	1.000	\\
AGTIC - t2s3	&	2.434	&	2.854	&	1.830	&	2.864	&	1.080	\\
dCor	&	3.188	&	2.833	&	1.911	&	2.201	&	1.000	\\
Adaptive MI - s2	&	1.960	&	1.243	&	1.377	&	3.774	&	1.044	\\
Hoeffding's D	&	2.783	&	2.244	&	2.057	&	2.573	&	1.000	\\
HSIC	&	2.265	&	2.280	&	1.406	&	3.987	&	1.000	\\
% AKIC	&	1.000	&	1.000	&	2.322	&	2.911	&	2.807	\\
MIC	&	1.000	&	1.219	&	1.000	&	1.000	&	1.000	\\
AGTIC - t0s1	&	1.531	&	1.448	&	1.770	&	2.822	&	1.204	\\
% CD3	&	1.717	&	2.259	&	1.738	&	2.485	&	1.000	\\
Adaptive MI - s1	&	1.000	&	1.146	&	1.668	&	2.224	&	2.355	\\
rdmCor	&	1.801	&	2.154	&	1.000	&	\textbf{4.706}	&	1.000	\\
MI (k=1)	&	1.000	&	1.000	&	1.565	&	2.022	&	1.507	\\
pAGTIC - t3s1	&	1.036	&	1.000	&	1.561	&	1.390	&	1.073	\\
R$^2$	&	\textbf{3.233}	&	1.000	&	1.000	&	1.000	&	1.000	\\           
            \hline
		\end{tabular}
	}  
\label{table:1DpowerStaircase}
\end{table}

\begin{table*}[h!]
	\centering
	\caption{Two-dimensional data with paired dependency (ranked by average power)}
	\resizebox{1\linewidth}{!}{
		\begin{tabular}{ l | l | l | l | l | l | l | l | l | l | l | l | l | l | l | l | l | l | l | l | l | l }
			 &  l-l & l-p & l-s & l-c & l-k & l-r & p-p & p-s & p-c & p-k & p-r & s-s & s-c & s-k & s-r & c-c & c-k & c-r & k-k & k-r & average \\ \thickhline
            MI (k=6) &  0.88 & 0.98 & 0.98 & 0.88 & \textbf{1.00} & 0.96 & 0.82 & 0.94 & 0.98 & 0.92 & 0.88 & 0.92 & \textbf{1.00} & 0.98 & \textbf{0.98} & 0.94 & 0.88 & 0.94 & 0.98 & \textbf{0.94} & 
\textbf{0.939} $\pm$ \textbf{0.050}       \\
            % AKIC & \textbf{1.00} & \textbf{1.00} & \textbf{1.00} & \textbf{1.00} & \textbf{1.00} & \textbf{1.00} & \textbf{1.00} & \textbf{1.00} & \textbf{1.00} & \textbf{1.00} & \textbf{1.00} & \textbf{1.00} & \textbf{1.00} & \textbf{1.00} & 0.00 & \textbf{1.00} & \textbf{1.00} & \textbf{1.00} & \textbf{1.00} & 0.00 & 0.900 $\pm$ 0.308          \\
            MI (k=1) & 0.82 & \textbf{1.00} & 0.92 & 0.90 & 0.96 & 0.98 & 0.84 & 0.96 & 0.98 & 0.88 & 0.88 & 0.94 & 0.98 & 0.96 & 0.94 & 0.92 & 0.96 & 0.90 & 0.84 & 0.00 & 
0.878 $\pm$ 0.213     \\
            AGTIC - t3s1 & \textbf{1.00} & \textbf{1.00} & \textbf{1.00} & \textbf{1.00} & \textbf{1.00} & \textbf{1.00} & \textbf{1.00} & \textbf{1.00} & \textbf{1.00} & \textbf{1.00} & \textbf{1.00} & \textbf{1.00} & \textbf{1.00} & 0.00 & 0.00 & \textbf{1.00} & 0.00 & \textbf{1.00} & \textbf{1.00} & 0.00 & 
0.800 $\pm$ 0.410        \\
            Hoeffding's D & \textbf{1.00} & \textbf{1.00} & \textbf{1.00} & \textbf{1.00} & \textbf{1.00} & \textbf{1.00} & \textbf{1.00} & \textbf{1.00} & \textbf{1.00} & \textbf{1.00} & \textbf{1.00} & \textbf{1.00} & \textbf{1.00} & 0.00 & 0.00 & \textbf{1.00} & 0.00 & 0.00 & \textbf{1.00} & 0.00 & 
0.750 $\pm$ 0.444        \\
            MIC & 0.94 & 0.96 & 0.90 & 0.02 & 0.80 & 0.96 & 0.98 & 0.98 & \textbf{1.00} & 0.92 & 0.98 & \textbf{1.00} & 0.80 & 0.00 & 0.00 & 0.90 & 0.64 & 0.12 & 0.08 & 0.00 & 0.649 $\pm$ 0.421     \\
            MI (k=20) & 0.86 & \textbf{1.00} & 0.00 & 0.92 & 0.96 & 0.98 & 0.86 & 0.98 & \textbf{1.00} & 0.88 & 0.86 & \textbf{1.00} & 0.02 & 0.02 & 0.02 & 0.92 & 0.00 & 0.94 & 0.14 & 0.00 & 
0.618 $\pm$ 0.447      \\
            HSIC & \textbf{1.00} & \textbf{1.00} & \textbf{1.00} & 0.00 & 0.00 & \textbf{1.00} & \textbf{1.00} & 0.00 & 0.00 & 0.00 & \textbf{1.00} & \textbf{1.00} & \textbf{1.00} & 0.00 & 0.00 & \textbf{1.00} & 0.00 & \textbf{1.00} & \textbf{1.00} & 0.00 & 
0.550 $\pm$ 0.510          \\
            AGTIC - t0s2 & \textbf{1.00} & \textbf{1.00} & 0.12 & 0.34 & 0.04 & 0.74 & \textbf{1.00} & 0.16 & 0.48 & 0.04 & 0.56 & \textbf{1.00} & \textbf{1.00} & 0.02 & 0.00 & \textbf{1.00} & 0.00 & 0.00 & \textbf{1.00} & 0.00 & 
0.475 $\pm$ 0.444           \\
            dCor &  \textbf{1.00} & \textbf{1.00} & \textbf{1.00} & 0.00 & 0.00 & \textbf{1.00} & \textbf{1.00} & 0.00 & 0.00 & 0.00 & \textbf{1.00} & \textbf{1.00} & 0.00 & 0.00 & 0.00 & \textbf{1.00} & 0.00 & 0.00 & \textbf{1.00} & 0.00 &  
0.450 $\pm$ 0.510      \\
            AGTIC - t1s2 & \textbf{1.00} & \textbf{1.00} & 0.42 & 0.00 & 0.02 & 0.62 & \textbf{1.00} & 0.00 & 0.00 & 0.02 & 0.68 & \textbf{1.00} & 0.02 & 0.00 & 0.74 & \textbf{1.00} & 0.00 & 0.00 & \textbf{1.00} & 0.00 &
 0.426 $\pm$ 0.456         \\
            % CD3 & \textbf{1.00} & \textbf{1.00} & 0.96 & 0.00 & 0.00 & 0.22 & \textbf{1.00} & 0.74 & 0.02 & 0.00 & 0.56 & \textbf{1.00} & 0.00 & 0.00 & 0.00 & \textbf{1.00} & 0.00 & 0.00 & 0.96 & 0.00 & 
0.423 $\pm$ 0.469            \\
            AGTIC - t2s1 & \textbf{1.00} & \textbf{1.00} & 0.00 & 0.00 & 0.00 & \textbf{1.00} & \textbf{1.00} & 0.00 & 0.00 & 0.00 & \textbf{1.00} & \textbf{1.00} & 0.00 & 0.00 & 0.00 & \textbf{1.00} & 0.00 & 0.00 & \textbf{1.00} & 0.00 & 
0.400 $\pm$ 0.503           \\
            AGTIC - t1s1 & \textbf{1.00} & \textbf{1.00} & 0.48 & 0.00 & 0.00 & 0.00 & \textbf{1.00} & 0.00 & 0.00 & 0.00 & \textbf{1.00} & \textbf{1.00} & 0.00 & 0.00 & 0.00 & \textbf{1.00} & 0.00 & 0.00 & \textbf{1.00} & 0.00 & 
0.374 $\pm$ 0.483       \\
            AGTIC - t3s3 & \textbf{1.00} & \textbf{1.00} & 0.20 & 0.00 & 0.14 & 0.04 & \textbf{1.00} & 0.36 & 0.04 & 0.06 & 0.00 & \textbf{1.00} & 0.08 & 0.00 & 0.06 & \textbf{1.00} & 0.06 & 0.02 & 0.78 & 0.12 & 
0.348 $\pm$ 0.424          \\
            AGTIC - t1s3 & \textbf{1.00} & \textbf{1.00} & 0.12 & 0.02 & 0.16 & 0.12 & \textbf{1.00} & 0.02 & 0.04 & 0.04 & 0.12 & \textbf{1.00} & 0.10 & 0.02 & 0.02 & 0.98 & 0.10 & 0.08 & 0.92 & 0.06 & 
0.346 $\pm$ 0.430         \\
            AGTIC - t2s3 & \textbf{1.00} & \textbf{1.00} & 0.36 & 0.04 & 0.10 & 0.22 & \textbf{1.00} & 0.00 & 0.08 & 0.04 & 0.18 & \textbf{1.00} & 0.02 & 0.00 & 0.02 & 0.98 & 0.08 & 0.00 & 0.68 & 0.12 & 
0.346 $\pm$ 0.415          \\
            AGTIC - t3s2 & \textbf{1.00} & \textbf{1.00} & 0.70 & 0.00 & 0.02 & 0.04 & \textbf{1.00} & 0.00 & 0.00 & 0.00 & 0.06 & \textbf{1.00} & 0.00 & 0.00 & 0.00 & \textbf{1.00} & 0.00 & 0.00 & \textbf{1.00} & 0.00 & 
0.341 $\pm$ 0.468          \\
            AGTIC - t2s2 & \textbf{1.00} & \textbf{1.00} & 0.52 & 0.00 & 0.00 & 0.04 & \textbf{1.00} & 0.00 & 0.00 & 0.00 & 0.04 & \textbf{1.00} & 0.00 & 0.00 & 0.02 & \textbf{1.00} & 0.02 & 0.00 & \textbf{1.00} & 0.00 & 
0.332 $\pm$ 0.463          \\
            AGTIC - t0s1 & \textbf{1.00} & \textbf{1.00} & 0.00 & 0.04 & 0.04 & 0.18 & \textbf{1.00} & 0.00 & 0.10 & 0.00 & 0.00 & \textbf{1.00} & 0.00 & 0.00 & 0.00 & \textbf{1.00} & 0.00 & 0.00 & \textbf{1.00} & 0.00 & 
0.318 $\pm$ 0.460           \\
            rdmCor & \textbf{1.00} & \textbf{1.00} & 0.00 & 0.00 & 0.00 & \textbf{1.00} & \textbf{1.00} & 0.00 & 0.00 & 0.00 & \textbf{1.00} & 0.00 & 0.00 & 0.00 & 0.00 & 0.00 & 0.00 & 0.00 & 0.00 & 0.00 & 
0.250 $\pm$ 0.444           \\
            R$^2$ & \textbf{1.00} & \textbf{1.00} & 0.00 & 0.00 & 0.00 & 0.00 & \textbf{1.00} & 0.00 & 0.00 & 0.00 & 0.00 & 0.00 & 0.00 & 0.00 & 0.00 & 0.00 & 0.00 & 0.00 & \textbf{1.00} & 0.00 &  
0.200 $\pm$ 0.410      \\
            \hline
		\end{tabular}
	}  
\label{table:2Dpower}
\end{table*}

\begin{table*}[tbh]
	\centering
	\caption{One-dimensional data over different sample sizes (ranked by average power)}
	\resizebox{1\linewidth}{!}{
		\begin{tabular}{ l | l | l | l | l | l | l }
			 &  linear & parabolic & sinusoidal & circular & checkerboard & average \\ \thickhline
AGTIC - t3s1 & \textbf{1.000} $\pm$ \textbf{0.000} & \textbf{1.000} $\pm$ \textbf{0.000} & \textbf{1.000} $\pm$ \textbf{0.000} & \textbf{1.000} $\pm$ \textbf{0.000} & 0.950 $\pm$ 0.218 & \textbf{0.990} $\pm$ \textbf{0.022} \\
MI (k=1) & 0.995 $\pm$ 0.011 & 0.985 $\pm$ 0.019 & 0.991 $\pm$ 0.015 & 0.993 $\pm$ 0.015 & \textbf{0.983} $\pm$ \textbf{0.018} & 0.989 $\pm$ 0.005 \\
% AKIC & \textbf{1.000} $\pm$ \textbf{0.000} & \textbf{1.000} $\pm$ \textbf{0.000} & 0.950 $\pm$ 0.218 & \textbf{1.000} $\pm$ \textbf{0.000} & 0.950 $\pm$ 0.218 & 0.980 $\pm$ 0.027 \\
AGTIC - t0s2 & \textbf{1.000} $\pm$ \textbf{0.000} & \textbf{1.000} $\pm$ \textbf{0.000} & 0.967 $\pm$ 0.144 & \textbf{1.000} $\pm$ \textbf{0.000} & 0.928 $\pm$ 0.230 & 0.979 $\pm$ 0.032 \\
AGTIC - t0s1 & \textbf{1.000} $\pm$ \textbf{0.000} & \textbf{1.000} $\pm$ \textbf{0.000} & 0.950 $\pm$ 0.218 & \textbf{1.000} $\pm$ \textbf{0.000} & 0.908 $\pm$ 0.279 & 0.972 $\pm$ 0.042 \\
MI (k=6) & 0.992 $\pm$ 0.018 & 0.982 $\pm$ 0.024 & 0.939 $\pm$ 0.211 & 0.995 $\pm$ 0.010 & 0.942 $\pm$ 0.216 & 0.970 $\pm$ 0.027 \\
AGTIC - t3s3 & 0.998 $\pm$ 0.009 & \textbf{1.000} $\pm$ \textbf{0.002} & 0.969 $\pm$ 0.135 & 0.986 $\pm$ 0.041 & 0.889 $\pm$ 0.214 & 0.968 $\pm$ 0.046 \\
AGTIC - t1s1 & \textbf{1.000} $\pm$ \textbf{0.000} & \textbf{1.000} $\pm$ \textbf{0.000} & 0.900 $\pm$ 0.300 & \textbf{1.000} $\pm$ \textbf{0.000} & 0.850 $\pm$ 0.357 & 0.950 $\pm$ 0.071 \\
AGTIC - t3s2 & \textbf{1.000} $\pm$ \textbf{0.000} & \textbf{1.000} $\pm$ \textbf{0.000} & 0.950 $\pm$ 0.218 & 0.932 $\pm$ 0.215 & 0.825 $\pm$ 0.285 & 0.941 $\pm$ 0.072 \\
AGTIC - t2s1 & \textbf{1.000} $\pm$ \textbf{0.000} & \textbf{1.000} $\pm$ \textbf{0.000} & 0.900 $\pm$ 0.300 & \textbf{1.000} $\pm$ \textbf{0.000} & 0.800 $\pm$ 0.400 & 0.940 $\pm$ 0.089 \\
% CD3 & \textbf{1.000} $\pm$ \textbf{0.000} & \textbf{1.000} $\pm$ \textbf{0.000} & 0.959 $\pm$ 0.179 & 0.972 $\pm$ 0.122 & 0.667 $\pm$ 0.447 & 0.920 $\pm$ 0.143 \\
Hoeffding's D & \textbf{1.000} $\pm$ \textbf{0.000} & \textbf{1.000} $\pm$ \textbf{0.000} & \textbf{1.000} $\pm$ \textbf{0.000} & \textbf{1.000} $\pm$ \textbf{0.000} & 0.550 $\pm$ 0.497 & 0.910 $\pm$ 0.201 \\
MIC & 0.984 $\pm$ 0.015 & 0.977 $\pm$ 0.022 & 0.956 $\pm$ 0.160 & 0.891 $\pm$ 0.292 & 0.733 $\pm$ 0.422 & 0.908 $\pm$ 0.105 \\
AGTIC - t2s2 & \textbf{1.000} $\pm$ \textbf{0.000} & \textbf{1.000} $\pm$ \textbf{0.000} & 0.911 $\pm$ 0.269 & 0.905 $\pm$ 0.280 & 0.724 $\pm$ 0.341 & 0.908 $\pm$ 0.113 \\
AGTIC - t2s3 & \textbf{1.000} $\pm$ \textbf{0.000} & 0.997 $\pm$ 0.013 & 0.917 $\pm$ 0.235 & 0.950 $\pm$ 0.162 & 0.639 $\pm$ 0.338 & 0.900 $\pm$ 0.150 \\
AGTIC - t1s2 & \textbf{1.000} $\pm$ \textbf{0.000} & \textbf{1.000} $\pm$ \textbf{0.000} & 0.900 $\pm$ 0.300 & 0.834 $\pm$ 0.336 & 0.766 $\pm$ 0.324 & 0.900 $\pm$ 0.103 \\
AGTIC - t1s3 & \textbf{1.000} $\pm$ \textbf{0.000} & 0.999 $\pm$ 0.003 & 0.901 $\pm$ 0.286 & 0.870 $\pm$ 0.273 & 0.697 $\pm$ 0.301 & 0.893 $\pm$ 0.124 \\
dCor & \textbf{1.000} $\pm$ \textbf{0.000} & \textbf{1.000} $\pm$ \textbf{0.000} & 0.900 $\pm$ 0.300 & 0.800 $\pm$ 0.400 & 0.600 $\pm$ 0.490 & 0.860 $\pm$ 0.167 \\
HSIC & \textbf{1.000} $\pm$ \textbf{0.000} & \textbf{1.000} $\pm$ \textbf{0.000} & 0.850 $\pm$ 0.357 & 0.950 $\pm$ 0.218 & 0.500 $\pm$ 0.500 & 0.860 $\pm$ 0.210 \\
MI (k=20) & 0.945 $\pm$ 0.217 & 0.941 $\pm$ 0.216 & 0.795 $\pm$ 0.396 & 0.895 $\pm$ 0.299 & 0.692 $\pm$ 0.453 & 0.853 $\pm$ 0.108 \\
rdmCor & \textbf{1.000} $\pm$ \textbf{0.000} & \textbf{1.000} $\pm$ \textbf{0.000} & 0.300 $\pm$ 0.458 & 0.000 $\pm$ 0.000 & 0.200 $\pm$ 0.400 & 0.500 $\pm$ 0.469 \\
R$^2$ & \textbf{1.000} $\pm$ \textbf{0.000} & 0.350 $\pm$ 0.477 & 0.000 $\pm$ 0.000 & 0.000 $\pm$ 0.000 & 0.150 $\pm$ 0.357 & 0.300 $\pm$ 0.417 \\                 
            \hline
		\end{tabular}
	}  
\label{table:1DpowerObs}
\end{table*}

\begin{table*}[tbh]
	\centering
	\caption{Optimal thresholds of AGTIC across noise amplitudes over different relationships}
	\resizebox{1\linewidth}{!}{
		\begin{tabular}{ l | l | l | l | l | l }
			 &  linear & parabolic & sinusoidal & circular & checkerboard \\ \thickhline
            AGTIC - t0s1 & 0.466 & 0.403 & 0.460 & 0.632 & 0.532 \\
            AGTIC - t0s2 & l: 0.375, u: 0.649 & l: 0.323, u: 0.566 & l: 0.349, u: 0.626 & l: 0.332, u: 0.614 & l: 0.366, u: 0.652   \\
            AGTIC - t1s1 & l: 0.128, u: 0.624 & l: 0.092, u: 0.588 & l: 0.148, u: 0.624 & l: 0.120, u: 0.584 & l: 0.132, u: 0.556      \\
            AGTIC - t1s2 & l: 0.400, u: 0.804 & l: 0.372, u: 0.800 & l: 0.420, u: 0.796 & l: 0.360, u: 0.732 & l: 0.356, u: 0.740    \\
            AGTIC - t1s3 & l: 0.356, u: 0.752 & l: 0.316, u: 0.708 & l: 0.296, u: 0.712 & l: 0.336, u: 0.728 & l: 0.344, u: 0.736     \\
            AGTIC - t2s1 & l: 0.040, u: 0.648 & l: 0.024, u: 0.680 & l: 0.100, u: 0.648 & l: 0.024, u: 0.620 & l: 0.128, u: 0.640      \\
            AGTIC - t2s2 & l: 0.356, u: 0.776 & l: 0.368, u: 0.780 & l: 0.344, u: 0.748 & l: 0.336, u: 0.760 & l: 0.368, u: 0.804     \\
            AGTIC - t2s3 & l: 0.284, u: 0.704 & l: 0.252, u: 0.688 & l: 0.244, u: 0.684 & l: 0.264, u: 0.776 & l: 0.328, u: 0.756      \\
            AGTIC - t3s1 & l: 0.000, u: 0.228 & l: 0.024, u: 0.232 & l: 0.000, u: 0.228 & l: 0.000, u: 0.224 & l: 0.000, u: 0.204      \\
            AGTIC - t3s2 & l: 0.356, u: 0.836 & l: 0.364, u: 0.752 & l: 0.356, u: 0.812 & l: 0.368, u: 0.800 & l: 0.392, u: 0.812     \\
            AGTIC - t3s3 & l: 0.312, u: 0.720 & l: 0.288, u: 0.692 & l: 0.260, u: 0.720 & l: 0.328, u: 0.760 & l: 0.332, u: 0.682     \\
            pAGTIC - t1s1 & l: 0.000, u: 0.380 & l: 0.000, u: 0.424 & l: 0.020, u: 0.404 & l: 0.024, u: 0.348 & l: 0.012, u: 0.332       \\
            pAGTIC - t1s2 & l: 0.388, u: 0.756 & l: 0.372, u: 0.788 & l: 0.448, u: 0.816 & l: 0.352, u: 0.760 & l: 0.364, u: 0.748      \\
            pAGTIC - t1s3 & l: 0.216, u: 0.668 & l: 0.280, u: 0.704 & l: 0.368, u: 0.728 & l: 0.232, u: 0.652 & l: 0.324, u: 0.728      \\
            pAGTIC - t2s1 & l: 0.032, u: 0.632 & l: 0.000, u: 0.600 & l: 0.016, u: 0.576 & l: 0.004, u: 0.600 & l: 0.012, u: 0.620  \\
            pAGTIC - t2s2 & l: 0.424, u: 0.820 & l: 0.408, u: 0.807 & l: 0.360, u: 0.796 & l: 0.332, u: 0.788 & l: 0.364, u: 0.840 \\
            pAGTIC - t2s3 & l: 0.280, u: 0.756 & l: 0.252, u: 0.704 & l: 0.304, u: 0.684 & l: 0.296, u: 0.724 & l: 0.276, u: 0.696 \\
            pAGTIC - t3s1 & l: 0.000, u: 0.200 & l: 0.000, u: 0.200 & l: 0.000, u: 0.200 & l: 0.000, u: 0.200 & l: 0.000, u: 0.200 \\
            pAGTIC - t3s2 & l: 0.464, u: 0.864 & l: 0.516, u: 0.888 & l: 0.428, u: 0.872 & l: 0.436, u: 0.816 & l: 0.412, u: 0.816 \\
            pAGTIC - t3s3 & l: 0.344, u: 0.764 & l: 0.356, u: 0.740 & l: 0.352, u: 0.716 & l: 0.256, u: 0.660 & l: 0.280, u: 0.672 \\
            \hline
		\end{tabular}
	}  
\label{table:optThresh}
\end{table*}

\clearpage
\section{Supplementary Proof for Theorem 4.1}
\label{sec:indeptheory}

This section aims to be prove that the distance correlation computed from GT-transformed distances still satisfies the independence criterion. As the fundamental building blocks of AGTIC, distance correlation \cite{szekely2007measuring,szekely2009brownian} generalizes the idea of correlation such that:  
	\begin{itemize}
	\item $\mathcal{R}(X,Y)$ is defined for $X, Y$ in arbitrary dimensions.
    \item $\mathcal{R}(X,Y)=0 \Leftrightarrow X$ and $Y$ are independent.  
	\end{itemize}

Here we wish to extend the proof of these two properties to our adaptive approach, which involves three piecewise functions of linear transformation. Consider a \textit{Cauchy sequence} $\{f_n\}^\infty_{n=1}$ in a normed vector space S, it satisfies the existence of an integer for any $\epsilon$ such that, $\norm{f_n - f_m} < \epsilon$ for all $n < N$, $m > N$. The normed vector space $S$ is said to be \textit{complete} if every \textit{Cauchy sequence} converges to a limit in $S$ as a \textit{Banach space}. Here we define $f(X-X')$ as a simplified the geo-topological transform to be a continuous non-linear \textit{bounded} \textit{functional} onto $L^2[0,1]$:

\begin{equation}
f_n(t) =
  \begin{cases}
0 & \text{if $0 \leq t \leq \frac{1}{2} - \frac{1}{n}$} \\
\frac{1}{2}+ \frac{n}{2}(t-\frac{1}{2}) & \text{if $\frac{1}{2} - \frac{1}{n} \leq t \leq \frac{1}{2} + \frac{1}{n}$} \\
1 & \text{if $\frac{1}{2} + \frac{1}{n} \leq t \leq 1$}
  \end{cases}
\end{equation}

where the upper and lower thresholds defined in Figure \ref{fig:RGTA} are replaced with one parameter $n$ (but the theoretical proof is equivalent in the two thresholds case with a linear transformation). From its graph, we see it ``converges'' to:
% a function:

\begin{equation}
\label{eq:functional}
f_0(t) =
  \begin{cases}
0 & \text{if $0 \leq t \leq \frac{1}{2}$} \\
\frac{1}{2} & \text{if $t = \frac{1}{2}$} \\
1 & \text{if $\frac{1}{2} \leq t \leq 1$}
  \end{cases}
\end{equation}

So we can calculate the expected value for the Euclidean distance (within $L^2[0,1]$) after transformation as:

\begin{equation}
\label{eq:EXGT}
\begin{split}
\EX[f(x)] & = \int_0^1 x f(x)dx \\
& = 0+ \int_{\frac{1}{2} - \frac{1}{n}}^{\frac{1}{2} + \frac{1}{n}} x (\frac{nx}{2}+ \frac{1-\frac{n}{2}}{2})dx + \int_{\frac{1}{2} + \frac{1}{n}}^1 x dx \\
% & = \int_{\frac{1}{2} - \frac{1}{n}}^{\frac{1}{2} + \frac{1}{n}} x \frac{nx}{2}dx + \int_{\frac{1}{2} - \frac{1}{n}}^{\frac{1}{2} + \frac{1}{n}}x\frac{1-\frac{n}{2}}{2}dx + \frac{x^2}{2}|_{\frac{1}{2} + \frac{1}{n}}^1 \\
% & = \int_{\frac{1}{2} - \frac{1}{n}}^{\frac{1}{2} + \frac{1}{n}} x \frac{nx}{2}dx + \int_{\frac{1}{2} - \frac{1}{n}}^{\frac{1}{2} + \frac{1}{n}}x\frac{1-\frac{n}{2}}{2}dx + \frac{x^2}{2}|_{\frac{1}{2} + \frac{1}{n}}^1 \\
&=\frac{n}{3x^3}+\frac{n}{4x}-\frac{1}{2n^2}-\frac{1}{4n}+\frac{1}{4}
\end{split}
\end{equation}

which is a monotone nonlinear transformation. In another word, we wish to apply this \textit{monotone nonlinear operator} in a \textit{Hilbert Space} (the original distance correlation). Here we define ${\mathcal{V}}^{2*}(X,Y)$ as the dCor calculated given a distance matrix after the proposed geo-topological transformation. If $X$ and $Y$ are independent, then $\EX[XY] = \EX[X]\EX[Y]$

\begin{equation}
\label{eq:dCorGT}
\begin{split}
{\mathcal{V}}^{2*}(X,Y) = & \EX[f(\norm{X-X'}_2)f(\norm{f(Y-Y'}_2)] \\
& + \EX [f(\norm{X-X'}_2)] \EX [f(\norm{Y-Y'}_2)] \\
& - \EX [f(\norm{X-X''}_2) f(\norm{Y-Y'}_2) ] \\
& - \EX [f(\norm{X-X'}_2) f(\norm{Y-Y''}_2) ] \\
% = & 2\EX[f(\norm{X-X'}_2) f(\norm{f(Y-Y'}_2)] \\
% & - \EX [f(\norm{X-X''}_2) f(\norm{Y-Y'}_2) ] \\
% & - \EX [f(\norm{X-X'}_2) f(\norm{Y-Y''}_2) ] \\
= & 2\EX[f(\norm{X-X'}_2) f(\norm{f(Y-Y'}_2)] \\
& - 2\EX [f(\norm{X-X'}_2) f(\norm{Y-Y'}_2) ] \\
= & 0 \\
\end{split}
\end{equation}

Thus, the independence criterion still holds regardless of the applied \textit{functional}. Then we are going to look at the threshold searching process to determine whether the independence criterion still holds for $\mathcal{V}^{2*}(X,Y)$. We here denote ${\mathcal{V}^*}^2_{max}(X,Y)$ and define it to be ${\mathcal{V}}^{2*}_{max}(X,Y) = \sup_{n\in\mathbb{R}} {\mathcal{V}}^{2*}(X,Y)$. 

The equation \ref{eq:functional} is Cauchy, but the convergence is not in the $\sup$ norm: 

\begin{equation}
\label{eq:supConvergence1}
\begin{split}
\norm{f_n-f_0} = & \sup_{t\in[0,1]} |f_n(t)-f_0(t)| = 1/2 \\
\end{split}
\end{equation}

which doesn't converge to zero, but instead, it converges to $f_0$ in the $L^2$ norm as follows:

\begin{equation}
\label{eq:supConvergence2}
\begin{split}
\norm{f_n-f_0}_2 = & \sqrt[]{\int_0^1(f_n(t)-f_0(t))^2dt} \\
%  = & \sqrt[]{\int_{\frac{1}{2}-\frac{1}{n}}^{\frac{1}{2}+\frac{1}{n}}(f_n(t)-f_0(t))^2dt} \\
% \leq & \sqrt[]{\int_{\frac{1}{2}-\frac{1}{n}}^{\frac{1}{2}+\frac{1}{n}}(1)^2dt} \\
% % = & \sqrt[]{(\frac{1}{2}+\frac{1}{n})-(\frac{1}{2}-\frac{1}{n})} \\
% = & \sqrt[]{\frac{2}{n}} \\
\leq & \sqrt[]{\int_{\frac{1}{2}-\frac{1}{n}}^{\frac{1}{2}+\frac{1}{n}}(1)^2dt} = \sqrt[]{\frac{2}{n}} \\
\end{split}
\end{equation}

which converges to zero, showing that the space $L^2[a,b]$ as the completion of all monotone continuous functions (including $GT(\cdot)$) on $[a,b]$ in the $L^2$ norm. Therefore, ${\mathcal{V}}^{2*}(X,Y)$ still holds the independence criterion.

\end{document}